\newif\ifdraft
\newif\ifuglify
\renewcommand{\eqref}[1]{Equation~\ref{eq:#1}}
\newcommand{\secref}[1]{Section~\ref{sec:#1}}
\newcommand{\appref}[1]{Appendix~\ref{sec:#1}}
\newcommand{\figref}[1]{Figure~\ref{fig:#1}}
\newcommand{\tabref}[1]{Table~\ref{tab:#1}}
\newcommand{\algref}[1]{Algorithm~\ref{alg:#1}}
\newcommand{\thmref}[1]{Theorem~\ref{thm:#1}}
\newcommand{\lemref}[1]{Lemma~\ref{lem:#1}}
\newcommand{\corref}[1]{Corollary~\ref{cor:#1}}
\newcommand{\eqrefs}[2]{Equations~\ref{eq:#1} and~\ref{eq:#2}}
\newcommand{\algrefs}[2]{Algorithms~\ref{alg:#1} and~\ref{alg:#2}}
\newcommand{\thmrefs}[2]{Theorems~\ref{thm:#1} and~\ref{thm:#2}}
\newcommand{\lemrefs}[2]{Lemmas~\ref{lem:#1} and~\ref{lem:#2}}
\newcommand{\correfs}[2]{Corollaries~\ref{cor:#1} and~\ref{cor:#2}}
  \newcommand{\TODO}[1]{\textcolor{red}{[TODO: #1]}\xspace}
  \newcommand{\NOTE}[1]{\textcolor{blue}{[NOTE: #1]}\xspace}
  \newcommand{\TODO}[1]{\xspace}
  \newcommand{\NOTE}[1]{\xspace}
  \newcommand{\eqcomma}{,}
  \newcommand{\eqperiod}{.}
  \newcommand{\eqcomma}{}
  \newcommand{\eqperiod}{}
\newcommand{\youtube}{YouTube\xspace}
\def\blfootnote{\gdef\@thefnmark{}\@footnotetext}
\newtheorem*{rep@theorem}{\rep@title}
\newcommand{\newreptheorem}[2]{\newenvironment{rep#1}[1]{\def\rep@title{#2 \ref{##1}}\begin{rep@theorem}}{\end{rep@theorem}}}
\newtheorem{theorem}{Theorem}
\newtheorem{lemma}{Lemma}
\newtheorem{corollary}{Corollary}
\newif\ifreptheorem
\newif\ifshowproofs
\newcommand{\switchreptheorem}[2]{
  \ifreptheorem
    #1\xspace
  \else
    #2\xspace
  \fi
}
\newcommand{\switchshowproofs}[2]{
  \ifshowproofs
    #1\xspace
  \else
    #2\xspace
  \fi
}
\BODY\end{reptheorem}
\BODY\end{theorem}
\BODY\end{replemma}
\BODY\end{lemma}
\BODY\end{repcorollary}
\BODY\end{corollary}
\BODY\end{proof}\label{sec:prf:#1}
\renewcommand{\paragraph}[1]{\textbf{#1}\;\xspace}
\newcommand{\plotwidth}{0.48\textwidth}
\newcommand{\cplusplus}{C++\xspace}
\newcommand{\iid}{\emph{i.i.d.}\xspace}
\newcommand{\wrt}{w.r.t.\xspace}
\newcommand{\dgf}{d.g.f.\xspace}
\newcommand{\ie}{i.e.\xspace}
\newcommand{\eg}{e.g.\xspace}
\newcommand{\egcite}[1]{\citep[\eg][]{#1}}
\newcommand{\probability}{\mathrm{Pr}}
\newcommand{\expectation}{\mathbb{E}}
\newcommand{\filtration}{\mathcal{F}}
\newcommand{\bregman}{B}
\newcommand{\N}{\mathbb{N}}
\newcommand{\R}{\mathbb{R}}
\newcommand{\subjectto}{\mathrm{s.t.\;}}
\newcommand{\where}{\mathrm{where\;}}
\newcommand{\inner}[2]{\left\langle {#1}, {#2} \right\rangle}
\newcommand{\norm}[1]{\left\lVert {#1} \right\rVert}
\newcommand{\abs}[1]{\left\lvert {#1} \right\rvert}
\newcommand{\tallmid}{\mathrel{} \middle \vert \mathrel{}}
\DeclareMathOperator{\argmin}{argmin}
\newcommand{\codecomment}[1]{\;\;\;\;\;\;\;\;//~\textit{#1}}
\newcounter{lineno}
\newenvironment{pseudocode}{\setcounter{lineno}{0}\begin{tabbing}\textbf{mm}\=mm\=mm\=mm\=mm\=\kill}{\end{tabbing}}
\newcommand{\codename}{\>}
\newcommand{\codecommentline}[1]{\>\stepcounter{lineno}\textbf{\arabic{lineno}}\'\>//~\textit{#1}}
\newcommand{\codeline}{\>\stepcounter{lineno}\textbf{\arabic{lineno}}\'\>}
\newcommand{\fullname}{FullTouch\xspace}
\newcommand{\lightname}{LightTouch\xspace}
\newcommand{\midname}{MidTouch\xspace}
\newcommand{\fullalg}{\mbox{\hyperref[alg:full]{\fullname}}\xspace}
\newcommand{\midalg}{\mbox{\hyperref[alg:mid]{\midname}}\xspace}
\newcommand{\lightalg}{\mbox{\hyperref[alg:light]{\lightname}}\xspace}
\newcommand{\grad}{\nabla}
\newcommand{\subgrad}{\check{\grad}}
\newcommand{\supgrad}{\hat{\grad}}
\newcommand{\stochasticsubgrad}{\check{\Delta}}
\newcommand{\stochasticsupgrad}{\hat{\Delta}}
\newcommand{\memgrad}{\mu}
\newcommand{\objective}{h}
\newcommand{\relaxedobjective}{\tilde{h}}
\newcommand{\flipschitz}{L_f}
\newcommand{\mdgrad}{\stochasticsubgrad}
\newcommand{\mdnorm}[1]{\norm{#1}}
\newcommand{\mddualnorm}[1]{\norm{#1}_*}
\newcommand{\mdnormradiusstar}{R_{*}}
\newcommand{\mdgradbound}{G}
\newcommand{\wgrad}{\stochasticsubgrad_w}
\newcommand{\alphagrad}{\stochasticsupgrad_{\alpha}}
\newcommand{\wnorm}[1]{\norm{#1}_w}
\newcommand{\wdualnorm}[1]{\norm{#1}_{w*}}
\newcommand{\alphanorm}[1]{\norm{#1}_{\alpha}}
\newcommand{\alphadualnorm}[1]{\norm{#1}_{\alpha*}}
\newcommand{\wnormradiusstar}{R_{w *}}
\newcommand{\alphanormradiusstar}{R_{\alpha *}}
\newcommand{\fgrad}{\stochasticsubgrad}
\newcommand{\ggrad}{\stochasticsubgrad_g}
\newcommand{\pgrad}{\stochasticsupgrad_p}
\newcommand{\pnorm}[1]{\norm{#1}_p}
\newcommand{\pdualnorm}[1]{\norm{#1}_{p*}}
\newcommand{\wdiameter}{D_w}
\newcommand{\wgradbound}{G_w}
\newcommand{\fgradbound}{G_f}
\newcommand{\ggradbound}{G_g}
\newcommand{\pnormradiusstar}{R_{p *}}
\newcommand{\glipschitz}{L_g}
\newcommand{\fullbound}{U_F}
\newcommand{\lightbound}{U_L}
\title{A Light Touch for Heavily Constrained SGD}
\author{%
Andrew Cotter, Maya Gupta, Jan Pfeifer\\
Google Inc.\\
1600 Amphitheatre Parkway\\
Mountain View, CA 94043 \\
\texttt{\{acotter,mayagupta,janpf\}@google.com}
}
\date{}
\begin{document}

\blfootnote{This version was also presented at the 29th Conference on Learning Theory (COLT 2016).}

\label{document:begin}

\maketitle

\togglefalse{istwocolumn}

\begin{abstract}%
Minimizing empirical risk subject to a set of constraints can be a useful
strategy for learning restricted classes of functions, such as monotonic
functions, submodular functions, classifiers that guarantee a certain class
label for some subset of examples, etc. However, these restrictions may result
in a very large number of constraints.
Projected stochastic gradient descent (SGD) is often the default choice for
large-scale optimization in machine learning, but requires a projection after
each update. For heavily-constrained objectives, we propose an efficient
extension of SGD that stays close to the feasible region while only applying
constraints probabilistically at each iteration.
Theoretical analysis shows a compelling trade-off between per-iteration work
and the number of iterations needed on problems with a large number of
constraints.
%
%onto which projecting is expensive.
%
%In experiments on a large-scale real-world video ranking problem with tens of
%thousands of constraints, we find that our algorithms succeed where projected
%SGD fails.

\end{abstract}

\section{Introduction}\label{sec:introduction}

Many machine learning problems can benefit from the addition of constraints.
For example, one can learn monotonic functions by adding appropriate
constraints to ensure or encourage positive derivatives
everywhere~\egcite{ArcherWa93,Sill98,SpougeWaWi03,DanielsVe10,GuptaCoPfVoCaMaMoEs16}.
Submodular functions can often be learned from noisy examples by imposing
constraints to ensure submodularity holds. Another example occurs when one
wishes to guarantee that a classifier will correctly label certain
``canonical'' examples, which can be enforced by constraining the function
values on those examples. See \citet{QuHu11} for some other examples of
constraints useful in machine learning.

However, these practical uses of constraints in machine learning are
impractical in that the number of constraints may be very large, and scale
poorly with the number of features $d$ or number of training samples $n$. In
this paper we propose a new strategy for tackling such heavily-constrained
problems, with guarantees and compelling convergence rates for large-scale
convex problems.

A standard approach for large-scale empirical risk minimization is projected
stochastic gradient descent~\egcite{Zinkevich03,NemirovskiJuLaSh09}. Each SGD
iteration is computationally cheap, and the algorithm converges quickly to a
solution good enough for machine learning needs. However, this algorithm
requires a projection onto the feasible region after each stochastic gradient
step, which can be prohibitively slow if there are many non-trivial
constraints, and is not easy to parallelize. Recently, Frank-Wolfe-style
algorithms~\egcite{HazanKa12,Jaggi13} have been proposed that remove the
projection, but require a constrained linear optimization at each iteration.

We propose a new strategy for large-scale constrained optimization that, like
\citet{MahdaviYaJiYi12}, moves the constraints into the objective and finds an
approximate solution of the resulting unconstrained problem, projecting the
(potentially-infeasible) result onto the constraints only once, at the end.
Their work focused on handling only one constraint, but as they noted, multiple
constraints $g_1(x) \le 0, g_2(x) \le 0, \dots, g_m(x) \le 0$ can be reduced to
one constraint by replacing the $m$ constraints with their maximum: $\max_i
g_i(x) \le 0$. However, this still requires that all $m$ constraints be checked
at every iteration. In this paper, we focus on the computational complexity as
a function of the number of constraints $m$, and show that it is possible to
achieve good convergence rates without checking constraints so often.

The key challenge to handling a large number of constraints is determining
which constraints are active at the optimum of the constrained problem, which
is likely to be only a small fraction of the total constraint set. For example,
for linear inequality constraints on a $d$-dimensional problem, no more than
$d$ of the constraints will be active at the optimum, and furthermore, once the
active constraints are known, the problem reduces to solving the unconstrained
problem that results from projecting onto them, which is typically vastly
easier.

To identify and focus on the important constraints, we propose learning a
probability distribution over the $m$ constraints that concentrates on the
most-violated, and \emph{sampling} constraints from this evolving distribution
at each iteration. We call this approach \lightalg because at each iteration
only a few constraints are checked, and the solution is only nudged toward the
feasible set. \lightalg is suitable for convex problems, but we also propose a
variant, \midalg, that enjoys a superior convergence rate on strongly convex
problems. These two algorithms are introduced and analyzed in
\secref{algorithm}.

Our proposed strategy removes the per-iteration $m$-dependence on the number of
constraint evaluations. \lightalg and \midalg do need more iterations to
converge, but each iteration is faster, resulting in a net performance
improvement. To be precise, we show that the total number of constraint checks
required to achieve $\epsilon$-suboptimality when optimizing a non-strongly
convex objective decreases from $O(m/\epsilon^2)$ to $\tilde{O}((\ln
m)/\epsilon^2 + m(\ln
m)^{\nicefrac{3}{2}}/\epsilon^{\nicefrac{3}{2}})$---notice that the
$m$-dependence of the dominant (in $\epsilon$) term has decreased from $m$ to
$\ln m$. For a $\lambda$-strongly convex objective, the dominant (again in
$\epsilon$) term in our bound on the number of constraint checks decreases from
$O(m/\lambda^2 \epsilon)$ to $\tilde{O}((\ln m)/\lambda^2\epsilon)$, but like
the non-strongly convex result this bound contains lower-order terms with worse
$m$-dependencies. A more careful comparison of the performance of our
algorithms can be found in \secref{comparison}.

While they check fewer than $m$ constraints per iteration, these algorithms do
need to pay a $O(m)$ per-iteration \emph{arithmetic} cost. When each constraint
is expensive to check, this cost can be neglected. However, when the constraints
are simple to check (\eg box constraints, or the lattice monotonicity
constraints considered in our experiments), it can be partially addressed by
transforming the problem into an equivalent one with fewer more costly
constraints. This, as well as other practical considerations, are discussed in
\secref{practical}.

Experiments on a large-scale real-world heavily-constrained ranking problem
show that our proposed approach works well in practice.
This problem was too large for a projected SGD implementation using an
off-the-shelf quadratic programming solver to perform projections, but was
amenable to an approach based on a fast approximate projection routine tailored
to this particular constraint set. Measured in terms of runtime, however,
\lightname was still significantly faster.
Each constraint in this problem is trivial, requiring only a single comparison
operation to check, so the aforementioned $O(m)$ arithmetic cost of \lightname
is a significant issue. Despite this, \lightname was roughly as fast as the
\citet{MahdaviYaJiYi12}-like algorithm \fullname. In light of other experiments
showing that \lightname checks dramatically fewer constraints in total than
\fullname, we believe that \lightname is well-suited to machine learning
problems with many nontrivial constraints.

\section{Heavily Constrained SGD}\label{sec:constraints}

\begin{table*}[t]

\caption{Key notation. \TODO{$\subgrad$ and $\supgrad$?}}

\label{tab:notation}

\begin{center}

\begin{tabular}{lll}
  \hline
  \textbf{Symbol} & \textbf{Description} & \textbf{Definition} \\
  \hline
  %
  %$\norm{\cdot}_1$, $\norm{\cdot}_2$, $\norm{\cdot}_{\infty}$ & $p$-norms & \\
  $\mathcal{W}$ & Bounded, closed and convex domain & $\mathcal{W} \subseteq \R^d$ \\
  $\Delta^m$ & $m$-dimensional simplex & $\Delta^m = \{ p \in \R^m \mid p_i \ge 0 \wedge \sum_{i=1}^m p_i = 1 \}$ \\
  $d$ & Dimension of $\mathcal{W}$ & \\
  $m$ & Number of constraints & \\
  $f$ & Unconstrained objective function & $f : \mathcal{W} \rightarrow \R$ \\
  $g_i$ & Convex constraint functions & $g_i : \mathcal{W} \rightarrow \R$ \\
  $g$ & Combined constraint function & $g(w) = \max_i g_i(w)$ \\
  $\Pi_w$ & Projection onto $\mathcal{W}$ & $\Pi_w(w) = \argmin_{\{w' \in \mathcal{W}\}} \norm{w - w'}_2$ \\
  $\Pi_p$ & Projection onto $\Delta^m$ & $\Pi_p(p) = p / \norm{p}_1$ \\
  $\Pi_g$ & Projection onto constraints & $\Pi_g(w) = \argmin_{\{w' \in \mathcal{W} : g(w') \le 0\}} \norm{w - w'}_2$ \\
  $\rho$ & Boundary gradient magnitude & If $g(w) = 0$, then $\rho \le \norm{\subgrad}_2$ for all $\subgrad \in \partial g(w)$ \\
  $\gamma$ & Constraint scaling factor & $\gamma > \flipschitz / \rho$ \\
  $\objective$ & Objective function & $\objective(w) = f(w) + \gamma \max(0, g(w))$ \\
  $\relaxedobjective$ & Relaxed objective function & $\relaxedobjective(w, p) = f(w) + \gamma \sum_{i=1}^m p_i \max(0, g_i(w))$ \\
  $\flipschitz$ & Lipschitz constant of $f$ & $\flipschitz \norm{w - w'}_2 \ge \abs{f(w) - f(w')}$ \\
  $\glipschitz$ & Lipschitz constant of the $g_i$s & $\glipschitz \norm{w - w'}_2 \ge \abs{g_i(w) - g_i(w')}$ \\
  $\wdiameter$ & Bound ($\ge 1$) on diameter of $\mathcal{W}$ & $\wdiameter \ge \sup_{w,w'\in\mathcal{W}} \max\{1, \norm{w - w'}_2 \}$ \\
  $\fgradbound$ & Bound on stochastic subgradients of $f$ & $\fgradbound \ge \norm{\fgrad^{(t)}}_2$ \\
  $\ggradbound$ & Bound on stochastic subgradients of $g_i$s & $\ggradbound \ge \norm{\subgrad \max(0, g_i(w))}_2$ \\
  $\fgrad$ & Stochastic subgradient of $f$ & \\
  $\wgrad$ & Stochastic subgradient of $\relaxedobjective$ \wrt $w$ & \\
  $\pgrad$ & Stochastic supergradient of $\relaxedobjective$ \wrt $p$ & \\
  $\memgrad$ & \multicolumn{2}{l}{Remembered gradient coordinates~\citep{JohnsonZh13}} \\
  $k$ & Minibatch size in \lightalg's $p$-update & \\
  $\bar{w}$ & Average iterate & $\bar{w} = (\sum_{t=1}^T w^{(t)}) / T$ \\
  \hline
\end{tabular}

\end{center}

\end{table*}

Consider the constrained optimization problem:
\begin{align}
  \label{eq:constrained-problem} \min_{w\in\mathcal{W}} & f\left(w\right) \\
  \notag \subjectto & g_i\left(w\right) \le 0 \;\; \forall i \in
  \left\{1,\dots,m\right\} \eqcomma
\end{align}
where $\mathcal{W} \subseteq \R^d$ is bounded, closed and convex, and $f :
\mathcal{W} \rightarrow \R$ and all $g_i : \mathcal{W} \rightarrow \R$ are
convex (our notation is summarized in \tabref{notation}). We assume that
$\mathcal{W}$ is a simple object, \eg an $\ell^2$ ball, onto which it is
inexpensive to project, and that the ``trickier'' aspects of the domain are
specified via the constraints $g_i(w) \le 0$.
Notice that we consider constraints written in terms of arbitrary convex
functions, and are not restricted to \eg only linear or quadratic constraints.

%When $f$ is an average over $n$ samples, as in empirical risk minimization,
%then for large $n$ it is standard to use a stochastic algorithm. However,
%these generally require special handling of the constraints, either through a
%projection step, as in projected SGD, or solving constrained subproblems, as in
%Frank-Wolfe algorithms.

\subsection{\fullname: A Relaxation with a Feasible Minimizer}

We build on the approach of \citet{MahdaviYaJiYi12} to relax
\eqref{constrained-problem}. Defining $g(w) = \max_i g_i(w)$ and introducing a
Lagrange multiplier $\alpha$ yields the equivalent optimization problem:
\begin{equation}
  \label{eq:lagrangian-objective} \max_{\alpha \ge 0} \min_{w\in\mathcal{W}}
  f\left(w\right) + \alpha g\left(w\right) \eqperiod
\end{equation}
Directly optimizing over $w$ and $\alpha$ is problematic because the optimal
value for $\alpha$ is infinite for any $w$ that violates a constraint. Instead,
we follow \citet[Section 4.2]{MahdaviYaJiYi12} in relaxing the problem by
adding an upper bound of $\gamma$ on $\alpha$, and using the fact that
$\max_{0\le\alpha\le\gamma} \alpha g(w) = \gamma \max(0, g(w))$.

In the following lemma, we show that, with the proper choice of $\gamma$, any
minimizer of this relaxed objective is a feasible solution of
\eqref{constrained-problem}, indicating that using stochastic gradient descent
(SGD) to minimize the relaxation ($h(w)$ in the lemma below) will be effective.

\begin{algorithm*}[t]

\begin{pseudocode}
\codename \textbf{Hyperparameters:} $T$, $\eta$ \\
\codeline Initialize $w^{(1)} \in \mathcal{W}$ arbitrarily\\
\codeline For $t = 1$ to $T$:\\
\codeline \>Sample $\fgrad^{(t)}$ \codecomment{stochastic subgradient of $f(w^{(t)})$}\\
\codeline \>Let $\wgrad^{(t)} = \fgrad^{(t)} + \gamma \subgrad \max\{0, g(w^{(t)})\}$\\
\codeline \>Update $w^{(t+1)} = \Pi_w( w^{(t)} - \eta \wgrad^{(t)} )$ \codecomment{$\Pi_w$ projects its argument onto $\mathcal{W}$ \wrt $\norm{\cdot}_2$} \\
%\codeline \>Update $\tilde{w}^{(t+1)} = w^{(t)} - \eta \wgrad^{(t)}$\\
%\codeline \>Project $w^{(t+1)} = \argmin_{w \in \mathcal{W}} \norm{w - \tilde{w}}_2$\\
\codeline Average $\bar{w} = (\sum_{t=1}^T w^{(t)}) / T$\\
\codeline Return $\Pi_g(\bar{w})$ \codecomment{optional if small constraint violations are acceptable}
\end{pseudocode}

\caption{
  \textbf{(\fullname)} Minimizes $f$ on $\mathcal{W}$ subject to the single
  constraint $g(w) \le 0$. For problems with $m$ constraints $g_i(w) \le 0$,
  let $g(w) = \max_i g_i(w)$, in which case differentiating $\max\{0,g(w)\}$
  (line $4$) requires evaluating all $m$ constraints. This algorithm---our
  starting point---is similar to those proposed by \citet{MahdaviYaJiYi12},
  and like their algorithms only contains a single projection, at the end,
  projecting the potentially-infeasible result vector $\bar{w}$.
}

\label{alg:full}

\end{algorithm*}

\medskip
\begin{lem}{projection}
  \switchreptheorem{In the setting of \secref{constraints}, suppose}{Suppose}
  that $f$ is $\flipschitz$-Lipschitz, \ie $\abs{f(w) - f(w')} \le \flipschitz
  \norm{w - w'}_2$ for all $w,w' \in \mathcal{W}$, and that there is a constant
  $\rho > 0$ such that if $g(w) = 0$ then $\norm{\subgrad}_2 \ge \rho$ for all
  $\subgrad \in \partial g(w)$, where $\partial g(w)$ is the subdifferential of
  $g(w)$.

  For a parameter $\gamma > 0$, define:
  %the following relaxed objective function which imposes a $\gamma$-scaled
  %penalty on constraint violations:
  %
  \begin{equation*}
    \objective\left( w \right) = f\left(w\right) + \gamma \max\left\{ 0,
    g\left(w\right) \right\} \eqperiod
  \end{equation*}
  If $\gamma > \flipschitz / \rho$, then for any infeasible $w$ (\ie for which
  $g(w) > 0$):
  \begin{equation*}
    \objective\left(w\right) > \objective\left(\Pi_{g}\left(w\right)\right) =
    f\left(\Pi_{g}\left(w\right)\right)
    \;\;\;\;\;\; \mbox{and} \;\;\;\;\;\;
    \norm{w - \Pi_{g}\left(w\right)}_2 \le \frac{h\left(w\right) -
    h\left(\Pi_{g}\left(w\right)\right)}{\gamma \rho - \flipschitz} \eqcomma
  \end{equation*}
  where $\Pi_{g}\left(w\right)$ is the projection of $w$ onto the set $\{w \in
  \mathcal{W} : g(w) \le 0\}$ \wrt the Euclidean norm.
  %Consequently, all minimizers of $\objective$ are feasible.
  %
\end{lem}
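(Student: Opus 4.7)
The plan is to prove the distance bound first, from which the strict inequality $h(w) > h(\Pi_g(w))$ follows automatically once $\gamma \rho - L_f > 0$. As a preliminary observation, since $w$ is infeasible and $w^\star := \Pi_g(w)$ is its Euclidean projection onto the feasible region, $w^\star$ must lie on the constraint boundary, i.e.\ $g(w^\star) = 0$; hence $\max\{0,g(w^\star)\} = 0$ and $h(w^\star) = f(w^\star)$, taking care of the equality in the statement.

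Next I would decompose
\[
h(w) - h(w^\star) \;=\; \bigl(f(w) - f(w^\star)\bigr) + \gamma\, g(w),
\]
using $g(w) > 0$ on the left term, and bound the first piece by $-L_f \|w - w^\star\|_2$ via Lipschitz continuity of $f$. The whole result will then follow from the key geometric inequality
\[
g(w) \;\ge\; \rho\, \|w - w^\star\|_2 \eqperiod
\]

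To establish this key inequality I would appeal to the KKT conditions for the projection problem $\min\{\|w - w'\|_2^2 : w' \in \mathcal{W},\, g(w') \le 0\}$. At the minimizer $w^\star$ there exist a multiplier $\lambda \ge 0$, a subgradient $\xi \in \partial g(w^\star)$, and (possibly) a normal-cone element $\eta$ for $\mathcal{W}$ at $w^\star$ with $w - w^\star = \lambda \xi + \eta$. Taking the inner product with $w - w^\star$ and using $\langle \eta, w - w^\star\rangle \ge 0$ (since $w \in \mathcal{W}$ and $\eta$ is in the normal cone of $\mathcal{W}$ at $w^\star$) gives $\langle \xi, w - w^\star\rangle \ge \lambda \|\xi\|_2^2 \ge 0$; moreover $\lambda > 0$ because $w \ne w^\star$, and combined with $\|w - w^\star - \eta\|_2 = \lambda \|\xi\|_2$ one can conclude $\langle \xi, w - w^\star\rangle \ge \|\xi\|_2 \|w - w^\star\|_2$. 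Then the subgradient inequality for the convex function $g$ gives $g(w) \ge g(w^\star) + \langle \xi, w - w^\star\rangle \ge \|\xi\|_2 \|w - w^\star\|_2 \ge \rho \|w - w^\star\|_2$, where the last step uses the assumed lower bound $\|\xi\|_2 \ge \rho$ at boundary points.

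Plugging the two pieces together yields $h(w) - h(w^\star) \ge (\gamma\rho - L_f) \|w - w^\star\|_2$, which both gives the distance bound and, since $\gamma\rho - L_f > 0$ and $w \ne w^\star$, the strict inequality $h(w) > h(w^\star) = f(w^\star)$. The main obstacle I anticipate is the handling of the extra normal-cone term coming from the domain constraint $\mathcal{W}$: one needs to argue that it does not interfere with extracting a subgradient of $g$ aligned with $w - w^\star$. The clean resolution is to note that $w \in \mathcal{W}$ makes the normal-cone contribution nonnegative against $w - w^\star$, so the $\langle \xi, w - w^\star\rangle$ term inherits the full magnitude we need; if that argument turns out to be too delicate in the boundary case, an alternative is to apply the subgradient inequality along the segment from $w^\star$ toward $w$ and use the one-sided directional derivative, which yields the same inequality without explicitly invoking KKT.
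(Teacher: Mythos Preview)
Your overall strategy—establish $g(w)\ge\rho\,\|w-\Pi_g(w)\|_2$ from the optimality conditions of the projection, then combine with the Lipschitz bound on $f$—is exactly the paper's argument. The paper's proof is shorter because it writes the first-order condition simply as $0\in\partial\|w-\Pi_g(w)\|_2^2+\nu\,\partial g(\Pi_g(w))$, \emph{omitting} any normal-cone term for $\mathcal{W}$; this makes $w-\Pi_g(w)$ a nonnegative scalar multiple of some $\xi\in\partial g(\Pi_g(w))$, so $\langle\xi,w-\Pi_g(w)\rangle=\|\xi\|_2\,\|w-\Pi_g(w)\|_2$ holds with equality and the key inequality follows at once.

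Your attempt to retain the normal-cone term is more careful in spirit but contains two errors. First, the sign is reversed: for $\eta\in N_{\mathcal{W}}(w^\star)$ and $w\in\mathcal{W}$ one has $\langle\eta,w-w^\star\rangle\le 0$, not $\ge 0$. Second, from $w-w^\star=\lambda\xi+\eta$ you cannot conclude $\langle\xi,w-w^\star\rangle\ge\|\xi\|_2\,\|w-w^\star\|_2$ when $\eta\ne 0$: that is the equality case of Cauchy--Schwarz and would force $\xi$ to be parallel to $w-w^\star$, which fails precisely when the $\mathcal{W}$-constraint is active at $w^\star$. Concretely, with $\mathcal{W}=[0,1]^2$, $g(w)=w_1+w_2-\tfrac12$, and $w=(1,0)$, one gets $w^\star=(\tfrac12,0)$, $\xi=(1,1)$, and $\langle\xi,w-w^\star\rangle=\tfrac12<\tfrac{1}{\sqrt{2}}=\|\xi\|_2\,\|w-w^\star\|_2$; indeed $g(w)=\tfrac12<\rho\,\|w-w^\star\|_2=\tfrac{1}{\sqrt{2}}$, so the key inequality itself fails there. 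Your directional-derivative fallback meets the same obstruction. The cleanest route is to follow the paper and drop the $\mathcal{W}$ term, which amounts to assuming $\Pi_g(w)$ lies in the interior of $\mathcal{W}$.
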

\begin{prf}{projection}
  Let $w \in \mathcal{W}$ be an arbitrary infeasible point. Because $f$ is
  $\flipschitz$-Lipschitz:
  \begin{equation}
    \label{eq:projection-objective} f\left(w\right) \ge
    f\left(\Pi_{g}\left(w\right)\right) - \flipschitz\norm{w -
    \Pi_{g}\left(w\right)}_2 \eqperiod
  \end{equation}
  Since $\Pi_{g}(w)$ is the projection of $w$ onto the constraints \wrt the
  Euclidean norm, we must have by the first order optimality conditions that
  there exists a $\nu \ge 0$ such that:
  \begin{equation*}
    0 \in \partial \norm{w - \Pi_{g}\left(w\right)}_2^2 + \nu \partial g\left(
    \Pi_{g}\left(w\right) \right) \eqperiod
  \end{equation*}
  This implies that $w - \Pi_{g}(w)$ is a scalar multiple of some $\subgrad \in
  \partial g(\Pi_{g}(w))$. Because $g$ is convex and $\Pi_{g}\left(w\right)$ is
  on the boundary, $g(w) \ge g(\Pi_{g}(w)) + \inner{\subgrad}{w - \Pi_{g}(w)} =
  \inner{\subgrad}{w - \Pi_{g}(w)}$, so:
  \begin{equation}
    \label{eq:projection-constraints} g(w) \ge \rho \norm{w - \Pi_{g}(w)}_2 \eqperiod
  \end{equation}
  Combining the definition of $\objective$ with
  \eqrefs{projection-objective}{projection-constraints} yields:
  \begin{equation*}
    \objective\left(w\right) \ge f\left(\Pi_{g}\left(w\right)\right) +
    \left(\gamma \rho - \flipschitz\right) \norm{w - \Pi_{g}(w)}_2 \eqperiod
  \end{equation*}
  Both claims follow immediately if $\gamma \rho > \flipschitz$.
\end{prf}

\medskip

The strategy of applying SGD to $h(w)$, detailed in \algref{full}, which we
call \fullname, has the same ``flavor'' as the algorithms proposed by
\citet{MahdaviYaJiYi12}, and we use it as a baseline comparison point for our
other algorithms.

Application of a standard SGD bound to \fullalg shows that it converges at a
rate with no explicit dependence on the number of constraints $m$, measured in
terms of the number of iterations required to achieve some desired
suboptimality (see \appref{full-light:full}), although the $\gamma$ parameter
can introduce an \emph{implicit} $d$ or $m$-dependence, depending on the
constraints (discussed in \secref{constraints:gamma}). The main drawback of
\fullalg is that each iteration is expensive, requiring the evaluation of all
$m$ constraints, since differentiation of $g$ requires first identifying the
most-violated. This is the key issue we tackle with the \lightalg algorithm
proposed in \secref{algorithm}.

%Further, the lemma's bound on $\norm{w - \Pi_g(w)}_2$ shows that if we are
%willing to tolerate small constraint violations, then an appropriate choice of
%$\gamma$ will enable us to avoid projecting the final result entirely.
%

\subsection{Constraint-Dependence of $\gamma$}\label{sec:constraints:gamma}

The conditions on \lemref{projection} were stated in terms of $g$, instead of
the individual $g_i$s, because it is difficult to provide suitable conditions
on the ``component'' constraints without accounting for their interactions.

For a point $w$ where two or more constraints intersect, the subdifferential of
$g(w)$ consists of all convex combinations of subgradients of the intersecting
constraints, with the consequence that even if each of the subgradients of the
$g_i(w)$s has norm at least $\rho'$, subgradients of $g(w)$ will generally have
norms smaller than $\rho'$. Exactly how much smaller depends on the particular
constraints under consideration. We illustrate this phenomenon with the
following examples, but note that, in practice, $\gamma$ should be chosen
experimentally for any particular problem, so the question of the $d$ and
$m$-dependence of $\gamma$ is mostly of theoretical interest.

\paragraph{Box Constraints} Consider the $m = 2d$ box constraints $g_i(w) =
-w_i - 1$ and $g_{i+d}(w) = w_i - 1$, all of which have gradients of norm $1$.
At most $d$ constraints can intersect (at a corner of the $[-1,1]^d$ box), all
of which are mutually orthogonal, so the norm of any convex combination of
their gradients is lower bounded by that of their average, $\rho = 1/\sqrt{d}$.
Hence, one should choose $\gamma > \sqrt{d} \, \flipschitz$.

As in the above example, $\gamma \propto \sqrt{\min(m,d)}$ will suffice when
the subgradients of intersecting constraints are at least orthogonal, and
$\gamma$ can be smaller if they always have positive inner products. However,
if subgradients of intersecting constraints tend to point in opposing
directions, then $\gamma$ may need to be much larger, as in our next example:

\paragraph{Ordering Constraints} Suppose the $m = d-1$ constraints order the
components of $w$ as $w_1 \le w_2 \le \cdots \le w_d$, for which $g_i(w) = (w_i
- w_{i+1}) / \sqrt{2}$, gradients of which again have norm $1$. All of these
constraints may be active simultaneously, in which case there is widespread
cancellation in the average gradient $(e_1 - e_{d}) / (m \sqrt{2})$, where
$e_i$ is the $i$th standard unit basis vector. The norm of this average
gradient is $\rho = 1 / m$, so we should choose $\gamma > (d-1) \flipschitz$.

In light of this example, one begins to wonder if a suitable $\gamma$ will
necessarily \emph{exist}---fortunately, the convexity of $g$ enables us to
prove a trivial bound as long as $g(v)$ is strictly negative for some
$v\in\mathcal{W}$:

\medskip
\begin{lem}{gamma}
  \switchreptheorem{In the setting of \secref{constraints}, suppose}{Suppose}
  that there exists a $v \in \mathcal{W}$ for which $g(v) < 0$, and let
  $\wdiameter \ge \sup_{w,w'\in\mathcal{W}} \norm{w - w'}_2$ bound the diameter
  of $\mathcal{W}$. Then $\rho = -g(v) / \wdiameter$ satisfies the conditions
  of \lemref{projection}.
\end{lem}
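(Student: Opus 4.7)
The plan is to verify directly that $\rho = -g(v)/D_w$ is a valid lower bound on $\|\check{\nabla}\|_2$ for every subgradient $\check{\nabla} \in \partial g(w)$ at any boundary point $w$ where $g(w) = 0$. Let me take such a $w$ and such a subgradient.

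First, I would invoke convexity of $g$, which gives the subgradient inequality
$$g(v) \;\ge\; g(w) + \inner{\check{\nabla}}{v - w} \;=\; \inner{\check{\nabla}}{v - w},$$
using $g(w) = 0$. Rearranging, $\inner{\check{\nabla}}{w - v} \ge -g(v) > 0$. The strict negativity of $g(v)$ is what lets us lower-bound this inner product by a positive quantity.

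Next, I would apply Cauchy--Schwarz in the opposite direction to turn an inner-product lower bound into a norm lower bound:
$$\|\check{\nabla}\|_2 \, \|w - v\|_2 \;\ge\; \inner{\check{\nabla}}{w - v} \;\ge\; -g(v).$$
Since $v, w \in \mathcal{W}$, the diameter bound gives $\|w - v\|_2 \le D_w$, so
$$\|\check{\nabla}\|_2 \;\ge\; \frac{-g(v)}{\|w - v\|_2} \;\ge\; \frac{-g(v)}{D_w} \;=\; \rho,$$
which is exactly the condition required by Lemma~\ref{lem:projection}. Note that $\|w - v\|_2 > 0$ is guaranteed because $g(w) = 0 > g(v)$ forces $w \ne v$, so no division-by-zero issue arises.

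There is no real obstacle here; the only subtle point is remembering to use the strict inequality $g(v) < 0$ (not just $\le 0$) to ensure $\rho > 0$, as Lemma~\ref{lem:projection} requires a strictly positive $\rho$. The bound is ``trivial'' in the sense noted in the surrounding discussion: it merely witnesses existence of some $\rho$, but may be very loose (e.g.\ for the ordering-constraints example, taking $v$ to be the center of $\mathcal{W}$ would typically give $\rho$ scaling poorly in $d$ or $m$).
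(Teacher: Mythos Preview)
Your proof is correct and follows essentially the same approach as the paper's own proof: apply the subgradient inequality for convex $g$ at a boundary point $w$, then Cauchy--Schwarz, then the diameter bound. Your version is slightly more explicit (noting $w\ne v$ to avoid division by zero and that $\rho>0$), but the argument is the same.
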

\begin{proof}
  Let $w \in \mathcal{W}$ be a point for which $g(w) = 0$, and $\subgrad \in
  \partial g(w)$ an arbitrary subgradient. By convexity, $g(v) \ge g(w) +
  \inner{v - w}{\subgrad}$. The Cauchy-Schwarz inequality then gives that:
  \begin{equation*}
    g(v) \ge -\norm{v - w}_2 \norm{\subgrad}_2 \eqcomma
  \end{equation*}
  from which the claim follows immediately.
\end{proof}

\medskip

\paragraph{Linear Constraints} Consider the constraints $A w \preceq b$, with
each row of $A$ having unit norm, $b_{\min} = \min_i b_i > 0$, and
$\mathcal{W}$ being the $\ell^2$ ball of radius $r$.  It follows from
\lemref{gamma} that $\gamma > (2r / b_{\min}) \flipschitz$ suffices. Notice
that the earlier box constraint example satisfies these assumptions (with
$b_{\min} = 1$ and $r = \sqrt{d}$).

As the above examples illustrate, subgradients of $g$ will be large at the
boundary if subgradients of the $g_i$s are large, \emph{and} the constraints
intersect at sufficiently shallow angles that, representing boundary
subgradients of $g$ as convex combinations of subgradients of the $g_i$s, the
components reinforce each other, or at least do not cancel \emph{too} much.
This requirement is related to the linear regularity assumption introduced by
\citet{Bauschke96}, and considered recently by \citet{WangChLiGu15}.

\section{A Light Touch}\label{sec:algorithm}

This section presents the main contribution of this paper: an algorithm that
stochastically samples a small subset of the $m$ constraints at each SGD
iteration, updates the parameters based on the subgradients of the sampled
constraints, and carefully learns the distribution over the constraints to
produce a net performance gain.

We first motivate the approach by considering an oracle, then explain the
algorithm and present convergence results for the convex
(\secref{algorithm:light}) and strongly convex (\secref{algorithm:mid}) cases.

\subsection{Wanted: An Oracle For the Most Violated Constraint}

Because \fullalg only needs to differentiate the most violated constraint at
each iteration, it follows that if one had access to an oracle that identified
the most-violated constraint, then the overall convergence rate (including the
cost of each iteration) could \emph{only} depend on $m$ through $\gamma$. This
motivates us to \emph{learn} to predict the most-violated constraint, ideally
at a significantly better than linear-in-$m$ rate.

%Ideally an algorithm for optimizing a heavily-constrained problem will focus
%more effort on the most likely-to-be-violated constraints, and less on those
%which are consistently satisfied during optimization.

%To this end, instead of introducing a single Lagrange multiplier for the
%combined constraint function $g$, we include an $\alpha_i$ for each $g_i$,
%yielding the following convex-linear optimization problem:
%%
%\begin{equation*}
%  %
%  \max_{\alpha \succeq 0} \min_{w\in\mathcal{W}} f\left(w\right) + \sum_{i=1}^m
%  \alpha_i g_i\left(w\right) \eqperiod
%  %
%\end{equation*}
%%
%Again, direct optimization is foiled because for any fixed $w$, the optimal
%value for $\alpha_i$ is infinite if the $i$th constraint is violated. As above,
%we relax the objective and swap the unconstrained $\alpha_i$ for $\gamma p_i$
%where the $p_i$s sum to one:
%
To this end, we further relax the problem of minimizing $h(w)$ (defined in
\lemref{projection}) by replacing $\gamma \max(0, g(w))$ with maximization over
a probability distribution (as in \citet{ClarksonHaWo10}),
%and taking the positive part of each $g_i$ to handle the case in which none of
%the constraints are active,
yielding the equivalent convex-linear optimization problem:
\begin{align}
  \label{eq:relaxed-problem} \max_{p \in \Delta^m} \min_{w\in\mathcal{W}} &
  \relaxedobjective\left(w, p\right) \\
  \notag \where & \relaxedobjective\left(w, p\right) = f\left(w\right) + \gamma
  \sum_{i=1}^m p_i \max\left\{0, g_i\left(w\right)\right\} \eqperiod
\end{align}
Here, $\Delta^m$ is the $m$-dimensional simplex.
%
%, and $\gamma$ is therefore the \emph{sum} of the $\alpha_i$s.
%
%We must take the positive part of each $g_i$ because the $\alpha_i$s must sum
%to \emph{exactly} $\gamma$, so it is no longer possible for all of them to be
%zero if no constraints are violated.
%
%It is again easy to verify that explicitly optimizing over $p$ gives the
%objective function $\objective$ of \lemref{projection}, but we will instead
%optimize over $w$ and $p$ jointly, thereby learning the most-violated
%constraint, represented by the multinoulli distribution $p$ over constraint
%indices, at the same time as we optimize over $w$.
%
We propose optimizing over $w$ and $p$ jointly, thereby learning the
most-violated constraint, represented by the multinoulli distribution $p$ over
constraint indices, at the same time as we optimize over $w$.

\subsection{\lightname: Stochastic Constraint Handling}\label{sec:algorithm:light}

\begin{algorithm*}[t]

\begin{pseudocode}
\codename \textbf{Hyperparameters:} $T$, $\eta$, $k$ \\
\codeline Initialize $w^{(1)} \in \mathcal{W}$ arbitrarily\\
\codeline Initialize $p^{(1)} \in \Delta^m$ to the uniform distribution\\
\codeline Initialize $\memgrad^{(1)}_j = \max\{0, g_j(w^{(1)})\}$ \codecomment{0 if $w^{(1)}$ is feasible}\\
\codeline For $t = 1$ to $T$:\\
\codeline \>Sample $\fgrad^{(t)}$ \codecomment{stochastic subgradient of $f(w^{(t)})$}\\
\codeline \>Sample $i^{(t)} \sim p^{(t)}$\\
\codeline \>Let $\wgrad^{(t)} = \fgrad^{(t)} + \gamma \subgrad \max\{0, g_{i^{(t)}}(w^{(t)})\}$\\
\codeline \>Update $w^{(t+1)} = \Pi_w( w^{(t)} - \eta \wgrad^{(t)} )$ \codecomment{$\Pi_w$ projects its argument onto $\mathcal{W}$ \wrt $\norm{\cdot}_2$}\\
\codeline \>Sample $S^{(t)} \subseteq \{1,\dots,m\}$ with $\abs{S^{(t)}} = k$ uniformly without replacement\\
\codeline \>Let $\pgrad^{(t)} = \gamma \memgrad^{(t)} + (\gamma m / k) \sum_{j \in S^{(t)}} e_j ( \max\{0, g_j(w^{(t)})\} -  \memgrad_j^{(t)} )$\\ % \codecomment{expected value $\gamma \sum_{i=1}^m e_i \max\{0, g_i(w^{(t)})\}$}\\
\codeline \>Let $\memgrad^{(t+1)}_j = \max\{0, g_j(w^{(t)})\}$ if $j \in S^{(t)}$, otherwise $\memgrad^{(t+1)}_j = \memgrad^{(t)}_j$\\
\codeline \>Update $\tilde{p}^{(t+1)} = \exp( \ln p^{(t)} + \eta \pgrad^{(t)} )$ \codecomment{element-wise $\exp$ and $\ln$}\\
\codeline \>Project $p^{(t+1)} = \tilde{p}^{(t+1)} / \norm{\tilde{p}^{(t+1)}}_1$\\
\codeline Average $\bar{w} = (\sum_{t=1}^T w^{(t)}) / T$\\
\codeline Return $\Pi_g(\bar{w})$ \codecomment{optional if small constraint violations are acceptable}
\end{pseudocode}

\caption{
  \textbf{(\lightname)} Minimizes $f$ on $\mathcal{W}$ subject to the
  constraints $g_i(w) \le 0$ for $i\in\{1,\dots,m\}$. The algorithm learns an
  auxiliary probability distribution $p$ (lines 9--13) estimating how likely
  it is that each constraint is the most-violated. We assume that $k \le m$: if
  $k > m$, then the user is willing to check $m$ constraints per iteration
  \emph{anyway}, so \fullalg is the better choice.
  Like \fullalg, this algorithm finds a potentially-infeasible solution
  $\bar{w}$ which is only projected onto the feasible region at the end.
  %
  %The $p$-updates use the variance-reduction trick of \citet{JohnsonZh13}.
  %
  Notice that while the $p$-update checks only $k$ constraints, it does require
  $O(m)$ arithmetic operations. This issue is discussed further in
  \secref{practical:aggregation}.
}

\label{alg:light}

\end{algorithm*}

To optimize \eqref{relaxed-problem}, our proposed algorithm (\algref{light},
\lightname) iteratively samples stochastic gradients $\wgrad^{(t)}$ \wrt $w$
and $\pgrad^{(t)}$ \wrt $p$ of $\relaxedobjective(w,p)$, and then takes an SGD
step on $w$ and a multiplicative step on $p$:
\begin{equation*}
  w^{(t+1)} = \Pi_w\left( w^{(t)} - \eta \wgrad^{(t)} \right)
  \;\;\;\;\;\; \mbox{and} \;\;\;\;\;\;
  p^{(t+1)} = \Pi_p\left(\exp\left( \ln p^{(t)} + \eta \pgrad^{(t)}
  \right)\right) \eqcomma
\end{equation*}
where the $\exp$ and $\ln$ of the $p$-update are performed element-wise,
$\Pi_w$ projects onto $\mathcal{W}$ \wrt the Euclidean norm, and $\Pi_p$ onto
$\Delta^m$ via normalization (i.e. dividing its parameter by its sum).

The key to getting a good convergence rate for this algorithm is to choose
$\wgrad$ and $\pgrad$ such that they are both inexpensive to compute, and tend
to have small norms. For $\wgrad$, this can be accomplished straightforwardly,
by sampling a constraint index $i$ according to $p$, and taking:
\begin{equation*}
  \wgrad = \fgrad + \gamma \subgrad \max\left\{0, g_i\left(w\right)\right\}
  \eqcomma
\end{equation*}
where $\fgrad$ is a stochastic subgradient of $f$ and $\subgrad \max(0,
g_i(w))$ is a subgradient of $\max(0, g_i(w))$. Calculating each such $\wgrad$
requires differentiating only one constraint, and it is easy to verify that
$\wgrad$ is a subgradient of $\relaxedobjective$ \wrt $w$ in expectation over
$\fgrad$ and $i$. Taking $\fgradbound$ to be a bound on the norm of $\fgrad$
and $\ggradbound$ on the norms of subgradients of the $g_i$s shows that
$\wgrad$'s norm is bounded by $\fgradbound + \gamma \ggradbound$.

For $\pgrad$, some care must be taken. Simply sampling a constraint index $j$
uniformly and defining:
\begin{equation*}
  \pgrad = \gamma m e_j \max\left\{0,
  g_j\left(w\right)\right\} \eqcomma
\end{equation*}
where $e_j$ is the $j$th $m$-dimensional standard unit basis vector, does
produce a $\pgrad$ that in expectation is the gradient of $\relaxedobjective$
\wrt $p$, but it has a norm bound proportional to $m$. Such potentially large
stochastic gradients would result in the number of iterations required to
achieve some target suboptimality being proportional to $m^2$ in our final bound.
%which compares poorly to the ``direct'' approach of \fullalg, which has a
%convergence rate which can depend on $m$ only through $\gamma$, and
%per-iteration cost proportional to $m$.

A typical approach to reducing the variance (and hence the expected magnitude)
of $\pgrad$ is minibatching: instead of sampling a single constraint index $j$
at every iteration, we could instead sample a subset $S$ of size $\abs{S} = k$
without replacement, and use:
\begin{equation*}
  \pgrad = \frac{\gamma m}{k} \sum_{j\in S} e_j \max\left\{0,
  g_j\left(w\right)\right\} \eqperiod
\end{equation*}
This is effective, but not enough, because reducing the variance by a factor of
$k$ via minibatching requires that we check $k$ times more constraints. For
this reason, in addition to minibatching, we center the stochastic gradients,
as is done by the well-known SVRG algorithm~\citep{JohnsonZh13}, by storing a
gradient estimate $\gamma \memgrad$ with $\memgrad \in \R^m$, at each iteration
sampling a set $S$ of size $\abs{S} = k$ uniformly without replacement, and
computing:
\begin{equation}
  \label{eq:johnson-zhang} \pgrad = \gamma \mu + \frac{\gamma m}{k} \sum_{j\in
  S} e_j \left(\max\left\{0, g_j(w)\right\} - \mu_j\right) \eqperiod
\end{equation}
We then update the $j$th coordinate of $\mu$ to be $\mu_j = \max\left\{0,
g_j(w)\right\}$ for every $j \in S$. The norms of the resulting stochastic
gradients will be small if $\gamma \mu$ is a good estimate of the gradient, \ie
$\mu_j \approx \max(0, g_j(w))$.

The difference between $\mu_j$ and $\max(0, g_j(w))$ can be bounded in terms of
how many consecutive iterations may have elapsed since $\mu_j$ was last
updated. It turns out (see \lemref{coupon} in \appref{full-light:light}) that
this quantity can be bounded uniformly by $O((m/k) \ln(m T))$ with high
probability, which implies that if the $g_i$s are $\glipschitz$-Lipschitz, then
$\abs{g_j(w) - \mu_j} \le \glipschitz \eta (\fgradbound + \gamma \ggradbound)
O((m/k) \ln(m T))$, since at most $O((m/k) \ln(m T))$ updates of magnitude
$\eta (\fgradbound + \gamma \ggradbound)$ may have occurred since $\mu_j$ was
last updated. Choosing $\eta \propto 1/\sqrt{T}$, as is standard, moves this
portion (the ``variance portion'') of the $\pgrad$-dependence out of the
dominant $O(1/\sqrt{T})$ term and into a subordinate term in our final bound.

The remainder of the $\pgrad$-dependence (the ``mean portion'') depends on the
norm of $\expectation[\pgrad] = \gamma \sum_j e_j \max(0, g_j(w))$. It is here
that our use of multiplicative $p$-updates becomes significant, because with
such updates the relevant norm is the $\ell^{\infty}$ norm, instead of \eg the
$\ell^2$ norm (as would be the case if we updated $p$ using SGD), thus we can
bound $\norm{\expectation[\pgrad]}_\infty$ with no explicit $m$-dependence.

The following theorem on the convergence rate of \lightalg is proved by
applying a mirror descent bound for saddle point problems while bounding the
stochastic gradient norms as described above.

\medskip
\begin{thm}{light}
  \switchshowproofs{
    Suppose that the conditions of \lemrefs{projection}{light-suboptimality}
    apply.
  }{
    Suppose that the conditions of \lemref{projection} apply, with $g(w) =
    \max_i(g_i(w))$. Define $\wdiameter \ge \max\{1, \norm{w - w'}_2\}$ as a
    bound on the diameter of $\mathcal{W}$ (notice that we also choose
    $\wdiameter$ to be at least $1$), $\fgradbound \ge \norm{\fgrad^{(t)}}_2$
    and $\ggradbound \ge \norm{\subgrad \max(0, g_i(w))}_2$ as uniform upper
    bounds on the (stochastic) gradient magnitudes of $f$ and the $g_i$s,
    respectively, for all $i \in \{1,\dots,m\}$ and $w,w'\in\mathcal{W}$.
    We also assume that all $g_i$s are $\glipschitz$-Lipschitz \wrt
    $\norm{\cdot}_2$, \ie $\abs{g_i(w) - g_i(w')} \le \glipschitz \norm{w -
    w'}_2$.
  }
  Our result will be expressed in terms of a total iteration count
  $T_{\epsilon}$ satisfying:
  \begin{equation*}
    T_{\epsilon} =
    O\left( \frac{\left(\ln m\right) \wdiameter^2 \left(\fgradbound + \gamma
    \ggradbound + \gamma \glipschitz \wdiameter\right)^2
    \ln\frac{1}{\delta}}{\epsilon^2} \right) \eqperiod
  \end{equation*}
  \switchshowproofs{
    Define $k$ in terms of $T_{\epsilon}$ as in \lemref{light-suboptimality}.
    If $k \le m$, then we optimize \eqref{constrained-problem} using
    $T_{\epsilon}$ iterations of \algref{light} (\lightname) with $\eta$ as in
    \lemref{light-suboptimality}. If $k > m$, then we use $T_{\epsilon}$
    iterations of \algref{full} (\fullname) with $\eta$ as in
    \lemref{full-suboptimality}.
  }{
    Define:
    \begin{equation*}
      k = \left\lceil \frac{m \left(1 + \ln m\right)^{\nicefrac{3}{4}} \sqrt{1
      + \ln\frac{1}{\delta}} \sqrt{1 + \ln
      T_{\epsilon}}}{T_{\epsilon}^{\nicefrac{1}{4}}} \right\rceil
      \eqperiod
    \end{equation*}
    If $k \le m$, then we optimize \eqref{constrained-problem} using
    $T_{\epsilon}$ iterations of \algref{light} (\lightname), basing the
    stochastic gradients \wrt $p$ on $k$ constraints at each iteration, and
    using the step size:
    \begin{equation*}
      \eta = \frac{\sqrt{1 + \ln m} \wdiameter}{\left( \fgradbound + \gamma
      \ggradbound + \gamma \glipschitz \wdiameter\right) \sqrt{T_{\epsilon}}}
      \eqperiod
    \end{equation*}
    If $k > m$, then \lightalg would check more than $m$ constraints per
    iteration anyway, so we instead use $T_{\epsilon}$ iterations of
    \algref{full} (\fullname) with the step size:
    \begin{equation*}
      \eta = \frac{\wdiameter}{\left(\fgradbound + \gamma \ggradbound\right)
      \sqrt{T_{\epsilon}}} \eqperiod
    \end{equation*}
  }
  In either case, we perform $T_{\epsilon}$ iterations, requiring a total of
  $C_{\epsilon}$ ``constraint checks'' (evaluations or differentiations of a
  single $g_i$):
  \begin{align*}
    C_{\epsilon} = &
    \tilde{O}\left(
    \frac{\left(\ln m\right) \wdiameter^2 \left(\fgradbound + \gamma
    \ggradbound + \gamma \glipschitz \wdiameter\right)^2
    \ln\frac{1}{\delta}}{\epsilon^2} \right. \\
    & \left.  + \frac{m \left(\ln m\right)^{\nicefrac{3}{2}}
    \wdiameter^{\nicefrac{3}{2}} \left(\fgradbound + \gamma \ggradbound +
    \gamma \glipschitz \wdiameter\right)^{\nicefrac{3}{2}} \left(
    \ln\frac{1}{\delta}\right)^{\nicefrac{5}{4}}}{\epsilon^{\nicefrac{3}{2}}}
    \right) \eqperiod
  \end{align*}
  and with probability $1 - \delta$:
  \begin{equation*}
    f\left(\Pi_g\left( \bar{w} \right)\right) - f\left(w^*\right) \le
    \objective\left( \bar{w} \right) - \objective\left(w^*\right) \le \epsilon
    \;\;\;\;\;\; \mbox{and} \;\;\;\;\;\;
    \norm{\bar{w} - \Pi_g\left( \bar{w} \right)}_2 \le \frac{\epsilon}{\gamma
    \rho - \flipschitz}
    \eqcomma
  \end{equation*}
  where $w^* \in \{w \in \mathcal{W} : \forall i . g_i(w) \le 0\}$ is an
  arbitrary constraint-satisfying reference vector.
\end{thm}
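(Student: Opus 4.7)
The plan is to express the convergence as a stochastic mirror descent saddle-point bound on the convex-concave function $\relaxedobjective(w, p)$ over $\mathcal{W} \times \Delta^m$, using Euclidean geometry on the $w$-side and entropic (unnormalized KL) geometry on the $p$-side, since the multiplicative update followed by $\ell^1$-normalization is exactly exponentiated gradient ascent on the simplex. A standard mirror descent saddle point bound gives, for any $w^* \in \mathcal{W}$ and $p^* \in \Delta^m$, something of the form
\begin{equation*}
\relaxedobjective(\bar{w}, p^*) - \relaxedobjective(w^*, \bar{p}) \le \frac{\wdiameter^2 + \ln m}{\eta T} + \frac{\eta}{2} \sum_{t=1}^T \left( \expectation\!\left[\norm{\wgrad^{(t)}}_2^2\right] + \expectation\!\left[\norm{\pgrad^{(t)}}_\infty^2\right] \right),
\end{equation*}
in expectation, with a high-probability version obtained via a martingale concentration argument (Azuma--Hoeffding on the relevant sums). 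Maximizing the left-hand side over $p^*$ recovers $\objective(\bar{w}) - f(w^*)$ once we use that $w^*$ is feasible (so $\relaxedobjective(w^*, \bar{p}) \le f(w^*)$), reducing the job to bounding the two stochastic-gradient norms.

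The $w$-gradient is straightforward: $\norm{\wgrad^{(t)}}_2 \le \fgradbound + \gamma \ggradbound$. The $p$-gradient is the delicate step. I would decompose $\pgrad^{(t)}$ into its expectation, which equals $\gamma \sum_j e_j \max(0, g_j(w^{(t)}))$ and has $\ell^\infty$ norm bounded independently of $m$, and its deviation, which by SVRG-style centering is
\begin{equation*}
\pgrad^{(t)} - \expectation\!\left[\pgrad^{(t)}\right] = \frac{\gamma m}{k} \sum_{j \in S^{(t)}} e_j \left(\max\{0, g_j(w^{(t)})\} - \memgrad_j^{(t)}\right) - \gamma \sum_{j} e_j \left(\max\{0, g_j(w^{(t)})\} - \memgrad_j^{(t)}\right),
\end{equation*}
whose $\ell^\infty$ norm is controlled by $(\gamma m / k) \max_j \abs{\max(0, g_j(w^{(t)})) - \memgrad_j^{(t)}}$. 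Here is where the key obstacle lies. I would invoke the coupon-collector-style \lemref{coupon}, which with probability $1 - \delta$ bounds uniformly over $t$ the number of iterations $\tau^{(t)}$ since each coordinate of $\memgrad$ was last refreshed by $O((m/k) \ln(mT/\delta))$. Since between refreshes $w$ moves by at most $\tau^{(t)} \eta (\fgradbound + \gamma \ggradbound)$ in $\ell^2$, the Lipschitz property of the $g_i$s then gives
\begin{equation*}
\abs{\max\{0, g_j(w^{(t)})\} - \memgrad_j^{(t)}} \le \glipschitz \eta (\fgradbound + \gamma \ggradbound) \cdot O\!\left(\tfrac{m}{k} \ln \tfrac{mT}{\delta}\right).
\end{equation*}

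With these two ingredients in hand, $\norm{\pgrad^{(t)}}_\infty \le \gamma \ggradbound + (\gamma m^2 / k^2) \glipschitz \eta (\fgradbound + \gamma \ggradbound) \cdot \mathrm{polylog}$. Substituting into the saddle-point bound and choosing $\eta \propto \wdiameter \sqrt{1 + \ln m}/[(\fgradbound + \gamma \ggradbound + \gamma \glipschitz \wdiameter) \sqrt{T}]$ as stated makes the $\eta$-dependent staleness contribution a \emph{lower-order} term of order $1/T$, leaving the dominant $1/\sqrt{T}$ behavior set by the $(\fgradbound + \gamma \ggradbound + \gamma \glipschitz \wdiameter)$ combined gradient bound. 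Solving for $T$ such that the right-hand side is at most $\epsilon$ yields the stated $T_\epsilon$. Each iteration of \lightalg checks $O(k)$ constraints (one for $\wgrad$, $k$ for $\pgrad$), so the total constraint-check count is $T_\epsilon \cdot k$; the stated choice of $k \propto m \, T_\epsilon^{-1/4} \cdot \mathrm{polylog}$ balances the leading and lower-order contributions, and the case $k > m$ is handled by falling back to \fullalg whose simpler analysis (in \appref{full-light:full}) yields the first term of $C_\epsilon$ on its own. Finally, applying \lemref{projection} converts the resulting high-probability bound on $\objective(\bar{w}) - \objective(w^*)$ into the claimed $f$-suboptimality and feasibility statements.

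The main obstacles are tracking the self-consistent dependence between $\eta$ and the staleness-driven contribution to $\norm{\pgrad^{(t)}}_\infty$, and upgrading the expectation bound from standard mirror descent to a high-probability bound in the presence of two coupled random sources (the sampling $i^{(t)} \sim p^{(t)}$ and the independent minibatch $S^{(t)}$), since the martingale-difference sequences involved depend on both.
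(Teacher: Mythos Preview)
Your proposal is correct and follows essentially the same route as the paper: the proof of \lemref{light-suboptimality} applies the saddle-point mirror-descent bound (\corref{mirror-saddle}) with exactly the Euclidean/entropic geometry split and the \citet{JohnsonZh13}-style mean/deviation decomposition of $\pgrad^{(t)}$ you describe, invokes \lemref{coupon} for the staleness bound, and then \thmref{light} itself just solves the resulting inequality for $T_\epsilon$ (via a Fujiwara root bound on a quadratic in $\sqrt{T}$) and multiplies by $k+1$ to get $C_\epsilon$. One small correction: the mean part $\norm{\expectation[\pgrad^{(t)}]}_\infty = \gamma \max_j \max\{0, g_j(w^{(t)})\}$ is bounded by $\gamma \glipschitz \wdiameter$ (via Lipschitz continuity from a feasible point), not $\gamma \ggradbound$---this is precisely why the combined constant $\fgradbound + \gamma \ggradbound + \gamma \glipschitz \wdiameter$ appears in the final bound.
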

\begin{prf}{light}
  Regardless of the value of $k$, it follows from
  \lemrefs{light-suboptimality}{full-suboptimality} that:
  \begin{equation*}
    \lightbound, \fullbound \le
    67 \sqrt{1 + \ln m} \wdiameter \left(\fgradbound + \gamma \ggradbound +
    \gamma \glipschitz \wdiameter\right) \sqrt{1 + \ln\frac{1}{\delta}}
    \sqrt{\frac{1}{T}}
    + \frac{8 \wdiameter \fgradbound \ln\frac{1}{\delta}}{3 T} \eqperiod
  \end{equation*}
  As in the proof of \thmref{full}, we define:
  \begin{align*}
    x =& \sqrt{T} \eqcomma \\
    c =& \frac{8}{3} \wdiameter \fgradbound \ln\frac{1}{\delta} \eqcomma \\
    b =& 67 \sqrt{1 + \ln m} \wdiameter \left(\fgradbound + \gamma \ggradbound
    + \gamma \glipschitz \wdiameter\right) \sqrt{1 + \ln\frac{1}{\delta}}
    \sqrt{\frac{1}{T}} \eqcomma \\
    a =& -\epsilon \eqcomma
  \end{align*}
  and consider the polynomial $0 = a x^2 + bx + c$. Any upper bound on all
  roots $x=\sqrt{T}$ of this polynomial will result in a lower-bound the values
  of $T$ for which $\lightbound, \fullbound \le \epsilon$ with probability
  $1-\delta$. By the Fujiwara bound~\citep{WikipediaPolynomialRoots}:
  \begin{equation*}
    T_{\epsilon} =
    \max\left\{ \frac{\left(134\right)^2 \left(1 + \ln m\right) \wdiameter^2
    \left( \fgradbound + \gamma \ggradbound + \gamma \glipschitz \wdiameter
    \right)^2 \left(1 + \ln\frac{1}{\delta}\right)}{\epsilon^2},
    \frac{16 \wdiameter \fgradbound \ln\frac{1}{\delta}}{3 \epsilon} \right\}
    \eqcomma
  \end{equation*}
  giving the claimed bound on $T_{\epsilon}$.
  For $C_{\epsilon}$, we observe that we will perform no more than $k+1$
  constraint checks at each iteration ($k+1$ by \lightalg if $k \le m$, and
  $m+1$ by \fullalg if $k > m$), and substitute the above bound on
  $T_{\epsilon}$ into the definition of $k$, yielding:
  \begin{align*}
    \left(k + 1\right) T_{\epsilon} \le &
    2T_{\epsilon} + m\left(1 + \ln m\right)^{\nicefrac{3}{4}} \sqrt{1 +
    \ln\frac{1}{\delta}} T_{\epsilon}^{\nicefrac{3}{4}} \sqrt{1 + \ln
    T_{\epsilon}} \\
    \le & \max\left\{ \frac{2 \left(134\right)^2 \left(1 + \ln m\right)
    \wdiameter^2 \left( \fgradbound + \gamma \ggradbound + \gamma \glipschitz
    \wdiameter \right)^2 \left(1 + \ln\frac{1}{\delta}\right)}{\epsilon^2},
    \frac{32 \wdiameter \fgradbound \ln\frac{1}{\delta}}{3 \epsilon} \right\}
    \\
    & + \max\left\{ \frac{\left(134\right)^{\nicefrac{3}{2}} m \left(1 + \ln
    m\right)^{\nicefrac{3}{2}} \wdiameter^{\nicefrac{3}{2}} \left( \fgradbound
    + \gamma \ggradbound + \gamma \glipschitz \wdiameter
    \right)^{\nicefrac{3}{2}} \left(1 +
    \ln\frac{1}{\delta}\right)^{\nicefrac{5}{4}}}{\epsilon^{\nicefrac{3}{2}}},
    \right. \\
    & \left. \left(\frac{16}{3}\right)^{\nicefrac{3}{4}} \frac{m \left(1 + \ln
    m\right)^{\nicefrac{3}{4}} \wdiameter^{\nicefrac{3}{4}}
    \fgradbound^{\nicefrac{3}{4}} \left(1 +
    \ln\frac{1}{\delta}\right)^{\nicefrac{5}{4}}}{\epsilon^{\nicefrac{3}{4}}}
    \right\} \sqrt{1 + \ln T_{\epsilon}}
    \eqperiod
  \end{align*}
  giving the claimed result (notice the $\sqrt{1 + \ln T_{\epsilon}}$ factor on
  the RHS, for which reason we have a $\tilde{O}$ bound on $C_{\epsilon}$,
  instead of $O$).
\end{prf}

\medskip

The most important thing to notice about this theorem is that the dominant
terms in the bounds on the number of iterations and number of constraint checks
are roughly $\gamma^2 \ln m$ times the usual $1/\epsilon^2$ convergence rate
for SGD on a non-strongly convex objective. The lower-order terms have a worse
$m$-dependence, however, with the result that, as the desired suboptimality
$\epsilon$ shrinks, the algorithm performs fewer constraint checks per
iteration until ultimately (once $\epsilon$ is on the order of $1/m^2$) only a
constant number are checked during each iteration.

%The theorem shows that as $T \rightarrow \infty$, the convergence will be
%dominated by the $\sqrt{\ln m / T}$ term, indicating that \lightalg converges
%at a rate which is, aside from the $\gamma$-dependence, logarithmic in $m$
%(the $m$-dependence of this algorithm will be explored in greater detail in
%\secref{comparison}). Furthermore, if we are willing to tolerate $\epsilon /
%(\gamma \rho - \flipschitz)$-sized constraint violations---and we may choose
%$\gamma$ to be as large as we like---then the final projection need not be
%performed \emph{at all}.

%The logarithmic $m$-dependence of the number of constraint evaluations may at
%first be surprising, since surely we need to check each constraint at least
%once. Notice, however, that if we instead updated $p$ using non-stochastic
%gradients (requiring $m$ constraint evaluations per iteration), then a
%logarithmic $m$-dependence would be expected---the well-known hedge
%algorithm~\citep{FreundSc97}, for example, enjoys just such a rate. It is our
%use of the variance-reduction technique of \citet{JohnsonZh13} that enables us
%to bridge this apparent gap between stochastic and non-stochastic updates---but
%only for sufficiently small $\epsilon$ (equivalently, large $T$)---since as was
%noted above, as $T$ grows, $\eta \propto 1/\sqrt{T}$ shrinks, and $\mu$
%approaches the true gradient.

\subsection{\midname: Strong Convexity}\label{sec:algorithm:mid}

\begin{algorithm*}[t]

\begin{pseudocode}
\codename \textbf{Hyperparameters:} $T_1$, $T_2$, $\eta$ \\
\codecommentline{First phase}\\
\codeline Initialize $w^{(1)} \in \mathcal{W}$ arbitrarily\\
\codeline For $t = 1$ to $T_1$:\\
\codeline \>Sample $\fgrad^{(t)}$ \codecomment{stochastic subgradient of $f(w^{(t)})$}\\
\codeline \>Let $\wgrad^{(t)} = \fgrad^{(t)} + \gamma \subgrad \max\{0, g(w^{(t)})\}$\\
\codeline \>Update $w^{(t+1)} = \Pi_w( w^{(t)} - (1 / \lambda t) \wgrad^{(t)} )$ \codecomment{$\Pi_w$ projects its argument onto $\mathcal{W}$ \wrt $\norm{\cdot}_2$} \\
%\codeline \>Update $\tilde{w}^{(t+1)} = w^{(t)} - (1 / \lambda t) \wgrad^{(t)}$\\
%\codeline \>Project $w^{(t+1)} = \argmin_{w \in \mathcal{W}} \norm{w - \tilde{w}}_2$\\
\codecommentline{Second phase}\\
\codeline Average $w^{(T_1 + 1)} = (\sum_{t=1}^{T_1} w^{(t)}) / T_1$ \codecomment{initialize second phase to result of first}\\
\codeline Initialize $p^{(T_1 + 1)} \in \Delta^m$ to the uniform distribution\\
\codeline Initialize $\memgrad^{(T_1 + 1)}_j = \max\{0, g_j(w^{(T_1 + 1)})\}$\\
\codeline For $t = T_1 + 1$ to $T_1 + T_2$:\\
\codeline \>Sample $\fgrad^{(t)}$\\
\codeline \>Sample $i^{(t)} \sim p^{(t)}$\\
\codeline \>Let $\wgrad^{(t)} = \fgrad^{(t)} + \gamma \subgrad \max\{0, g_{i^{(t)}}(w^{(t)})\}$\\
\codeline \>Update $w^{(t+1)} = \Pi_w( w^{(t)} - (1 / \lambda t) \wgrad^{(t)} )$\\
%\codeline \>Update $\tilde{w}^{(t+1)} = w^{(t)} - (1 / \lambda t) \wgrad^{(t)}$\\
%\codeline \>Project $w^{(t+1)} = \argmin_{w \in \mathcal{W}} \norm{w - \tilde{w}^{(t+1)}}_2$\\
\codeline \>Sample $j^{(t)} \sim \mbox{Unif}\{1,\dots,m\}$\\
\codeline \>Let $\pgrad^{(t)} = \gamma \memgrad^{(t)} + \gamma m e_{j^{(t)}} ( \max\{0, g_{j^{(t)}}(w^{(t)})\} -  \memgrad_{j^{(t)}}^{(t)} )$\\ % \codecomment{expected value $\gamma \sum_{i=1}^m e_i \max\{0, g_i(w^{(t)})\}$}\\
\codeline \>Let $\memgrad^{(t+1)}_k = \memgrad^{(t)}_k$ if $k \ne j^{(t)}$, otherwise $\memgrad^{(t+1)}_{j^{(t)}} = \max\{0, g_{j^{(t)}}(w^{(t)})\}$\\
%\codeline \>Let $\memgrad^{(t+1)} = \memgrad^{(t)} + e_{j^{(t)}} ( \max\{0, g_{j^{(t)}}(w^{(t)})\} - \memgrad_{j^{(t)}}^{(t)} )$\\
%\codeline \>Update $p^{(t+1)} = \Pi_p( \exp( \ln p^{(t)} + \eta \pgrad^{(t)} ) )$\\
\codeline \>Update $\tilde{p}^{(t+1)} = \exp( \ln p^{(t)} + \eta \pgrad^{(t)} )$ \codecomment{element-wise $\exp$ and $\ln$}\\
\codeline \>Project $p^{(t+1)} = \tilde{p}^{(t+1)} / \norm{\tilde{p}^{(t+1)}}_1$\\
\codeline Average $\bar{w} = (\sum_{t=T_1 + 1}^{T_1 + T_2} w^{(t)}) / T_2$\\
\codeline Return $\Pi_g(\bar{w})$ \codecomment{optional if small constraint violations are acceptable}
\end{pseudocode}

\caption{
  \textbf{(\midname)} Minimizes a $\lambda$-strongly convex $f$ on
  $\mathcal{W}$ subject to the constraints $g_i(w) \le 0$ for
  $i\in\{1,\dots,m\}$. The algorithm consists of two phases: the first $T_1$
  iterations proceed like \fullalg, with every constraint being checked; the
  final $T_2$ iterations proceed like \lightalg, with only a constant number of
  constraints being checked during each iteration, and an auxiliary probability
  distribution $p$ being learned along the way.
  Notice that while second-phase $p$-update checks only one constraint, it,
  like \lightalg, requires $O(m)$ arithmetic operations. This issue is
  discussed further in \secref{practical:aggregation}.
}

\label{alg:mid}

\end{algorithm*}

To this point, we have only required that the objective function $f$ be convex.
However, roughly the same approach also works when $f$ is taken to be
$\lambda$-strongly convex, although we have only succeeded in proving an
in-expectation result, and the algorithm, \algref{mid} (\midalg), differs
from \lightalg not only in that the $w$ updates use a $1/\lambda t$ step size,
but also in being a two-phase algorithm, the first of which, like \fullalg,
checks every constraint at each iteration, and the second of which, like
\lightalg with $k=1$, checks only two. The following theorem bounds the
convergence rate if we perform $T_1 \approx m \tau^2$ iterations in the first
phase and $T_2 \approx \tau^3$ in the second, where the parameter $\tau$
determines the total number of iterations performed:

\medskip
\begin{thm}{mid}
  \switchshowproofs{
    Suppose that the conditions of \lemrefs{projection}{mid-suboptimality}
    apply, with the $p$-update step size $\eta$ as defined in
    \lemref{mid-suboptimality}.
    If we run \algref{mid} (\midname) for $T_{\epsilon 1}$ iterations in the
    first phase and $T_{\epsilon 2}$ in the second:
  }{
    Suppose that the conditions of \lemref{projection} apply, with $g(w) =
    \max_i(g_i(w))$. Define $\fgradbound \ge \norm{\fgrad^{(t)}}_2$ and
    $\ggradbound \ge \norm{\subgrad \max(0, g_i(w))}_2$ as uniform upper bounds
    on the (stochastic) gradient magnitudes of $f$ and the $g_i$s,
    respectively, for all $i \in \{1,\dots,m\}$.
    We also assume that $f$ is $\lambda$-strongly convex, and that all $g_i$s
    are $\glipschitz$-Lipschitz \wrt $\norm{\cdot}_2$, \ie $\abs{g_i(w) -
    g_i(w')} \le \glipschitz \norm{w - w'}_2$ for all $w,w'\in\mathcal{W}$.
    If we run \algref{mid} (\midname) with the $p$-update step size $\eta =
    \lambda / 2 \gamma^2 \glipschitz^2$ for $T_{\epsilon 1}$ iterations in the
    first phase and $T_{\epsilon 2}$ in the second:
  }
  \begin{align*}
    T_{\epsilon 1} =&
    \tilde{O}\left(
    \frac{m \left(\ln m \right)^{\nicefrac{2}{3}} \left(\fgradbound + \gamma
    \ggradbound + \gamma \glipschitz \right)^{\nicefrac{4}{3}}}
    {\lambda^{\nicefrac{4}{3}} \epsilon^{\nicefrac{2}{3}}} +
    \frac{m^2 \left(\ln m\right) \left(\fgradbound + \gamma \ggradbound\right)}
    {\lambda \sqrt{\epsilon}}
    \right)
    \eqcomma \\
    T_{\epsilon 2} =&
    \tilde{O}\left(
    \frac{\left(\ln m \right) \left(\fgradbound + \gamma \ggradbound + \gamma
    \glipschitz \right)^2} {\lambda^2 \epsilon} +
    \frac{m^{\nicefrac{3}{2}} \left(\ln m\right)^{\nicefrac{3}{2}}
    \left(\fgradbound + \gamma \ggradbound\right)^{\nicefrac{3}{2}}}
    {\lambda^{\nicefrac{3}{2}} \epsilon^{\nicefrac{3}{4}}}
    \right)
    \eqcomma
  \end{align*}
  requiring a total of $C_{\epsilon}$ ``constraint checks'' (evaluations or
  differentiations of a single $g_i$):
  \begin{align*}
    C_{\epsilon} =&
    \tilde{O}\left(
    \frac{\left(\ln m \right) \left(\fgradbound + \gamma \ggradbound + \gamma
    \glipschitz \right)^2} {\lambda^2 \epsilon} +
    \frac{m^{\nicefrac{3}{2}} \left(\ln m\right)^{\nicefrac{3}{2}}
    \left(\fgradbound + \gamma \ggradbound\right)^{\nicefrac{3}{2}}}
    {\lambda^{\nicefrac{3}{2}} \epsilon^{\nicefrac{3}{4}}}
    \right. \\
    & \left. +
    \frac{m^2 \left(\ln m \right)^{\nicefrac{2}{3}} \left(\fgradbound + \gamma
    \ggradbound + \gamma \glipschitz \right)^{\nicefrac{4}{3}}}
    {\lambda^{\nicefrac{4}{3}} \epsilon^{\nicefrac{2}{3}}} +
    \frac{m^3 \left(\ln m\right) \left(\fgradbound + \gamma \ggradbound\right)}
    {\lambda \sqrt{\epsilon}}
    \right)
    \eqcomma
  \end{align*}
  then:
  \begin{equation*}
    \expectation\left[ \norm{\Pi_g(\bar{w}) - w^*}_2^2 \right] \le
    \expectation\left[ \norm{\bar{w} - w^*}_2^2 \right] \le \epsilon
    \eqcomma
  \end{equation*}
  where $w^* = \argmin_{\{w \in \mathcal{W} : \forall i . g_i(w) \le 0\}} f(w)$
  is the \emph{optimal} constraint-satisfying reference vector.
\end{thm}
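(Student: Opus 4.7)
The plan is to invoke the (yet unstated) Lemma \textit{mid-suboptimality} to get an in-expectation bound of the form $\expectation[\norm{\bar{w} - w^*}_2^2] \le A(T_1) + B(T_2) + C(T_2)$, where $A$ comes from the first-phase FullTouch contribution (strongly-convex SGD on $h$, yielding a term like $(\fgradbound + \gamma\ggradbound)^2/(\lambda^2 T_1)$), $B$ comes from the second-phase SGD on $w$ (again $1/(\lambda t)$-stepsize SGD on $h$, yielding $O((\fgradbound + \gamma\ggradbound + \gamma\glipschitz)^2/(\lambda^2 T_2))$ from a mirror-descent-style saddle-point analysis, with a $\ln m$ factor inherited from the KL regularizer on $p$), and $C$ collects the lower-order ``variance'' terms coming from the stale memory $\memgrad$ in the $p$-gradient estimator. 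Once this decomposition is in hand, the proof reduces to algebraically setting each of $A,B,C \le \epsilon/3$ (or similar) and solving for $T_1$ and $T_2$, then summing constraint checks.

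Concretely, I would first apply the lemma, then treat the terms one by one. The $B$ term gives the $(\ln m)(\fgradbound + \gamma\ggradbound + \gamma\glipschitz)^2/(\lambda^2 \epsilon)$ contribution to $T_{\epsilon 2}$ directly. The $C$ term is the delicate one: with $k=1$ and uniform sampling, a coordinate $\memgrad_j$ is refreshed according to a Bernoulli($1/m$) process, so the staleness is geometric with mean $m$, and the memory error $|g_j(w)-\memgrad_j|$ accumulates as $\glipschitz$ times the total parameter drift since the last refresh. Using the $1/(\lambda t)$ stepsize and strong convexity to control that drift should give a variance-portion term of the form $m^{3/2}/T_2^{3/2}$ times the appropriate constants, which when set to $\epsilon$ produces the $m^{3/2}(\ln m)^{3/2}/(\lambda^{3/2}\epsilon^{3/4})$ term in $T_{\epsilon 2}$. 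The first phase needs $T_{\epsilon 1}$ large enough that $A(T_{\epsilon 1}) \le \epsilon$ \emph{and} large enough that the second phase's initial $\memgrad$ and $p$ are initialized from a good-quality $w$; matching the stated $m \tau^2 + m^2 \tau$ form follows by balancing against the $T_2 \approx \tau^3$ rate from the second phase.

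For the constraint-check count $C_\epsilon$, the first phase evaluates all $m$ constraints per iteration (to compute $\subgrad \max\{0,g(w)\}$) while the second phase evaluates only $O(1)$ constraints per iteration (one sampled via $p$, one via the uniform $j^{(t)}$), so $C_\epsilon \le m T_{\epsilon 1} + 2 T_{\epsilon 2}$, and substituting the two $T$ bounds gives the stated four-term expression. Finally, the claim on $\Pi_g(\bar{w})$ follows from nonexpansiveness of Euclidean projection onto the convex feasible set $\{w \in \mathcal{W} : g(w)\le 0\}$: since $w^*$ lies in this set, $\norm{\Pi_g(\bar{w}) - w^*}_2 \le \norm{\bar{w} - w^*}_2$, and taking expectations preserves the inequality.

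The main obstacle will be getting the right dependence on $m$ in the variance portion of the second-phase bound. The SVRG-style centering in the $p$-gradient estimator only works if $\memgrad_j$ tracks $\max\{0, g_j(w)\}$ closely, but with $k=1$ the expected refresh period is $m$, so we pay an $m$ factor; combining this with the $1/(\lambda t)$ schedule requires summing a harmonic-weighted drift, which is where the $(\ln m)$ and fractional-exponent dependencies enter. Balancing these against the first-phase cost to jointly minimize $C_\epsilon$ is what forces the specific $T_1 \approx m\tau^2$, $T_2 \approx \tau^3$ scaling mentioned above the theorem statement.
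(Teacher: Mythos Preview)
Your high-level plan---invoke \lemref{mid-suboptimality}, then solve for $T_1,T_2$---is right, but you have misread the structure of that lemma's bound in a way that matters. The actual lemma gives
\[
\expectation\bigl[\norm{\bar w - w^*}_2^2\bigr]\;\le\;\frac{c_1(\ln m,\ln T_1,\ln T_2)}{\lambda^2 T_2}\;+\;\frac{c_2\,m^4(\ln m)^2}{\lambda^2 T_1^2},
\]
i.e.\ the ``variance'' term depends on $T_1$, not on $T_2$. The reason is that in the second phase the step sizes are $1/(\lambda t)$ with $t> T_1$, so every second-phase step has magnitude at most $\wgradbound/(\lambda T_1)$; the staleness of $\memgrad_j$ is $O(m\ln m)$ in expectation (coupon collector), and the accumulated drift over that window is therefore $O(m\ln m)\cdot \wgradbound/(\lambda T_1)$. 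Squaring and multiplying by the $(m-1)^2$ factor from the $\ell_\infty$ bound gives the $m^4/T_1^2$ shape. Your proposed $m^{3/2}/T_2^{3/2}$ dependence, and your separate $A(T_1)\propto 1/T_1$ term, do not arise: the first phase enters only by (i) supplying a good $w^{(T_1+1)}$ so that the $\lambda T_1\norm{w^{(T_1+1)}-w^*}^2/(2T_2)$ term folds into the $1/T_2$ piece, and (ii) shrinking the second-phase step sizes, which is precisely what controls the variance.

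Given the correct lemma, the paper does \emph{not} set each term to $\epsilon/3$. Instead it parameterizes $T_1=\lceil m\tau^2\rceil$, $T_2=\lceil\tau^3\rceil$, substitutes to obtain a degree-4 polynomial inequality in $\tau/(1+\ln\tau)^{1/3}$, and applies the Fujiwara root bound to get $\tau_\epsilon$ as a max of two expressions; plugging back yields the two-term forms of $T_{\epsilon 1}$ and $T_{\epsilon 2}$. Your ``set each term $\le\epsilon/3$'' route would also produce a valid bound once you have the right lemma (indeed a simpler one: $T_2\sim(\ln m)/(\lambda^2\epsilon)$ and $T_1\sim m^2(\ln m)/(\lambda\sqrt\epsilon)$, giving $C_\epsilon\sim (\ln m)/(\lambda^2\epsilon)+m^3(\ln m)/(\lambda\sqrt\epsilon)$), but it would not reproduce the four-term $C_\epsilon$ stated in the theorem. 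Your final paragraph on $C_\epsilon\le mT_{\epsilon 1}+O(1)\cdot T_{\epsilon 2}$ and the projection-nonexpansiveness argument for $\Pi_g(\bar w)$ are correct and match the paper.
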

\begin{prf}{mid}
  We begin by introducing a number $\tau \in \R$ with $\tau \ge 1$ that will be
  used to define the iteration counts $T_1$ and $T_2$ as:
  \begin{equation*}
    T_1 = \left\lceil m \tau^2 \right\rceil
    \;\;\;\;\;\; \mbox{and} \;\;\;\;\;\;
    T_2 = \left\lceil \tau^3 \right\rceil
    \eqperiod
  \end{equation*}
  By \lemref{mid-suboptimality}, the above definitions imply that:
  \begin{align*}
    \MoveEqLeft \expectation\left[ \norm{\Pi_g(\bar{w}) - w^*}_2^2 \right] \\
    & \le \frac{2 \left(\fgradbound + \gamma \ggradbound\right)^2 \left(4 + \ln
    m + 5 \ln \tau\right) + 8 \gamma^2 \glipschitz^2 \ln m}{\lambda^2 \tau^3} +
    \frac{3 m^4 \left(1 + \ln m\right)^2 \left(\fgradbound + \gamma
    \ggradbound\right)^2}{\lambda^2 m^2 \tau^4} \\
    & \le \frac{ 10 \left(1 + \ln m \right) \left(\fgradbound + \gamma
    \ggradbound + \gamma \glipschitz \right)^2 \left(1 + \ln \tau\right)
    }{\lambda^2 \tau^3} +
    \frac{3 m^2 \left(1 + \ln m\right)^2 \left(\fgradbound + \gamma
    \ggradbound\right)^2}{\lambda^2 \tau^4}
    \eqperiod
  \end{align*}
  Defining $\epsilon = \expectation\left[ \norm{\Pi_g(\bar{w}) - w^*}_2^2
  \right]$ and rearranging:
  \begin{align*}
    \MoveEqLeft \lambda^2 \epsilon \left(\frac{\tau}{\left(1 + \ln \tau
    \right)^{\nicefrac{1}{3}}}\right)^4 \\
    & \le 10 \left(1 + \ln m \right) \left(\fgradbound + \gamma \ggradbound +
    \gamma \glipschitz \right)^2 \left(\frac{\tau}{\left(1 + \ln \tau
    \right)^{\nicefrac{1}{3}}}\right) +
    3 m^2 \left(1 + \ln m\right)^2 \left(\fgradbound + \gamma
    \ggradbound\right)^2
    \eqperiod
  \end{align*}
  We will now upper-bound all roots of the above equation with a quantity
  $\tau_{\epsilon}$, for which all $\tau \ge \tau_{\epsilon}$ will result in
  $\epsilon$-suboptimality. By the Fujiwara
  bound~\citep{WikipediaPolynomialRoots}, and including the constraint that
  $\tau \ge 1$:
  \begin{align*}
    \frac{\tau_{\epsilon}}{\left(1 + \ln \tau_{\epsilon}
    \right)^{\nicefrac{1}{3}}} \le &
    \max\left\{
    1,
    2\left( \frac{10 \left(1 + \ln m \right) \left(\fgradbound + \gamma
    \ggradbound + \gamma \glipschitz \right)^2} {\lambda^2 \epsilon}
    \right)^{\nicefrac{1}{3}},
    \right. \\
    & \left.
    2\left( \frac{3 m^2 \left(1 + \ln m\right)^2 \left(\fgradbound + \gamma
    \ggradbound\right)^2} {2 \lambda^2 \epsilon} \right)^{\nicefrac{1}{4}}
    \right\}
    \eqperiod
  \end{align*}
  Substituting the above bound on $\tau_{\epsilon}$ into the definitions of
  $T_1$ and $T_2$ gives the claimed magnitudes of these $T_{\epsilon 1}$ and
  $T_{\epsilon 2}$, and using the fact that the $C_{\epsilon} = O(mT_{\epsilon
  1} + T_{\epsilon 2})$ gives the claimed bound on $C_{\epsilon}$.
\end{prf}

\medskip

Notice that the above theorem bounds not the suboptimality of $\Pi_g(\bar{w})$,
but rather its squared Euclidean distance from $w^*$, for which reason the
denominator of the highest order term depends on $\lambda^2$ rather than
$\lambda$.
Like \thmref{light} in the non-strongly convex case, the dominant terms above,
both in terms of the total number of iterations and number of constraint
checks, match the usual $1/\epsilon$ convergence rate for unconstrained
strongly-convex SGD with an additional $\gamma^2 \ln m$ factor, while the
lower-order terms have a worse $m$-dependence. As before, fewer constraint
checks will be performed per iteration as $\epsilon$ shrinks, reaching a
constant number (on average) once $\epsilon$ is on the order of $1/m^6$.

\section{Theoretical Comparison}\label{sec:comparison}

\begin{table*}[t]

\caption{
  Comparison of the number of iterations, and number of constraint checks,
  required to achieve $\epsilon$-suboptimality with high probability when
  optimizing a non-strongly-convex objective, up to constant and logarithmic
  factors, dropping the $\glipschitz$, $\fgradbound$ and $\ggradbound$
  dependencies, and ignoring the one-time cost of projecting the final result
  in \fullalg and \lightalg. For LLO-FW, the parameter to the local linear
  oracle has magnitude $O(\sqrt{d} \nu)$.
  See \secref{comparison}, \appref{full-light}, the non-smooth stochastic
  result of \citet[Theorem 4.3]{HazanKa12}, and \citet[Theorem 2]{GarberHa13}.
  Notice that because this table compares upper bounds to upper bounds,
  subsequent work may improve these bounds further.
  %
  %Observe that by \lemref{gamma}, we may take $\gamma \propto - \wdiameter /
  %g_{\min}$, where $g_{\min} = \inf_{w\in\mathcal{W}} g(w)$.
  %
}

\label{tab:comparison-light}

\begingroup
\renewcommand*{\arraystretch}{1.5}

\begin{center}

\begin{tabular}{r|c|c}
  \hline
  & \textbf{\#Iterations to achieve} & \textbf{\#Constraint checks to achieve} \\
  & \textbf{$\epsilon$-suboptimality} & \textbf{$\epsilon$-suboptimality} \\
  \hline
  \textbf{\fullalg} &
  $\frac{\gamma^2 \wdiameter^2}{\epsilon^2}$ &
  $\frac{m \gamma^2 \wdiameter^2}{\epsilon^2}$ \\
  \textbf{\lightalg} &
  $\frac{\left(\ln m\right) \gamma^2 \wdiameter^4}{\epsilon^2}$ &
  $\frac{\left(\ln m\right) \gamma^2 \wdiameter^4}{\epsilon^2} +
  \frac{m \left(\ln m\right)^{\nicefrac{3}{2}} \gamma^{\nicefrac{3}{2}} \wdiameter^3}{\epsilon^{\nicefrac{3}{2}}}$ \\
  \textbf{Projected SGD} &
  $\frac{\wdiameter^2}{\epsilon^2}$ &
  N/A (projection) \\
  \textbf{Online Frank-Wolfe} &
  $\frac{\wdiameter^3}{\epsilon^3}$ &
  N/A (linear optimization) \\
  \textbf{LLO-FW} &
  $\frac{d \nu^2 \wdiameter^2}{\epsilon^2}$ &
  N/A (local linear oracle) \\
  \hline
\end{tabular}

\end{center}

\endgroup

\end{table*}

\begin{table*}[t]

\caption{
  Same as \tabref{comparison-light}, except that the results bound the
  number of iterations or constraint checks required to achieve
  $\expectation[\norm{w-w^*}_2^2] \le \epsilon$, and the objective function is
  assumed to be $\lambda$-strongly convex. The bound given for \fullalg assumes
  that the constant $\eta$ used in \algref{full} has been replaced with the
  standard decreasing $1/\lambda t$ step size used in strongly-convex SGD. The
  \midalg bounds each contain four terms, listed in order of most-to-least
  dominant (in $\epsilon$). For LLO-FW, the parameter to the local linear
  oracle has magnitude $O(\sqrt{d} \nu)$.
  See \secref{comparison}, \appref{mid}, and \citet[Theorem 3]{GarberHa13}.
  %
  %Observe that by \lemref{gamma}, we may take $\gamma \propto - \wdiameter /
  %g_{\min}$, where $g_{\min} = \inf_{w\in\mathcal{W}} g(w)$.
  %
}

\label{tab:comparison-mid}

\begingroup
\renewcommand*{\arraystretch}{1.5}

\begin{center}

\begin{tabular}{r|c}
  \hline
  & \textbf{\#Iterations to achieve $\epsilon$-suboptimality} \\
  \hline
  \textbf{\fullalg} &
  $\frac{\gamma^2 \wdiameter^2}{\lambda^2 \epsilon}$ \\
  \textbf{\midalg} &
  $\frac{\left(\ln m\right) \gamma^2 \wdiameter^2}{\lambda^2 \epsilon}
  + \frac{m^{\nicefrac{3}{2}} \left(\ln m\right)^{\nicefrac{3}{2}} \gamma^{\nicefrac{3}{2}} \wdiameter^{\nicefrac{3}{2}}}{\lambda^{\nicefrac{3}{2}} \epsilon^{\nicefrac{3}{4}}}
  + \frac{m \left(\ln m\right)^{\nicefrac{2}{3}} \gamma^{\nicefrac{4}{3}} \wdiameter^{\nicefrac{4}{3}}}{\lambda^{\nicefrac{4}{3}} \epsilon^{\nicefrac{2}{3}}}
  + \frac{m^2 \left(\ln m\right) \gamma \wdiameter}{\lambda \sqrt{\epsilon}}$ \\
  \textbf{Projected SGD} &
  $\frac{\wdiameter^2}{\lambda^2 \epsilon}$ \\
  \textbf{LLO-FW} &
  $\frac{d \nu^2 \wdiameter^2}{\lambda^2 \epsilon}$ \\
  \hline
  & \textbf{\#Constraint checks to achieve $\epsilon$-suboptimality} \\
  \hline
  \textbf{\fullalg} &
  $\frac{m \gamma^2 \wdiameter^2}{\lambda^2 \epsilon}$ \\
  \textbf{\midalg} &
  $\frac{\left(\ln m\right) \gamma^2 \wdiameter^2}{\lambda^2 \epsilon}
  + \frac{m^{\nicefrac{3}{2}} \left(\ln m\right)^{\nicefrac{3}{2}} \gamma^{\nicefrac{3}{2}} \wdiameter^{\nicefrac{3}{2}}}{\lambda^{\nicefrac{3}{2}} \epsilon^{\nicefrac{3}{4}}}
  + \frac{m^2 \left(\ln m\right)^{\nicefrac{2}{3}} \gamma^{\nicefrac{4}{3}} \wdiameter^{\nicefrac{4}{3}}}{\lambda^{\nicefrac{4}{3}} \epsilon^{\nicefrac{2}{3}}}
  + \frac{m^3 \left(\ln m\right) \gamma \wdiameter}{\lambda \sqrt{\epsilon}}$ \\
  \textbf{Projected SGD} &
  N/A (projection) \\
  \textbf{LLO-FW} &
  N/A (local linear oracle) \\
  \hline
\end{tabular}

\end{center}

\endgroup

\end{table*}

\tabref{comparison-light} compares upper bounds on the convergence rates and
per-iteration costs when applied to a convex (but not necessarily strongly
convex) problem for \lightalg, \fullalg, projected SGD, the online Frank-Wolfe
algorithm of \citet{HazanKa12}, and a Frank-Wolfe-like online algorithm for
optimization over a polytope~\citep{GarberHa13}. The latter algorithm, which we
refer to as LLO-FW, achieves convergence rates comparable to projected SGD, but
uses a local linear oracle instead of a projection or full linear optimization.
To simplify the presentation, the dependencies on $\glipschitz$, $\fgradbound$
and $\ggradbound$ have been dropped---please refer to \thmrefs{light}{mid} and
the cited references for the complete statements.
\tabref{comparison-mid} contains the same comparison (without online
Frank-Wolfe) for $\lambda$-strongly convex problems.

At each iteration, all of these algorithms must find a stochastic subgradient
of $f$. In addition, each iteration of \lightalg and \midalg must perform
$O(m)$ arithmetic operations (for the $m$-dimensional vector operations used
when updating $p$)---this issue will be discussed further in
\secref{practical:aggregation}. However, projected SGD must project its iterate
onto the constraints \wrt the Euclidean norm, online Frank-Wolfe must perform a
linear optimization subject to the constraints, and LLO-FW must evaluate a
local linear oracle, which amounts to essentially local linear optimization.

\lightalg, \midalg and \fullalg share the same $\gamma$-dependence, but the
$m$-dependence of the convergence rate of \lightalg and \midalg is
logarithmically worse. The number of constraint evaluations, however, is
better: in the non-strongly convex case, ignoring all but the $m$ and
$\epsilon$ dependencies, \fullalg will check $O(m / \epsilon^2)$ constraints,
while \lightalg will check only $\tilde{O}((\ln m)/\epsilon^2 +
m/\epsilon^{\nicefrac{3}{2}})$, a significant improvement when $\epsilon$ is
small. Hence, particularly for problems with many expensive-to-evaluate
constraints, one would expect \lightalg to converge much more rapidly.
Likewise, for $\lambda$-strongly convex optimization, the dominant (in
$\epsilon$) terms in the bounds on the number of constraint evaluations go as
$m / \epsilon$ for \fullalg, and as $(\ln m)/\epsilon$ for \midalg, although
the lower-order terms in the \midalg bound are significantly more complex than
in the non-strongly convex case (see \tabref{comparison-mid} for full details).

Comparing with projected SGD, online Frank-Wolfe and LLO-FW is less
straightforward, not only because we're comparing upper bounds to upper bounds
(with all of the uncertainty that this entails), but also because we must
relate the value of $\gamma$ to the cost of performing the required projection,
constrained linear optimization or local linear oracle evaluation.
We note, however, that for non-strongly convex optimization, the
$\epsilon$-dependence of the convergence rate bound is worse for online
Frank-Wolfe ($1/\epsilon^3$) than for the other algorithms ($1/\epsilon^2$),
and that unless the constraints have some special structure, performing a
projection can be a very expensive operation.

For example, with general linear inequality constraints, each constraint check
performed by \lightalg, \midalg or \fullalg requires $O(d)$ time, whereas each
linear program optimized by online Frank-Wolfe could be solved in $O(d^2 m)$
time~\citep[Chapter 10.1]{Nemirovski04}, and each projection performed by SGD
in $O((dm)^{\nicefrac{3}{2}})$ time~\citep{GoldfarbLi91}. When the constraints
are taken to be arbitrary convex functions, instead of linear functions,
projections may be even more difficult.

We believe that in many cases $\gamma^2$ will be roughly on the order of the
dimension $d$, or number of constraints $m$, whichever is smaller, although it
can be worse for difficult constraint sets (see \secref{constraints:gamma}). In
practice, we have found that a surprisingly small $\gamma$---we use $\gamma=1$
in our experiments (\secref{experiments})---often suffices to result in
convergence to a feasible solution. With this in mind, and in light of the fact
that a fast projection, linear optimization, or local linear oracle evaluation
may only be possible for particular constraint sets, we believe that our
algorithms compare favorably with the alternatives.

\section{Practical Considerations}\label{sec:practical}

\begin{algorithm*}[t]

\begin{pseudocode}
\codename \textbf{Hyperparameters:} $T$, $\eta_w$, $\eta_p$ \\
\codeline Initialize $w^{(1)} \in \mathcal{W}$ arbitrarily\\
\codeline Initialize $p^{(1)} \in \Delta^m$ to the uniform distribution\\
\codeline Initialize $\memgrad^{(1)}_j = \max\{0, g_j(w^{(1)})\}$ \codecomment{0 if $w^{(1)}$ is feasible}\\
\codeline For $t = 1$ to $T$:\\
\codeline \>Let $\eta^{(t)}_w = \eta_w / t$ if $f$ is strongly convex, $\eta_w / \sqrt{t}$ otherwise\\
%\codeline \>Let $\eta^{(t)}_p = \eta_p / \sqrt{t}$\\
\codeline \>Set $k^{(t)}_f$, $k^{(t)}_g$ and $k^{(t)}_p$ as described in \secref{practical:minibatching}\\
\codeline \>Sample $\fgrad^{(t)}_1, \dots, \fgrad^{(t)}_{k^{(t)}_f}$ \iid \codecomment{stochastic subgradients of $f(w^{(t)})$}\\
\codeline \>Sample $i^{(t)}_1, \dots, i^{(t)}_{k^{(t)}_g} \sim p^{(t)}$ \iid\\
\codeline \>Let $\wgrad^{(t)} = (1/k^{(t)}_f) \sum_{j=1}^{k^{(t)}_f} \fgrad^{(t)}_j + (\gamma / k^{(t)}_g) \sum_{j=1}^{k^{(t)}_g} \subgrad \max\{0, g_{i^{(t)}_j}(w^{(t)})\}$\\
\codeline \>Update $w^{(t+1)} = \Pi_w( w^{(t)} - \eta^{(t)}_w \wgrad^{(t)} )$ \codecomment{$\Pi_w$ projects its argument onto $\mathcal{W}$ \wrt $\norm{\cdot}_2$}\\
\codeline \>Sample $S^{(t)} \subseteq \{1,\dots,m\}$ with $\abs{S^{(t)}} = k^{(t)}_p$ uniformly without replacement\\
\codeline \>Let $\pgrad^{(t)} = \gamma \memgrad^{(t)} + (\gamma m / k^{(t)}_p) \sum_{j \in S^{(t)}} e_j ( \max\{0, g_j(w^{(t)})\} -  \memgrad_j^{(t)} )$\\ % \codecomment{expected value $\gamma \sum_{i=1}^m e_i \max\{0, g_i(w^{(t)})\}$}\\
\codeline \>Let $\memgrad^{(t+1)}_j = \max\{0, g_j(w^{(t)})\}$ if $j \in S^{(t)}$, otherwise $\memgrad^{(t+1)}_j = \memgrad^{(t)}_j$\\
%\codeline \>Update $\tilde{p}^{(t+1)} = \exp( \ln p^{(t)} + \eta^{(t)}_p \pgrad^{(t)} )$ \codecomment{element-wise $\exp$ and $\ln$}\\
\codeline \>Update $\tilde{p}^{(t+1)} = \exp( \ln p^{(t)} + \eta_p \pgrad^{(t)} )$ \codecomment{element-wise $\exp$ and $\ln$}\\
\codeline \>Project $p^{(t+1)} = \tilde{p}^{(t+1)} / \norm{\tilde{p}^{(t+1)}}_1$\\
\codeline Average $\bar{w} = (\sum_{t=1}^T w^{(t)}) / T$\\
\codeline Return $\Pi_g(\bar{w})$ \codecomment{optional if small constraint violations are acceptable}
\end{pseudocode}

\caption{
  \textbf{(Practical \lightname)} Our proposed ``practical'' algorithm
  combining \lightalg and \midalg, along with the changes discussed in
  \secref{practical}.
}

\label{alg:practical}

\end{algorithm*}

\algrefs{light}{mid} were designed primarily to be easy to analyze, but in real
world-applications we recommend making a few tweaks to improve performance.
The first of these is trivial: using a decreasing $w$-update step size
$\eta^{(t)}_w = \eta_w / \sqrt{t}$ when optimizing a non-strongly convex
objective, and $\eta^{(t)}_w = \eta_w / t$ for a strongly-convex objective. In
both cases we continue to use a constant $p$-update step size $\eta_p$. This
change, as well as that described in \secref{practical:minibatching}, is
included in \algref{practical}.

\subsection{Constraint Aggregation}\label{sec:practical:aggregation}

A natural concern about \algrefs{light}{mid} is that $O(m)$ arithmetic
operations are performed per iteration, even when only a few constraints are
checked.
When each constraint is expensive, this is a minor issue, since this cost will
be ``drowned out'' by that of checking the constraints. However, when the
constraints are very cheap, and the $O(m)$ arithmetic cost compares
disfavorably with the cost of checking a handful of constraints, it can become
a bottleneck.

Our solution to this issue is simple: transform a problem with a large number
of cheap constraints into one with a smaller number of more expensive
constraints. To this end, we partition the constraint indices $1,\dots,m$ into
$\tilde{m}$ sets $\{M_i\}$ of size at most $\lceil m/\tilde{m} \rceil$,
defining $\tilde{g}_i( w ) = \max_{j \in M_i} g_j( w )$, and then apply
\lightalg or \midalg on the $\tilde{m}$ aggregated constraints $\tilde{g}_i(w)
\le 0$.
This makes each constraint check $\lceil m / \tilde{m} \rceil$ times more
expensive, but reduces the dimension of $p$ from $m$ to $\tilde{m}$, shrinking
the per-iteration arithmetic cost to $O(\tilde{m})$.

\subsection{Automatic Minibatching}\label{sec:practical:minibatching}

Because \lightalg takes a minibatch size $k$ as a parameter, and the constants
from which we derive the recommended choice of $k$ (\thmref{light}) are often
unknown, a user is in the uncomfortable position of having to perform a
parameter search not only over the step sizes $\eta_w$ and $\eta_p$, but also
the minibatch size.
Furthermore, the fact that the theoretically-recommended $k$ is a decreasing
function of $T$ indicates that it might be better to check more constraints in
early iterations, and fewer in later ones.
Likewise, \midalg is structured as a two-phase algorithm, in which every
iteration checks every constraint in the first phase, and only a constant
number in the second, but it seems more sensible for the number of constraint
checks to decrease \emph{gradually} over time.

In addition, for both algorithms, it would be desirable to support separate
minibatching of the loss and constraint stochastic subgradients (\wrt $w$), in
which case there would be three minibatching parameters to determine: $k_f$,
$k_g$ and $k_p$. This makes things even harder for the user, since now there
are \emph{three} additional parameters that must be specified.

To remove the need to specify any minibatch-size hyperparameters, and to enable
the minibatch sizes to change from iteration-to-iteration, we propose a
heuristic that will automatically determine the minibatch sizes $k^{(t)}_f$,
$k^{(t)}_g$ and $k^{(t)}_p$ for each of the stochastic gradient components at
each iteration. Intuitively, we want to choose minibatch sizes in such a way
that the stochastic gradients are both cheap to compute and have low variance.
Our proposed heuristic does this by trading-off the computational cost and
``bound impact'' of the overall stochastic gradient, where the ``bound impact''
is a variance-like quantity that approximates the impact that taking a step
with particular minibatch sizes has on the relevant convergence rate bound.

Suppose that we're about to perform the $t$th iteration, and know that a
\emph{single} stochastic subgradient $\fgrad$ of $f(w)$ (corresponding to the
loss portion of $\wgrad$) has variance (more properly, covariance matrix trace)
$\bar{v}^{(t)}_f$ and requires a computational investment of $\bar{c}^{(t)}_f$
units.
Similarly, if we define $\ggrad$ by sampling $i \sim p$ and taking $\ggrad =
\gamma \subgrad \max\{0, g_i(w)\}$ (corresponding to the constraint portion of
$\wgrad$), then we can define variance and cost estimates of $\ggrad$ to be
$\bar{v}^{(t)}_g$ and $\bar{c}^{(t)}_g$, respectively. Likewise, we take
$\bar{v}^{(t)}_p$ and $\bar{c}^{(t)}_p$ to be estimates of the variance and
cost of a (non-minibatched version of) $\pgrad$.

In all three cases, the variance and cost estimates are those of a \emph{single
sample}, implying that a stochastic subgradient of $f(w)$ averaged over a
minibatch of size $k^{(t)}_f$ will have variance $\bar{v}^{(t)}_f / k^{(t)}_f$
and require a computational investment of $\bar{c}^{(t)}_f k^{(t)}_f$, and
likewise for the constraints and distribution.
In the context of \algref{practical}, with minibatch sizes of $k^{(t)}_f$,
$k^{(t)}_g$ and $k^{(t)}_p$, we define the overall bound impact $b$ and
computational cost $c$ of a single update as:
\begin{equation*}
  b = \frac{\eta^{(t)}_w \bar{v}^{(t)}_f}{k^{(t)}_f} + \frac{\eta^{(t)}_w
  \bar{v}^{(t)}_g}{k^{(t)}_g} + \frac{\eta_p \bar{v}^{(t)}_p}{k^{(t)}_p}
  \;\;\;\;\;\; \mbox{and} \;\;\;\;\;\;
  c = \bar{c}^{(t)}_f k^{(t)}_f + \bar{c}^{(t)}_g k^{(t)}_g + \bar{c}^{(t)}_p
  k^{(t)}_p
  \eqperiod
\end{equation*}
We should emphasize that the above definition of $b$ is merely a useful
approximation of how these quantities truly affect our bounds.

Given the three variance and three cost estimates, we choose minibatch sizes in
such a way as to minimize both the computational cost and bound impact of an
update. Imagine that we are given a fixed computational budget $c$. Then our
goal will be to choose the minibatch sizes in such a way that $b$ is minimized
for this budget, a problem that is easily solved in closed form:
\begin{equation*}
  \left[k^{(t)}_f, k^{(t)}_g, k^{(t)}_p\right] \propto
  \left[\sqrt{\frac{\eta^{(t)}_w \bar{v}^{(t)}_f}{\bar{c}^{(t)}_f}},
  \sqrt{\frac{\eta^{(t)}_w \bar{v}^{(t)}_g}{\bar{c}^{(t)}_g}},
  \sqrt{\frac{\eta_p \bar{v}^{(t)}_p}{\bar{c}^{(t)}_p}}\right]
  \eqperiod
\end{equation*}
We propose choosing the proportionality constant (and thereby the cost budget
$c$) in such a way that $k^{(t)}_f = 2$ (enabling us to calculate sample
variances, as explained below), and round the two other sizes to the nearest
integers, lower-bounding each so that $k^{(t)}_g \ge 2$ and $k^{(t)}_p \ge 1$.

While the variances and costs are not truly \emph{known} during optimization,
they are easy to estimate from known quantities. For the costs
$\bar{c}^{(t)}_f$, $\bar{c}^{(t)}_g$ and $\bar{c}^{(t)}_p$, we simply time how
long each past stochastic gradient calculation has taken, and then average them
to estimate the future costs. For the variances $\bar{v}^{(t)}_f$ and
$\bar{v}^{(t)}_g$, we restrict ourselves to minibatch sizes
$k^{(t)}_f,k^{(t)}_g \ge 2$, calculate the sample variances $v^{(t)}_f$ and
$v^{(t)}_g$ of the stochastic gradients at each iteration, and then average
over all past iterations (either uniformly, or a weighted average placing more
weight on recent iterations).

For $\bar{v}^{(t)}_p$, the situation is a bit more complicated, since the
$p$-updates are multiplicative (so we should use an $\ell^{\infty}$ variance)
and centered as in \eqref{johnson-zhang}.
%
%This $\ell^\infty$ variance (including minibatching) is:
%%
%\begin{align*} % \expectation\left[ \norm{\pgrad -
%\expectation\left[\pgrad\right]}_{\infty}^2 \right] % = & \gamma^2
%\expectation\left[ \max_i \left( \mu_i + \frac{m}{k} \mathbf{1}_{i\in S}
%\left(\max\left\{0, g_i(w)\right\} - \mu_i\right) - \max\left\{0,
%g_i(w)\right\} \right)^2 \right] \\ % = & \gamma^2 \expectation\left[ \max_i
%\left( \left(1 - \frac{m}{k} \mathbf{1}_{i\in S}\right)^2 \left( \mu_i -
%\max\left\{0, g_i(w)\right\} \right)^2 \right) \right] \\ % \le & \gamma^2
%\expectation\left[ \sum_{i=1}^m \left(1 - \frac{m}{k} \mathbf{1}_{i\in
%S}\right)^2 \left( \mu_i - \max\left\{0, g_i(w)\right\} \right)^2 \right] %
%\eqcomma % \end{align*}
%%
%where in the last step we've upper-bounded the troublesome $\max$ with a sum.
%We now use the fact that $S$ is sampled independently to give that:
%%
%\begin{align*} % \expectation\left[ \norm{\pgrad -
%\expectation\left[\pgrad\right]}_{\infty}^2 \right] % \le & \gamma^2
%\sum_{i=1}^m \expectation\left[ \left(1 - \frac{m}{k} \mathbf{1}_{i\in
%S}\right)^2 \right] \expectation\left[ \left( \mu_i - \max\left\{0,
%g_i(w)\right\} \right)^2 \right] \\ % \le & \frac{\gamma^2
%\left(m-k\right)}{k} \sum_{i=1}^m \expectation \left[ \left( \mu_i -
%\max\left\{0, g_i(w)\right\} \right)^2 \right] \\ % \le & \frac{\gamma^2
%m^2}{k} \expectation\left[\frac{1}{k} \sum_{i\in S} \left( \mu_i -
%\max\left\{0, g_i(w)\right\} \right)^2 \right] % \eqcomma % \end{align*}
%
Upper-bounding the $\ell^{\infty}$ norm with the $\ell^2$ norm and using the
fact that the minibatch $S^{(t)}$ is independently sampled yields the following
crude estimate:
%
%%
%\begin{align*} % \expectation\left[ \norm{\pgrad -
%\expectation\left[\pgrad\right]}_{\infty}^2 \right] % \le & \frac{\gamma^2
%m^2}{k} \expectation\left[\frac{1}{k} \sum_{i\in S} \left( \mu_i -
%\max\left\{0, g_i(w)\right\} \right)^2 \right] % \eqcomma % \end{align*}
%%
%indicating that we could calculate the following at each iteration (notice
%that we drop a single $1/k$ factor to account for the minibatching):
%
\begin{equation*}
  v^{(t)}_p = \gamma^2 m^2 \left( \frac{1}{k^{(t)}_p} \sum_{i\in S^{(t)}}
  \left( \mu_i - \max\left\{0, g_i\left(w^{(t)}\right)\right\} \right)^2
  \right)
  \eqcomma
\end{equation*}
We again average $v^{(t)}_p$ across past iterations to estimate
$\bar{v}^{(t)}_p$.

\section{Experiments}\label{sec:experiments}

We validated the performance of our practical variant of \lightalg
(\algref{practical}) on a \youtube ranking problem in the style of
\citet{Joachims02}, in which the task is to predict what a user will watch
next, given that they have just viewed a certain video. In this setting, a user
has just viewed video $a$, was presented with a list of candidate videos to
watch next, and clicked on $b^+$, with $b^-$ being the video immediately
preceding $b^+$ in the list (if $b^+$ was the first list element, then the
example is thrown out).

%We used an anonymized proprietary dataset consisting of $n = 612\,587$ training
%and $100\,000$ testing pairs of feature vectors $(x^+,x^-)$, where $x^+$ is a
%vector of $12$ features summarizing the similarity between $a$ and $b^+$, and
%$x^-$ between $a$ and $b^-$.
We used an anonymized proprietary dataset consisting of $n = 612\,587$ training
pairs of feature vectors $(x^+,x^-)$, where $x^+$ is a vector of $12$ features
summarizing the similarity between $a$ and $b^+$, and $x^-$ between $a$ and
$b^-$.

We treat this as a standard pairwise ranking problem, for which the goal is to
estimate a function $f(\Phi(x)) = \inner{w}{\Phi(x)}$ such that $f(\Phi(x^+)) >
f(\Phi(x^-))$ for as many examples as possible, subject to the appropriate
regularization (or, in this case, constraints). Specifically, the
(unconstrained) learning task is to minimize the average empirical hinge loss:
\begin{equation*}
  \min_{w\in\mathcal{W}} \frac{1}{n} \sum_{i=1}^n \left( \max\left\{0, 1 -
  \inner{w}{\Phi\left(x_i^+\right) - \Phi\left(x_i^-\right)}\right\} \right)
  \eqperiod
\end{equation*}
All twelve of the features were designed to provide positive evidence---in
other words, if any one increases (holding the others fixed), then we expect
$f(\Phi(x))$ to increase. We have found that using constraints to enforce this
monotonicity property results in a better model in practice.

We define $\Phi(\cdot)$ as in lattice regression using simplex
interpolation~\citep{GarciaArGu12,GuptaCoPfVoCaMaMoEs16}, an approach which
works well at combining a small number of informative features, and more
importantly (for our purposes) enables one to force the learned function to be
monotonic via linear inequality constraints on the parameters. For the
resulting problem, the feature vectors have dimension $d=2^{12}=4096$, we chose
$\mathcal{W}$ to be defined by the box constraints $-10 \le w_i \le 10$ in each
of the $4096$ dimensions, and the total number of monotonicity-enforcing
linear inequality constraints is $m=24\,576$.

Every $\Phi(x)$ contains only $d+1=13$ nonzeros and can be computed in $O(d \ln
d)$ time. Hence, stochastic gradients of $f$ are inexpensive to compute.
Likewise, checking a monotonicity constraint only requires a single comparison
between two parameter values, so although there are a large number of them,
each constraint is very inexpensive to check.

\subsection{Implementations}

We implemented all algorithms in \cplusplus. Before running our main
experiments, we performed crude parameter searches on a power-of-four grid (\ie
$\ldots,\nicefrac{1}{16},\nicefrac{1}{4},1,4,16,\ldots$). For each candidate
value we performed roughly $10\,000$ iterations, and chose the parameter that
appeared to result in the fastest convergence in terms of the objective
function.

\paragraph \lightname Our implementation of \lightname includes all of the
suggested changes of \secref{practical}, including the constraint aggregation
approach of \secref{practical:aggregation}, although we used no aggregation
until our timing comparison (\secref{experiments:timing}).
For automatic minibatching, we took weighted averages of the variance estimates
as $\bar{v}^{(t+1)} \propto v^{(t)} + \nu \bar{v}^{(t)}$. We found that
up-weighting recent estimates (taking $\nu<1$) resulted in a noticeable
improvement, but that the precise value of $\nu$ mattered little (we used
$\nu=0.999$).
Based on the grid search described above, we chose $\gamma=1$, $\eta_w = 16$
and $\eta_p = \nicefrac{1}{16}$.

\paragraph \fullname Our \fullname implementation differs from that in
\algref{full} only in that we used a decreasing step size $\eta_w^{(t)} =
\eta_w / \sqrt{t}$. As with \lightname, we chose $\gamma = 1$ and $\eta_w = 16$
based on a grid search.

\paragraph {ProjectedSGD} We implemented Euclidean projections onto lattice
monotonicity constraints using
%
%IPOPT~\citep{WatcherBi06}
IPOPT \citep{WatcherBi06}
to optimize the resulting sparse $4096$-dimensional quadratic program. However,
the use of a QP solver for projected SGD---a very heavyweight
solution---resulted in an implementation that was too slow to experiment with,
requiring nearly four minutes per projection (observe that our experiments each
ran for millions of iterations).

\paragraph {ApproxSGD} This is an approximate projected SGD implementation
using the fast approximate update procedure described in
\citet{GuptaCoPfVoCaMaMoEs16}, which is an active set method that, starting
from the current iterate, moves along the boundary of the feasible region,
adding constraints to the active set as they are encountered, until the desired
step is exhausted (this is reminiscent of the local linear oracles considered
by \citet{GarberHa13}). This approach is particularly well-suited to this
particular constraint set because (1) when checking constraints for possible
inclusion in the active set, it exploits the sparsity of the stochastic
gradients to only consider monotonicity constraints which could possibly be
violated, and (2) projecting onto an intersection of active monotonicity
constraints reduces to uniformly averaging every set of parameters that are
``linked together'' by active constraints. Like the other algorithms, we used
step sizes of $\eta_w^{(t)} = \eta_w / \sqrt{t}$ and chose $\eta_w = 64$ based
on the grid search (recall that $\eta_w = 16$ was better for the other two
algorithms).

In every experiment we repeatedly looped over a random permutation of the
training set, and generated plots by averaging over $5$ such runs (with the
same $5$ random permutations) for each algorithm.

\subsection{Constraint-check Comparison}\label{sec:experiments:checks}

\begin{figure*}[t]

\begin{center}

\begin{tabular}{lr}
  \includegraphics[width=\plotwidth]{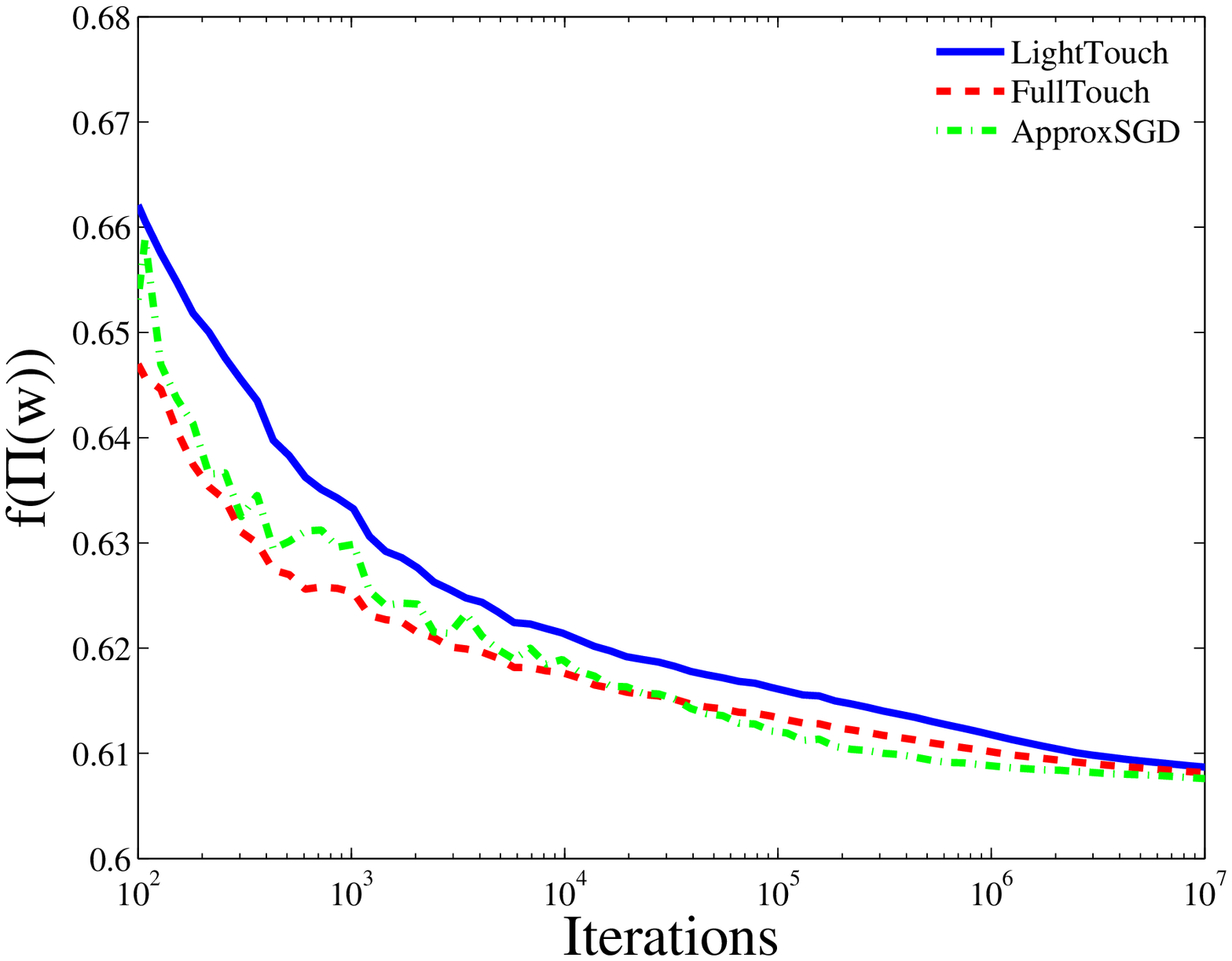} &
  \includegraphics[width=\plotwidth]{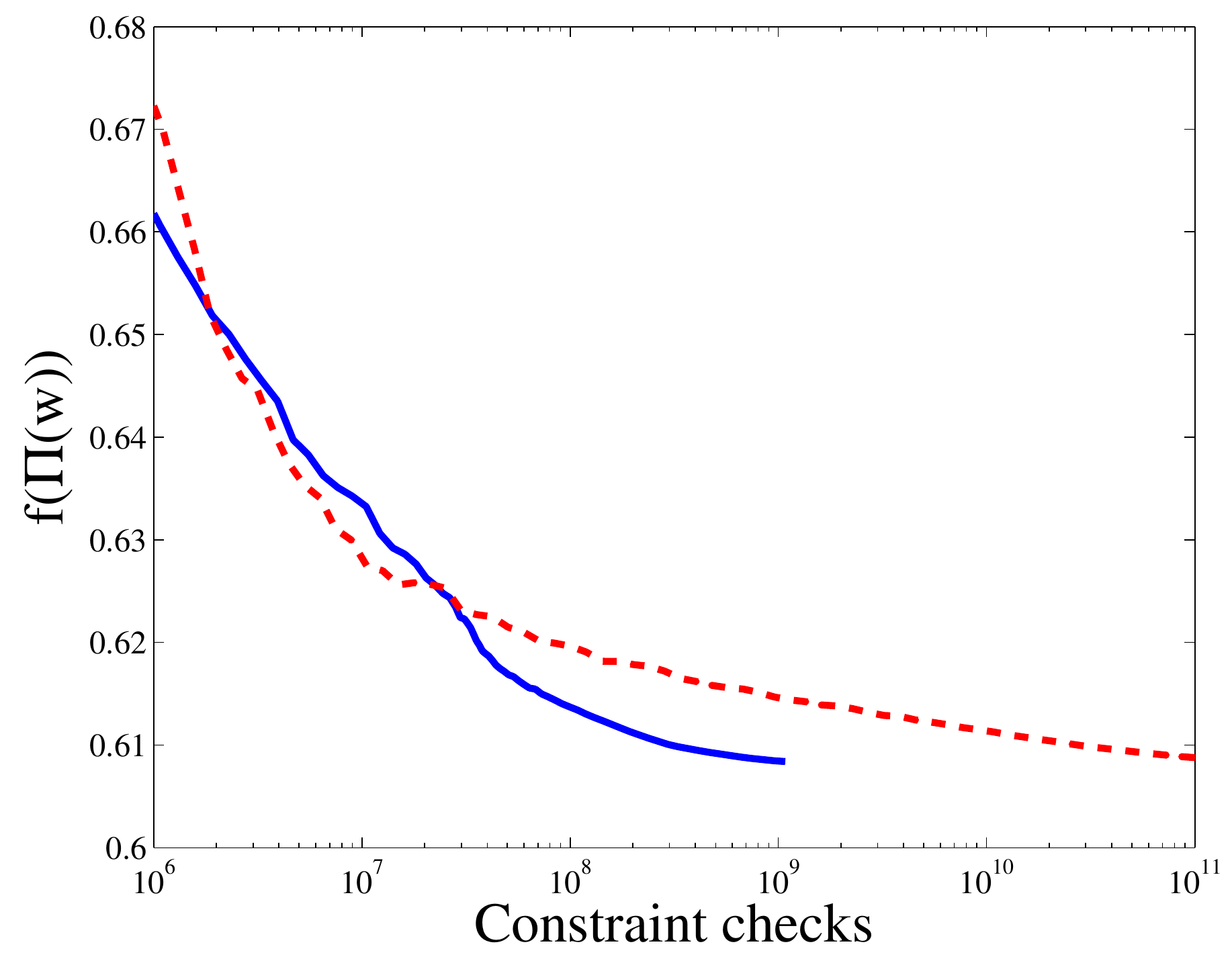}
\end{tabular}

\end{center}

\caption{
  Comparison of convergence rates of \lightalg, \fullalg and ApproxSGD on the
  \youtube ranking problem of \secref{experiments}. The two plots show the
  objective function (average training hinge loss) $f(\Pi_g(w^{(t)}))$ as a
  function of the number of iterations, and as a function of the total number
  of times a single constraint function $g_i$ was evaluated or differentiated,
  respectively.
}

\label{fig:checks}

\end{figure*}

\begin{figure*}[t]

\begin{center}

\begin{tabular}{lr}
  \includegraphics[width=\plotwidth]{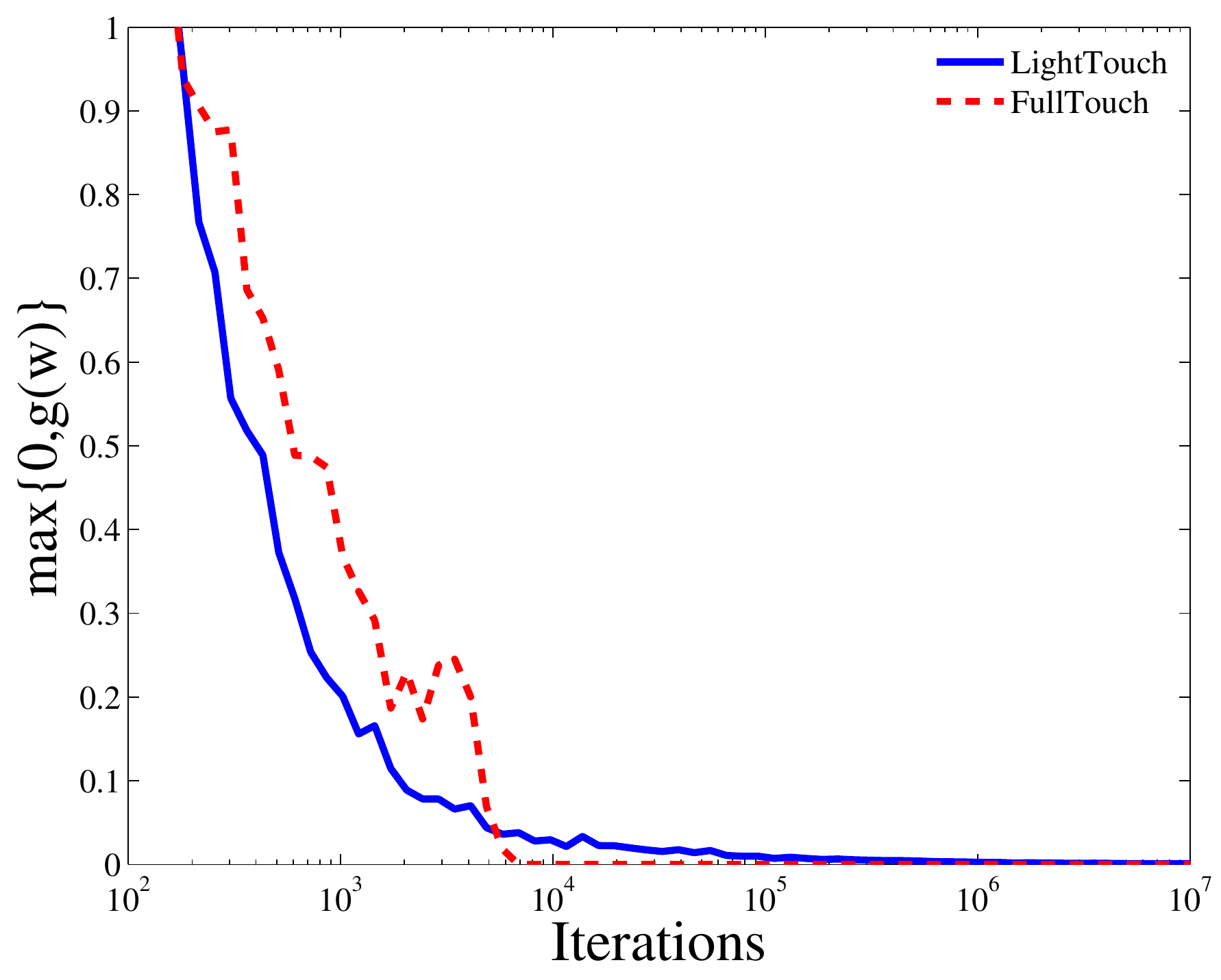} &
  \includegraphics[width=\plotwidth]{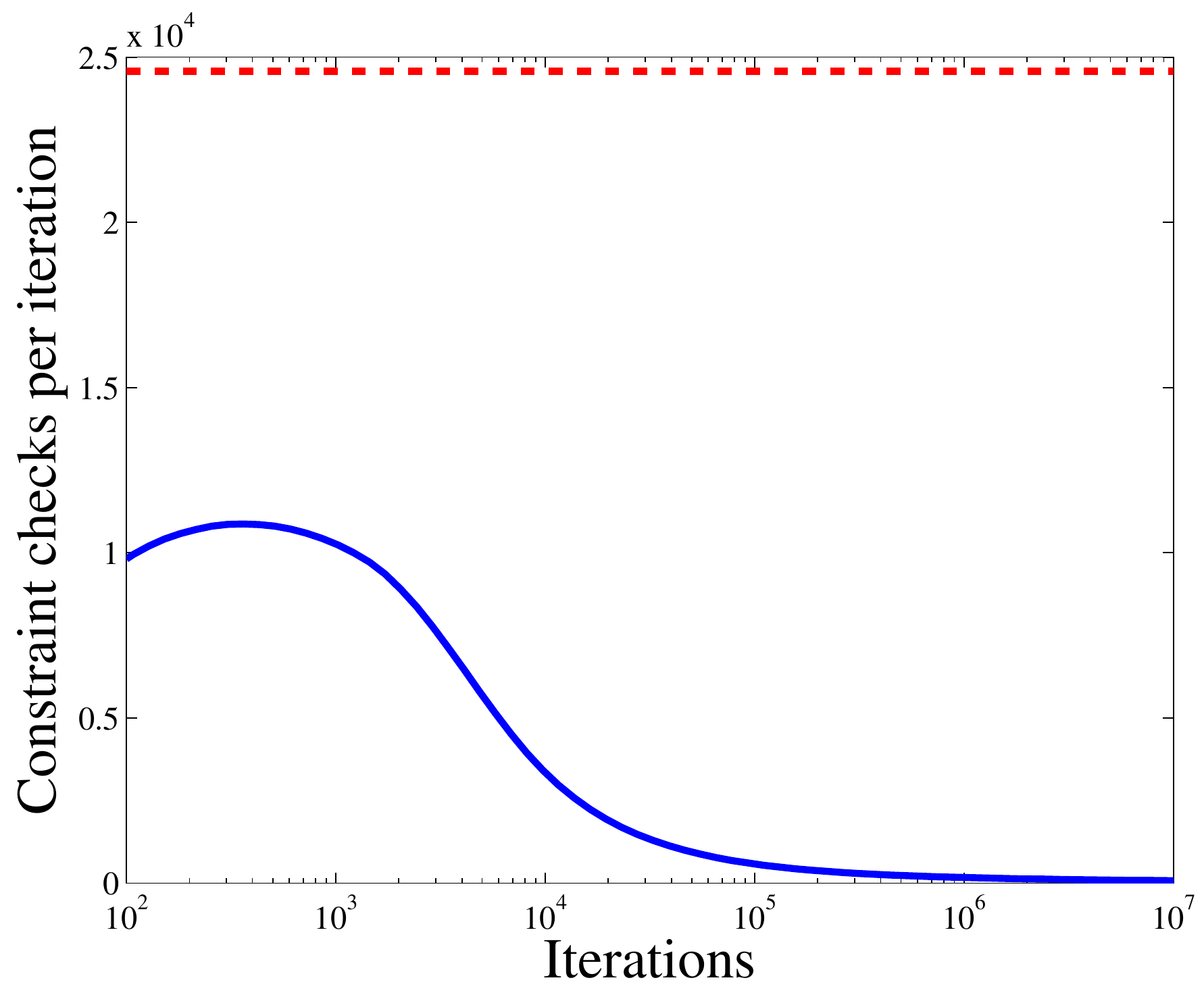}
\end{tabular}

\end{center}

\caption{
  Comparison constraint-handling of \lightalg and \fullalg on the \youtube
  ranking problem of \secref{experiments}. The two plots show the constraint
  violation magnitude $\max\{0, g(w^{(t)})\}$, and the average number of
  constraints checked per iteration up to this point, respectively, both as
  functions of the number of iterations.
}

\label{fig:constraints}

\end{figure*}

In our first set of experiments, we compared the performance of \lightname,
\fullname and ApproxSGD in terms of the number of stochastic subgradients of
$f$ drawn, and the number of constraints checked. Because \lightname's
automatic minibatching fixes $k^{(t)}_f = 2$ (with the other two minibatch
sizes being automatically determined), in these experiments we used minibatch
sizes of $2$ for \fullalg and ApproxSGD, guaranteeing that all three algorithms
observe the same number of stochastic subgradients of $f$ at each iteration.

The left-hand plot of \figref{checks} shows that all three algorithms converge
at roughly comparable per-iteration rates, with ApproxSGD having a slight
advantage over \fullname, which itself converges a bit more rapidly than
\lightname.
The right-hand plot shows a striking difference, however---\lightname reaches a
near-optimal solution having checked more than $10\times$ fewer constraints
than \fullname.
Notice that we plot the suboptimalities of the projected iterates
$\Pi_w(w^{(t)})$ rather than of the $w^{(t)}$s themselves, in order to emulate
the final projection (line 7 of \algref{full} and 17 of \algref{practical}),
and guarantee that we only compare the average losses of \emph{feasible}
intermediate solutions.

In \figref{constraints}, we explore how well our algorithms enforce
feasibility, and how effective automatic minibatching is at choosing minibatch
sizes. The left-hand plot shows that both \fullname has converged to a
nearly-feasible solution after roughly $10\,000$ iterations, and \lightname
(unsurprisingly) takes more, perhaps $100\,000$ or so.
In the right-hand plot, we see that, in line with our expectations (see
\secref{practical:minibatching}), \lightname's automatic minibatching results
in very few constraints being checked in late iterations.

\subsection{Timing Comparison}\label{sec:experiments:timing}

\begin{wrapfigure}{r}{0.5\textwidth}

\vspace{-4em}

\begin{center}

\includegraphics[width=\plotwidth]{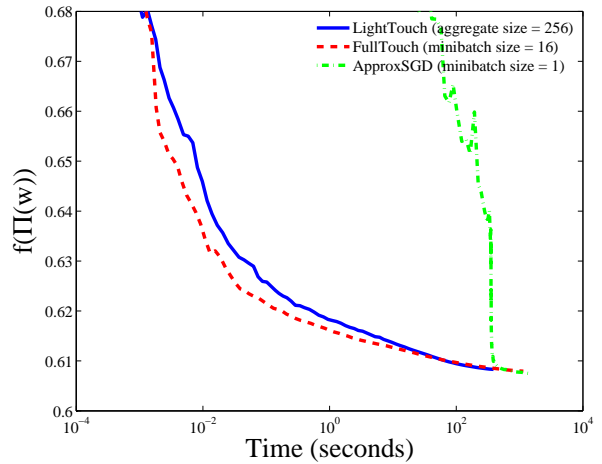}

\end{center}

\caption{
  Plot of the objective function (average training hinge loss)
  $f(\Pi_g(w^{(t)}))$ as a function of runtime for our implementations of
  \lightalg, \fullalg and ApproxSGD, on the \youtube ranking problem of
  \secref{experiments}.
}

\label{fig:timing}

\end{wrapfigure}

Our final experiment compared the wall-clock runtimes of our implementations.
Note that, because each monotonicity constraint can be checked with only a
single comparison (compare with \eg $O(d)$ arithmetic operations for a dense
linear inequality constraint), the $O(m)$ arithmetic cost of maintaining and
updating the probability distribution $p$ over the constraints is significant.
Hence, in terms of the constraint costs, this is nearly a worse-case problem
for \lightalg.
We experimented with power-of-$4$ constraint aggregate sizes
(\secref{practical:aggregation}), and found that using $\tilde{m}=96$
aggregated constraints, each of size $256$, worked best.

\fullalg, without minibatching, draws a single stochastic subgradient of $f$
and checks every constraint at each iteration. However, it would seem to be
more efficient to use minibatching to look at more stochastic subgradients at
each iteration, and therefore fewer constraints per stochastic subgradient of
$f$. Hence, for \fullalg, we again searched over power-of-$4$ minibatch sizes,
and found that $16$ worked best.

For ApproxSGD, the situation is less clear-cut. On the one hand, increasing the
minibatch size results in fewer approximate projections being performed per
stochastic subgradient of $f$. On the other, averaging more stochastic
subgradients results in less sparsity, slowing down the approximate projection.
We found that the latter consideration wins out---after searching again over
power-of-$4$ minibatch sizes, we found that a minibatch size of $1$ (\ie no
minibatching) worked best.

\figref{timing} contains the results of these experiments, showing that both
\fullalg and \lightalg converge significantly faster than ApproxSGD.
Interestingly, ApproxSGD is rather slow in early iterations (clipped off in
plot), but accelerates in later iterations. We speculate that the reason for
this behavior is that, close to convergence, the steps taken at each iteration
are smaller, and therefore the active sets constructed during the approximate
projection routine do not grow as large.
\fullalg enjoys a small advantage over \lightalg until both algorithms are very
close to convergence, but based on the results of \secref{experiments:checks},
we believe that this advantage would reverse if there were more constraints, or
if the constraints were more expensive to check.

\section{Conclusions}\label{sec:conclusion}

We have proposed an efficient strategy for large-scale heavily constrained
optimization, building on the work of \citet{MahdaviYaJiYi12}, and analyze its
performance, demonstrating that, asymptotically, our approach requires many
fewer constraint checks in order to converge.

We build on these theoretical results to propose a practical variant. The most
significant of these improvements is based on the observation that our
algorithm takes steps based on three separate stochastic gradients, and that
trading off the variances of computational costs of these three components is
beneficial. To this end, we propose a heuristic for dynamically choosing
minibatch sizes in such a way as to encourage faster convergence at a lower
computational cost.

Experiments on a real-world $4096$-dimensional machine learning problem with
$24\,576$ constraints and $612\,587$ training examples---too large for a
QP-based implementation of projected SGD---showed that our proposed method is
effective. In particular, we find that, in practice, our technique checks fewer
constraints per iteration than competing algorithms, and, as expected, checks
ever fewer as optimization progresses.

\section*{Acknowledgments}

We thank Kevin Canini, Mahdi Milani Fard, Andrew Frigyik, Michael Friedlander
and Seungil You for helpful discussions, and proofreading earlier drafts.

\newpage
\clearpage

\bibliography{main}
\bibliographystyle{abbrvnat}

\label{document:middle}

\newpage
\clearpage
\appendix
\showproofstrue

\label{document:appendix}

\section{Mirror Descent}\label{sec:mirror}

\begin{table*}[t]

\caption{New notation in \appref{mirror}.}

\label{tab:notation-app-mirror}

\begin{center}

\begin{tabular}{lll}
  \hline
  \textbf{Symbol} & \textbf{Description} & \textbf{Definition} \\
  \hline
  $\mathcal{A}$ & Convex domain of dual variables & \\
  $\filtration$ & Filtration & $\mdgrad^{(t)}$, $\wgrad^{(t)}$, $\alphagrad^{(t)}$ are $\filtration_t$-measurable \\
  $\mdnorm{\cdot}$, $\mddualnorm{\cdot}$ & Unspecified norm and its dual & \\
  $\wnorm{\cdot}$, $\wdualnorm{\cdot}$ & Norm on $\mathcal{W}$ and its dual & \\
  $\alphanorm{\cdot}$, $\alphadualnorm{\cdot}$ & Norm on $\mathcal{A}$ and its dual & \\
  $\Psi, \Psi^*$ & A \dgf and its convex conjugate & \\
  $\Psi_w, \Psi_w^*$ & A \dgf on $\mathcal{W}$ and its convex conjugate & \\
  $\Psi_{\alpha}, \Psi_{\alpha}^*$ & A \dgf on $\mathcal{A}$ and its convex conjugate & \\
  $\mdgrad$ & Stochastic subgradient & \\
  $\wgrad$ & Primal stochastic subgradient & \\
  $\alphagrad$ & Dual stochastic supergradient & \\
  $\mdnormradiusstar$ & Bound on $w^*$-centered radius of $\mathcal{W}$ & $\mdnormradiusstar \ge \mdnorm{w - w^*}$ \\
  $\wnormradiusstar$ & Bound on $w^*$-centered radius of $\mathcal{W}$ & $\wnormradiusstar \ge \wnorm{w - w^*}$ \\
  $\alphanormradiusstar$ & Bound on $\alpha^*$-centered radius of $\mathcal{A}$ & $\alphanormradiusstar \ge \alphanorm{\alpha - \alpha^*}$ \\
  $\sigma$ & Bound on $\mdgrad$ error & $\sigma \ge \mddualnorm{\expectation[\mdgrad^{(t)} \mid \filtration_{t-1}] - \mdgrad^{(t)}}$ \\
  $\sigma_w$ & Bound on $\wgrad$ error & $\sigma_w \ge \wdualnorm{\expectation[\wgrad^{(t)} \mid \filtration_{t-1}] - \wgrad^{(t)}}$ \\
  $\sigma_{\alpha}$ & Bound on $\alphagrad$ error & $\sigma_{\alpha} \ge \alphadualnorm{\expectation[\alphagrad^{(t)} \mid \filtration_{t-1}] - \alphagrad^{(t)}}$ \\
  $1 - \delta_{\sigma}$ & Probability that $\sigma$ bound holds & \\
  $1 - \delta_{\sigma w}$ & Probability that $\sigma_w$ bound holds & \\
  $1 - \delta_{\sigma \alpha}$ & Probability that $\sigma_{\alpha}$ bound holds & \\
  \hline
\end{tabular}

\end{center}

\end{table*}

Mirror descent~\citep{NemirovskiYu83,BeckTe03} is a meta-algorithm for
stochastic optimization (more generally, online regret minimization) which
performs gradient updates with respect to a meta-parameter, the \emph{distance
generating function} (\dgf). The two most widely-used {\dgf}s are the squared
Euclidean norm and negative Shannon entropy, for which the resulting MD
instantiations are stochastic gradient descent (SGD) and a multiplicative
updating algorithm, respectively. These are precisely the two {\dgf}s which our
constrained algorithm will use for the updates of $w$ and $p$.
We'll here give a number of results which differ only slightly from
``standard'' ones, beginning with a statement of an online MD bound adapted
from \citet{SrebroSrTe11}:

\medskip
\begin{thm}{mirror}
  Let $\mdnorm{\cdot}$ and $\mddualnorm{\cdot}$ be a norm and its dual. Suppose
  that the distance generating function (\dgf) $\Psi$ is $1$-strongly convex
  \wrt $\mdnorm{\cdot}$. Let $\Psi^*$ be the convex conjugate of $\Psi$, and
  take $\bregman_{\Psi}(w \vert w') = \Psi(w) - \Psi(w') - \inner{\grad
  \Psi(w')}{w - w'}$ to be the associated Bregman divergence.

  Take $f_t : \mathcal{W} \rightarrow \R$ to be a sequence of convex functions
  on which we perform $T$ iterations of mirror descent starting from $w^{(1)}
  \in \mathcal{W}$:
  \begin{align*}
    \tilde{w}^{(t+1)} &= \grad \Psi^* \left( \grad \Psi \left( w^{(t)} \right)
    - \eta \subgrad f_t \left(w^{(t)}\right) \right) \eqcomma \\
    w^{(t+1)} &= \underset{w \in \mathcal{W}}{\argmin} \bregman_{\Psi}\left(w
    \tallmid \tilde{w}^{(t+1)}\right) \eqcomma
  \end{align*}
  where $\subgrad f_t(w^{(t)}) \in \partial f_t(w^{(t)})$ is a subgradient of
  $f_t$ at $w^{(t)}$. Then:
  \iftoggle{istwocolumn}{
    \begin{align*}
      \MoveEqLeft \frac{1}{T} \sum_{t=1}^T \left( f_t\left(w^{(t)}\right) -
      f_t\left(w^*\right) \right) \\
      \le & \frac{\bregman_{\Psi}\left( w^* \tallmid
      w^{(1)} \right)}{\eta T} + \frac{\eta}{2 T} \sum_{t=1}^T
      \mddualnorm{\subgrad f_t \left(w^{(t)}\right)}^2 \eqcomma
    \end{align*}
  }{
    \begin{equation*}
      \frac{1}{T} \sum_{t=1}^T \left( f_t\left(w^{(t)}\right) -
      f_t\left(w^*\right) \right) \le \frac{\bregman_{\Psi}\left( w^* \tallmid
      w^{(1)} \right)}{\eta T} + \frac{\eta}{2 T} \sum_{t=1}^T
      \mddualnorm{\subgrad f_t \left(w^{(t)}\right)}^2 \eqcomma
    \end{equation*}
  }
  where  $w^* \in \mathcal{W}$ is an arbitrary reference vector.
\end{thm}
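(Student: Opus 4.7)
The plan is to prove the bound one iteration at a time and then telescope. First I would use convexity of $f_t$ to replace the suboptimality by an inner product: $f_t(w^{(t)}) - f_t(w^*) \le \inner{\subgrad f_t(w^{(t)})}{w^{(t)} - w^*}$. The update rule is equivalent to $\grad \Psi(\tilde{w}^{(t+1)}) = \grad \Psi(w^{(t)}) - \eta \subgrad f_t(w^{(t)})$, so $\eta \subgrad f_t(w^{(t)}) = \grad \Psi(w^{(t)}) - \grad \Psi(\tilde{w}^{(t+1)})$, and I can translate the suboptimality inner product into one involving gradients of $\Psi$.

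The key algebraic step is the three-point identity for Bregman divergences: for any $u$,
\begin{equation*}
  \inner{\grad \Psi(w^{(t)}) - \grad \Psi(\tilde{w}^{(t+1)})}{\tilde{w}^{(t+1)} - u} = \bregman_{\Psi}(u \tallmid w^{(t)}) - \bregman_{\Psi}(u \tallmid \tilde{w}^{(t+1)}) - \bregman_{\Psi}(\tilde{w}^{(t+1)} \tallmid w^{(t)})
  \eqperiod
\end{equation*}
Splitting $w^{(t)} - w^* = (w^{(t)} - \tilde{w}^{(t+1)}) + (\tilde{w}^{(t+1)} - w^*)$ and applying this identity with $u = w^*$ to the second piece yields
\begin{equation*}
  \eta \inner{\subgrad f_t(w^{(t)})}{w^{(t)} - w^*} = \bregman_{\Psi}(w^* \tallmid w^{(t)}) - \bregman_{\Psi}(w^* \tallmid \tilde{w}^{(t+1)}) + \eta \inner{\subgrad f_t(w^{(t)})}{w^{(t)} - \tilde{w}^{(t+1)}} - \bregman_{\Psi}(\tilde{w}^{(t+1)} \tallmid w^{(t)})
  \eqperiod
\end{equation*}

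Next, I would handle the residual $\eta \inner{\subgrad f_t(w^{(t)})}{w^{(t)} - \tilde{w}^{(t+1)}} - \bregman_{\Psi}(\tilde{w}^{(t+1)} \tallmid w^{(t)})$ using Fenchel--Young together with $1$-strong convexity of $\Psi$: Young's inequality gives $\eta \inner{\subgrad f_t(w^{(t)})}{w^{(t)} - \tilde{w}^{(t+1)}} \le \tfrac{\eta^2}{2} \mddualnorm{\subgrad f_t(w^{(t)})}^2 + \tfrac{1}{2} \mdnorm{w^{(t)} - \tilde{w}^{(t+1)}}^2$, and strong convexity gives $\bregman_{\Psi}(\tilde{w}^{(t+1)} \tallmid w^{(t)}) \ge \tfrac{1}{2}\mdnorm{w^{(t)} - \tilde{w}^{(t+1)}}^2$, so the squared-norm terms cancel and only the dual norm of the subgradient remains. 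Then the projection step is cleaned up using the generalized Pythagorean inequality: since $w^{(t+1)} = \argmin_{w\in\mathcal{W}} \bregman_{\Psi}(w \tallmid \tilde{w}^{(t+1)})$ and $w^* \in \mathcal{W}$, we have $\bregman_{\Psi}(w^* \tallmid w^{(t+1)}) \le \bregman_{\Psi}(w^* \tallmid \tilde{w}^{(t+1)})$, letting me replace the unprojected divergence with the projected one.

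Combining gives the one-step bound $\eta(f_t(w^{(t)}) - f_t(w^*)) \le \bregman_{\Psi}(w^* \tallmid w^{(t)}) - \bregman_{\Psi}(w^* \tallmid w^{(t+1)}) + \tfrac{\eta^2}{2}\mddualnorm{\subgrad f_t(w^{(t)})}^2$. Summing over $t = 1,\ldots,T$ telescopes the Bregman divergences, and dropping the nonnegative final term $\bregman_{\Psi}(w^* \tallmid w^{(T+1)})$ yields the claim after dividing by $\eta T$. The main obstacle is bookkeeping around the three-point identity and making sure the direction of inequalities matches up --- in particular, ensuring that the strong convexity lower bound on $\bregman_{\Psi}(\tilde{w}^{(t+1)} \tallmid w^{(t)})$ exactly cancels the Young's-inequality $\tfrac{1}{2}\mdnorm{\cdot}^2$ term rather than flipping signs, and verifying that the projection step preserves divergence from the reference point $w^*$ (which is why we require $w^* \in \mathcal{W}$).
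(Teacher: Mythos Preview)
Your proposal is correct and follows essentially the same route as the paper: convexity, splitting $w^{(t)}-w^*$ through $\tilde{w}^{(t+1)}$, the three-point Bregman identity, Young's inequality plus $1$-strong convexity to cancel the $\tfrac12\mdnorm{\cdot}^2$ terms, and telescoping. If anything you are slightly more careful than the paper, which glosses over the projection step that you handle explicitly via the generalized Pythagorean inequality $\bregman_{\Psi}(w^*\mid w^{(t+1)})\le\bregman_{\Psi}(w^*\mid\tilde{w}^{(t+1)})$.
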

\begin{prf}{mirror}
  %
  %Essentially the same as the proof of \citet[Lemma 2]{SrebroSrTe11}.
  %
  This proof is essentially the same as that of \citet[Lemma 2]{SrebroSrTe11}.
  By convexity:
  \begin{align*}
    \eta \left( f_t\left(w^{(t)}\right) - f_t\left(w^*\right) \right) & \le
    \inner{\eta \subgrad f_t \left(w^{(t)}\right)}{w^{(t)} - w^*} \\
    & \le \inner{\eta \subgrad f_t \left(w^{(t)}\right)}{w^{(t)} -
    \tilde{w}^{(t+1)}} + \inner{\eta \subgrad f_t
    \left(w^{(t)}\right)}{\tilde{w}^{(t+1)} - w^*} \eqperiod
  \end{align*}
  By H\"older's inequality, $\inner{w'}{w} \le \mdnorm{w'} \mddualnorm{w}$.
  Also, $\Psi(w) = \sup_{v} (\inner{v}{w} - \Psi^*(v))$ is maximized when
  $\grad \Psi^*(v) = w$, so $\grad \Psi( \grad \Psi^*(v) ) = v$. These results
  combined with the definition of $\tilde{w}^{(t+1)}$ give:
  \begin{align*}
    \eta \left( f_t\left(w^{(t)}\right) - f_t\left(w^*\right) \right) & \le
    \mddualnorm{\eta \subgrad f_t \left(w^{(t)}\right)} \mdnorm{w^{(t)} -
    \tilde{w}^{(t+1)}} \\
    & + \inner{ \grad \Psi \left( w^{(t)} \right) - \grad \Psi \left(
    \tilde{w}^{(t+1)} \right) }{\tilde{w}^{(t+1)} - w^*} \eqperiod
  \end{align*}
  Using Young's inequality and the definition of the Bregman divergence:
  \begin{align*}
    \eta \left( f_t\left(w^{(t)}\right) - f_t\left(w^*\right) \right) & \le
    \frac{1}{2}\mddualnorm{\eta \subgrad f_t \left(w^{(t)}\right)}^2 +
    \frac{1}{2}\mdnorm{w^{(t)} - \tilde{w}^{(t+1)}}^2 \\
    & + \bregman_{\Psi}\left( w^* \tallmid w^{(t)} \right) -
    \bregman_{\Psi}\left( w^* \tallmid \tilde{w}^{(t+1)} \right) -
    \bregman_{\Psi}\left( \tilde{w}^{(t+1)} \tallmid w^{(t)} \right) \eqperiod
  \end{align*}
  Applying the $1$-strong convexity of $\Psi$ to cancel the $\mdnorm{w^{(t)} -
  \tilde{w}^{(t+1)}}^2 / 2$ and $\bregman_{\Psi}( \tilde{w}^{(t+1)} \mid
  w^{(t)} ) $ terms:
  \begin{equation*}
    \eta \left( f_t\left(w^{(t)}\right) - f_t\left(w^*\right) \right) \le
    \frac{\eta^2}{2}\mddualnorm{\subgrad f_t \left(w^{(t)}\right)}^2 +
    \bregman_{\Psi}\left( w^* \tallmid w^{(t)} \right) - \bregman_{\Psi}\left(
    w^* \tallmid \tilde{w}^{(t+1)} \right) \eqperiod
  \end{equation*}
  Summing over $t$, using the nonnegativity of $\bregman_{\Psi}$, and dividing
  through by $\eta T$ gives the claimed result.
\end{prf}

\medskip

It is straightforward to transform \thmref{mirror} into an in-expectation
result for stochastic subgradients:

\medskip
\begin{cor}{mirror-in-expectation}
  %
  %Suppose that $f : \mathcal{W} \times \mathcal{Z} \rightarrow \R$ is convex in
  %its first parameter, $\filtration$ is a filtration, and $z^{(t)} \in
  %\mathcal{Z}$ is a sequence of random variables representing external sources
  %of randomness, with $z^{(t)}$ being $\filtration_t$-measurable. Define
  %$f_t(w) = f(w, z^{(t)})$, and suppose that we perform $T$ iterations of
  %stochastic mirror descent, using the definitions of \thmref{mirror}:
  %
  Take $f_t : \mathcal{W} \rightarrow \R$ to be a sequence of convex
  functions, and $\filtration$ a filtration. Suppose that we perform $T$
  iterations of stochastic mirror descent starting from $w^{(1)} \in
  \mathcal{W}$, using the definitions of \thmref{mirror}:
  \begin{align*}
    \tilde{w}^{(t+1)} &= \grad \Psi^* \left( \grad \Psi \left( w^{(t)} \right)
    - \eta \mdgrad^{(t)} \right) \eqcomma \\
    w^{(t+1)} &= \underset{w \in \mathcal{W}}{\argmin} \bregman_{\Psi}\left(w
    \tallmid \tilde{w}^{(t+1)}\right) \eqcomma
  \end{align*}
  where $\mdgrad^{(t)}$ is a stochastic subgradient of $f_t$, \ie
  $\expectation[\mdgrad^{(t)} \mid \filtration_{t-1}] \in \partial
  f_t(w^{(t)})$, and $\mdgrad^{(t)}$ is $\filtration_t$-measurable.  Then:
  \begin{equation*}
    \frac{1}{T} \sum_{t=1}^T \expectation\left[ f_t\left(w^{(t)}\right) -
    f_t\left(w^*\right) \right]
    \le \frac{\bregman_{\Psi}\left( w^* \tallmid w^{(1)} \right)}{\eta T} +
    \frac{\eta}{2 T} \sum_{t=1}^T \expectation\left[
    \mddualnorm{\mdgrad^{(t)}}^2 \right] \eqcomma
  \end{equation*}
  where $w^* \in \mathcal{W}$ is an arbitrary reference vector.
\end{cor}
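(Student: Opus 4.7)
The plan is to mimic the proof of Theorem~\ref{thm:mirror} essentially verbatim, substituting the stochastic subgradient $\mdgrad^{(t)}$ everywhere that proof used the deterministic $\subgrad f_t(w^{(t)})$, and then take expectations at the end.

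First, I would observe that after the initial convexity step in the proof of Theorem~\ref{thm:mirror}, the vector $\subgrad f_t(w^{(t)})$ is never again used as a subgradient: the remaining steps (H\"older's inequality, Young's inequality, the $1$-strong convexity of $\Psi$, and the definition of $\tilde{w}^{(t+1)}$ as a mirror step) manipulate it as an arbitrary vector. Running those same algebraic steps with $\mdgrad^{(t)}$ in place of $\subgrad f_t(w^{(t)})$ therefore yields the pathwise bound
\begin{equation*}
\eta \sum_{t=1}^T \inner{\mdgrad^{(t)}}{w^{(t)} - w^*} \le \frac{\eta^2}{2} \sum_{t=1}^T \mddualnorm{\mdgrad^{(t)}}^2 + \bregman_{\Psi}\left(w^* \tallmid w^{(1)}\right) \eqperiod
\end{equation*}

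Next, I would take expectations of both sides. The right-hand side is already of the desired form. For the left-hand side, the key observation is that $w^{(t)}$ is $\filtration_{t-1}$-measurable, since it depends only on $w^{(1)}$ and $\mdgrad^{(1)}, \dots, \mdgrad^{(t-1)}$. By the tower property,
\begin{equation*}
\expectation\left[\inner{\mdgrad^{(t)}}{w^{(t)} - w^*}\right] = \expectation\left[\inner{\expectation[\mdgrad^{(t)} \tallmid \filtration_{t-1}]}{w^{(t)} - w^*}\right] \eqperiod
\end{equation*}
Since $\expectation[\mdgrad^{(t)} \mid \filtration_{t-1}] \in \partial f_t(w^{(t)})$ by hypothesis, convexity of $f_t$ then gives the pointwise inequality $\inner{\expectation[\mdgrad^{(t)} \mid \filtration_{t-1}]}{w^{(t)} - w^*} \ge f_t(w^{(t)}) - f_t(w^*)$. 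Substituting into the summed bound and dividing through by $\eta T$ produces the stated result.

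The only place to be careful is the conditioning argument: one must verify that $w^{(t)} - w^*$ is $\filtration_{t-1}$-measurable so that it can be pulled out of the inner conditional expectation, and that $\mdgrad^{(t)}$ being $\filtration_t$-measurable together with the stated conditional-unbiasedness assumption is all that is needed. Both follow immediately from the recursion defining $w^{(t)}$ and the hypotheses of the corollary, so no real obstacle is expected; the corollary is essentially a routine stochastic-to-deterministic reduction of Theorem~\ref{thm:mirror} via the tower property.
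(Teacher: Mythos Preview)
Your proposal is correct and is essentially the same argument as the paper's. The only cosmetic difference is that the paper packages your ``re-run the proof with $\mdgrad^{(t)}$'' step as a black-box application of Theorem~\ref{thm:mirror} to the linear surrogates $\tilde{f}_t(w) = \inner{\mdgrad^{(t)}}{w}$, which immediately yields your pathwise bound; after that, both proofs take expectations and invoke the tower property and convexity in exactly the way you describe.
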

\begin{prf}{mirror-in-expectation}
  Define $\tilde{f}_t\left(w\right) = \inner{\mdgrad^{(t)}}{w}$, and observe
  that applying the non-stochastic MD algorithm of \thmref{mirror} to the
  sequence of functions $\tilde{f}_t$ results in the same sequence of iterates
  $w^{(t)}$ as does applying the above stochastic MD update to the sequence of
  functions $f_t$. Hence:
  \begin{equation}
    \label{eq:mirror-in-expectation:deterministic} \frac{1}{T} \sum_{t=1}^T
    \left( \tilde{f}_t\left(w^{(t)}\right) - \tilde{f}_t\left(w^*\right)
    \right) \le \frac{\bregman_{\Psi}\left( w^* \tallmid w^{(1)} \right)}{\eta
    T} + \frac{\eta}{2 T} \sum_{t=1}^T \mddualnorm{\mdgrad^{(t)}}^2 \eqperiod
  \end{equation}
  By convexity, $f_t(w^{(t)}) - f_t(w^*) \le \inner{\expectation[\mdgrad^{(t)}
  \mid \filtration_{t-1}]}{w^{(t)} - w^*}$, while $\tilde{f}_t(w^{(t)}) -
  \tilde{f}_t(w^*) = \inner{\mdgrad^{(t)}}{w^{(t)} - w^*} $ by definition.
  Taking expectations of both sides of
  \eqref{mirror-in-expectation:deterministic} and plugging in these
  inequalities yields the claimed result.
\end{prf}

\medskip

We next prove a high-probability analogue of the \corref{mirror-in-expectation},
based on a martingale bound of \citet{DzhaparidzeZa01}:

\medskip
\begin{cor}{mirror-high-probability}
  %
  %Suppose that $f : \mathcal{W} \times \mathcal{Z} \rightarrow \R$ is convex in
  %its first parameter, $\filtration$ is a filtration, and $z^{(t)} \in
  %\mathcal{Z}$ is a sequence of random variables representing external sources
  %of randomness, with $z^{(t)}$ being $\filtration_t$-measurable. Define
  %$f_t(w) = f(w, z^{(t)})$, and suppose that we perform $T$ iterations of
  %stochastic mirror descent, using the definitions of \thmref{mirror}:
  %
  In addition to the assumptions of \corref{mirror-in-expectation}, suppose that,
  with probability $1 - \delta_{\sigma}$, $\sigma$ satisfies the
  following uniformly for all $t \in \{1,\dots,T\}$:
  \begin{equation*}
    \mddualnorm{\expectation\left[\mdgrad^{(t)} \tallmid
    \filtration_{t-1}\right] - \mdgrad^{(t)}} \le \sigma \eqperiod
  \end{equation*}
  Then, with probability $1 - \delta_{\sigma} - \delta$, the
  above $\sigma$ bound will hold, and:
  \begin{equation*}
    \frac{1}{T} \sum_{t=1}^T \left( f_t\left(w^{(t)}\right) -
    f_t\left(w^*\right) \right)
    \le \frac{\bregman_{\Psi}\left( w^* \tallmid w^{(1)} \right)}{\eta T} +
    \frac{\eta}{2 T} \sum_{t=1}^T \mddualnorm{\mdgrad^{(t)}}^2
    + \frac{\sqrt{2} \mdnormradiusstar \sigma
    \sqrt{\ln\frac{1}{\delta}}}{\sqrt{T}} + \frac{2 \mdnormradiusstar \sigma
    \ln\frac{1}{\delta}}{3 T} \eqcomma
  \end{equation*}
  where $w^* \in \mathcal{W}$ is an arbitrary reference vector and
  $\mdnormradiusstar \ge \sup_{w \in \mathcal{W}} \mdnorm{w - w^*}$ bounds the
  radius of $\mathcal{W}$ centered on $w^*$.
\end{cor}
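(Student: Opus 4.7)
The plan is to reduce this to the in-expectation version by controlling the martingale error $\inner{\expectation[\mdgrad^{(t)}\mid\filtration_{t-1}] - \mdgrad^{(t)}}{w^{(t)} - w^*}$ with a Bernstein-type tail bound. As in the proof of \corref{mirror-in-expectation}, I would first apply \thmref{mirror} to the linearized surrogate $\tilde{f}_t(w) = \inner{\mdgrad^{(t)}}{w}$, which produces the same sequence of iterates $w^{(t)}$, yielding the deterministic inequality
\begin{equation*}
\frac{1}{T} \sum_{t=1}^T \inner{\mdgrad^{(t)}}{w^{(t)} - w^*} \le \frac{\bregman_{\Psi}\left( w^* \tallmid w^{(1)} \right)}{\eta T} + \frac{\eta}{2 T} \sum_{t=1}^T \mddualnorm{\mdgrad^{(t)}}^2 \eqperiod
\end{equation*}

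Next, by convexity of $f_t$, $f_t(w^{(t)}) - f_t(w^*) \le \inner{\expectation[\mdgrad^{(t)} \mid \filtration_{t-1}]}{w^{(t)} - w^*}$, and I would split this as $\inner{\mdgrad^{(t)}}{w^{(t)} - w^*} + \inner{\expectation[\mdgrad^{(t)} \mid \filtration_{t-1}] - \mdgrad^{(t)}}{w^{(t)} - w^*}$. The first summand is handled by the deterministic bound above; the second is a sum of martingale differences $X_t$ adapted to $\filtration_t$, with $\expectation[X_t \mid \filtration_{t-1}] = 0$.

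The core estimate is then to bound this martingale sum. On the event (of probability at least $1-\delta_{\sigma}$) where the $\sigma$ bound holds, H\"older's inequality gives $\abs{X_t} \le \mddualnorm{\expectation[\mdgrad^{(t)} \mid \filtration_{t-1}] - \mdgrad^{(t)}} \mdnorm{w^{(t)} - w^*} \le \sigma \mdnormradiusstar$ uniformly, and likewise $\expectation[X_t^2 \mid \filtration_{t-1}] \le \sigma^2 \mdnormradiusstar^2$, so the total conditional variance is at most $T \sigma^2 \mdnormradiusstar^2$. Applying the Bernstein-type martingale inequality of \citet{DzhaparidzeZa01} then yields, with probability at least $1-\delta$,
\begin{equation*}
\sum_{t=1}^T X_t \le \sqrt{2 T \sigma^2 \mdnormradiusstar^2 \ln\tfrac{1}{\delta}} + \tfrac{2}{3} \sigma \mdnormradiusstar \ln\tfrac{1}{\delta} \eqperiod
\end{equation*}
Dividing by $T$ produces exactly the two extra terms in the claim. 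A union bound over the $\sigma$ event and the martingale event gives the overall failure probability $\delta_{\sigma} + \delta$.

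The main obstacle is conceptually minor but technically finicky: verifying that the Dzhaparidze--van Zanten inequality applies with a sufficient statistic that is \emph{a.s.} bounded by $\sigma \mdnormradiusstar$ even though the $\sigma$ bound itself only holds with probability $1-\delta_{\sigma}$. I would handle this by conditioning on the $\sigma$ event (or equivalently, replacing $X_t$ by $X_t \indicator\{\mbox{$\sigma$ bound still holds through step $t$}\}$, which is still a martingale difference with the same uniform boundedness and conditional variance estimates), so the Bernstein bound applies unconditionally while the conclusion matches the stated probability $1 - \delta_{\sigma} - \delta$.
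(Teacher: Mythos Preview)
Your proposal is correct and follows essentially the same route as the paper: linearize via $\tilde f_t(w)=\inner{\mdgrad^{(t)}}{w}$ to get the deterministic mirror-descent bound, decompose the regret using convexity into that bound plus the martingale $\sum_t \inner{\expectation[\mdgrad^{(t)}\mid\filtration_{t-1}]-\mdgrad^{(t)}}{w^{(t)}-w^*}$, bound the increments by $\sigma\mdnormradiusstar$ via H\"older on the $\sigma$-event, and apply the Bernstein-type inequality of \citet{DzhaparidzeZa01} with $a=\sigma\mdnormradiusstar$ and predictable quadratic variation $L=T\sigma^2\mdnormradiusstar^2$ before union-bounding with the $\sigma$-event. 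Your discussion of handling the fact that the $\sigma$ bound only holds with probability $1-\delta_\sigma$ (via truncation by an indicator) is actually a bit more careful than the paper, which simply absorbs $\delta_\sigma$ into the failure probability.
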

\begin{prf}{mirror-high-probability}
  Define $\tilde{f}_t\left(w\right) = \inner{\mdgrad^{(t)}}{w}$ as in the proof
  of \corref{mirror-in-expectation}, and observe that
  \eqref{mirror-in-expectation:deterministic} continues to apply.
  Define a sequence of random variables $M_0 = 0$, $M_{t} = M_{t-1} +
  \inner{\expectation[\mdgrad^{(t)} \mid \filtration_{t-1}] -
  \mdgrad^{(t)}}{w^{(t)} - w^*}$, and notice that $M$ forms a martingale \wrt
  the filtration $\filtration$. From this definition, H\"older's inequality
  gives that:
  \begin{equation*}
    \abs{M_{t} - M_{t-1}}
    %
    %= \abs{\inner{\expectation\left[\mdgrad^{(t)} \tallmid
    %\filtration_{t-1}\right] - \mdgrad^{(t)}}{w^{(t)} - w^*}}
    %
    \le \mddualnorm{\expectation\left[\mdgrad^{(t)} \tallmid
    \filtration_{t-1}\right] - \mdgrad^{(t)}} \mdnorm{w^{(t)} - w^*}
    \le \mdnormradiusstar \sigma \eqperiod
  \end{equation*}
  the above holding with probability $1-\delta_{\sigma}$. Plugging $a =
  \mdnormradiusstar \sigma$ and $L = T \mdnormradiusstar^2 \sigma^2$ into the
  Bernstein-type martingale inequality of \citet[Theorem 3.3]{DzhaparidzeZa01}
  gives:
  \begin{equation*}
    \probability\left\{ \frac{1}{T} M_T \ge \epsilon \right\} \le
    \delta_{\sigma} + \exp\left( -\frac{3 T \epsilon^2}{6 \mdnormradiusstar^2
    \sigma^2 + 2 \mdnormradiusstar \sigma \epsilon} \right) \eqperiod
  \end{equation*}
  Solving for $\epsilon$ using the quadratic formula and upper-bounding gives
  that, with probability $1 - \delta_{\sigma} - \delta$:
  \begin{equation*}
    \frac{1}{T} \sum_{t=1}^T \inner{\expectation\left[\mdgrad^{(t)} \tallmid
    \filtration_{t-1}\right] - \mdgrad^{(t)}}{w^{(t)} - w^*} \le \frac{\sqrt{2}
    \mdnormradiusstar \sigma \sqrt{\ln\frac{1}{\delta}}}{\sqrt{T}} + \frac{2
    \mdnormradiusstar \sigma \ln\frac{1}{\delta}}{3 T} \eqperiod
  \end{equation*}
  As in the proof of \corref{mirror-in-expectation}, $f_t(w^{(t)}) - f_t(w^*)
  \le \inner{\expectation[\mdgrad^{(t)} \mid \filtration_{t-1}]}{w^{(t)} -
  w^*}$, while $\tilde{f}_t(w^{(t)}) - \tilde{f}_t(w^*) =
  \inner{\mdgrad^{(t)}}{w^{(t)} - w^*} $ by definition, which combined with
  \eqref{mirror-in-expectation:deterministic} yields the claimed result.
\end{prf}

\medskip

\algref{light} (\lightname) jointly optimizes over two sets of parameters, for
which the objective is convex in the first and linear (hence concave) in the
second. The convergence rate will be determined from a saddle-point bound,
which we derive from \corref{mirror-high-probability} by following
\citet{NemirovskiJuLaSh09,RakhlinSr13}, and simply applying it twice:

\medskip
\begin{cor}{mirror-saddle}
  Let $\wnorm{\cdot}$ and $\alphanorm{\cdot}$ be norms with duals
  $\wdualnorm{\cdot}$ and $\alphadualnorm{\cdot}$. Suppose that $\Psi_w$ and
  $\Psi_{\alpha}$ are $1$-strongly convex \wrt $\wnorm{\cdot}$ and
  $\alphanorm{\cdot}$, have convex conjugates $\Psi_w^*$ and $\Psi_{\alpha}^*$,
  and associated Bregman divergences $\bregman_{\Psi_w}$ and
  $\bregman_{\Psi_{\alpha}}$, respectively.

  Take $f:\mathcal{W} \times \mathcal{A} \rightarrow \R$ to be convex in its
  first parameter and concave in its second, let $\filtration$ be a filtration,
  and suppose that we perform $T$ iterations of MD:
  \iftoggle{istwocolumn}{
    \begin{align*}
      \tilde{w}^{(t+1)} &= \grad \Psi_w^* \left( \grad \Psi_w \left( w^{(t)}
      \right) - \eta \wgrad^{(t)} \right) \eqcomma \\
      w^{(t+1)} &= \underset{w \in \mathcal{W}}{\argmin}
      \bregman_{\Psi_w}\left(w \tallmid \tilde{w}^{(t+1)}\right) \eqcomma \\
      \tilde{\alpha}^{(t+1)} &= \grad \Psi_{\alpha}^* \left( \grad
      \Psi_{\alpha} \left( \alpha^{(t)} \right) + \eta \alphagrad^{(t)}
      \right) \eqcomma \\
      \alpha^{(t+1)} &= \underset{\alpha \in \mathcal{A}}{\argmin}
      \bregman_{\Psi_{\alpha}}\left(\alpha \tallmid
      \tilde{\alpha}^{(t+1)}\right) \eqcomma
    \end{align*}
  }{
    \begin{align*}
      \begin{aligned}
        \tilde{w}^{(t+1)} &= \grad \Psi_w^* \left( \grad \Psi_w \left( w^{(t)}
        \right) - \eta \wgrad^{(t)} \right) \eqcomma \\
        w^{(t+1)} &= \underset{w \in \mathcal{W}}{\argmin}
        \bregman_{\Psi_w}\left(w \tallmid \tilde{w}^{(t+1)}\right) \eqcomma
      \end{aligned}
      & \hspace{4em}
      \begin{aligned}
        \tilde{\alpha}^{(t+1)} &= \grad \Psi_{\alpha}^* \left( \grad
        \Psi_{\alpha} \left( \alpha^{(t)} \right) + \eta \alphagrad^{(t)}
        \right) \eqcomma \\
        \alpha^{(t+1)} &= \underset{\alpha \in \mathcal{A}}{\argmin}
        \bregman_{\Psi_{\alpha}}\left(\alpha \tallmid
        \tilde{\alpha}^{(t+1)}\right) \eqcomma
      \end{aligned}
    \end{align*}
  }
  where $\wgrad^{(t)}$ is a stochastic subgradient of $f(w^{(t)},
  \alpha^{(t)})$ \wrt its first parameter, and $\alphagrad^{(t)}$ a stochastic
  supergradient \wrt its second, with both $\wgrad^{(t)}$ and
  $\alphagrad^{(t)}$ being $\filtration_t$-measurable.
  We assume that, with probabilities  $1 - \delta_{\sigma w}$ and $1 -
  \delta_{\sigma \alpha}$ (respectively), $\sigma_w^2$ and $\sigma_{\alpha}^2$
  satisfy the following uniformly for all $t \in \{1,\dots,T\}$:
  \begin{equation*}
    \wdualnorm{\expectation\left[\wgrad^{(t)} \tallmid \filtration_{t-1}\right]
    - \wgrad^{(t)}} \le \sigma_w
    \;\;\;\;\;\; \mbox{and} \;\;\;\;\;\;
    \alphadualnorm{\expectation\left[\alphagrad^{(t)} \tallmid
    \filtration_{t-1}\right] - \wgrad^{(t)}}
    \le \sigma_{\alpha} \eqperiod
  \end{equation*}
  Under these conditions, with probability $1 - \delta_{\sigma w} -
  \delta_{\sigma \alpha} - 2 \delta$, the above $\sigma_w$ and
  $\sigma_{\alpha}$ bounds will hold, and:
  \iftoggle{istwocolumn}{
    \begin{align*}
      \MoveEqLeft \frac{1}{T} \sum_{t=1}^T \left( f\left(w^{(t)}, \alpha^*\right) -
      f\left(w^*, \alpha^{(t)}\right) \right) \\
      \le & \frac{\bregman_{\Psi_w}\left(w^* \tallmid w^{(1)}\right) +
      \bregman_{\Psi_{\alpha}}\left(\alpha^* \tallmid \alpha^{(1)}\right)}{\eta
      T} \\
      & + \frac{\eta}{2 T} \sum_{t=1}^T \left( \wdualnorm{\wgrad^{(t)}}^2 +
      \alphadualnorm{\alphagrad^{(t)}}^2 \right) \\
      & + \frac{\sqrt{2} \left( \wnormradiusstar \sigma_w +
      \alphanormradiusstar \sigma_{\alpha} \right)
      \sqrt{\ln\frac{1}{\delta}}}{\sqrt{T}} \\
      & + \frac{2 \left( \wnormradiusstar \sigma_w + \alphanormradiusstar
      \sigma_{\alpha} \right) \ln\frac{1}{\delta}}{3 T} \eqcomma
    \end{align*}
  }{
    \begin{align*}
      \MoveEqLeft \frac{1}{T} \sum_{t=1}^T \left( f\left(w^{(t)}, \alpha^*\right) -
      f\left(w^*, \alpha^{(t)}\right) \right) \\
      \le & \frac{\bregman_{\Psi_w}\left(w^* \tallmid w^{(1)}\right) +
      \bregman_{\Psi_{\alpha}}\left(\alpha^* \tallmid \alpha^{(1)}\right)}{\eta
      T}
      + \frac{\eta}{2 T} \sum_{t=1}^T \left( \wdualnorm{\wgrad^{(t)}}^2 +
      \alphadualnorm{\alphagrad^{(t)}}^2 \right) \\
      & + \frac{\sqrt{2} \left( \wnormradiusstar \sigma_w +
      \alphanormradiusstar \sigma_{\alpha} \right)
      \sqrt{\ln\frac{1}{\delta}}}{\sqrt{T}}
      + \frac{2 \left( \wnormradiusstar \sigma_w + \alphanormradiusstar
      \sigma_{\alpha} \right) \ln\frac{1}{\delta}}{3 T} \eqcomma
    \end{align*}
  }
  where $w^* \in \mathcal{W}$ and $\alpha^* \in \mathcal{A}$ are arbitrary
  reference vectors, and
  $\wnormradiusstar \ge \wnorm{w - w^*}$ and $\alphanormradiusstar \ge
  \alphanorm{\alpha - \alpha^*}$ bound the radii of $\mathcal{W}$ and
  $\mathcal{A}$ centered on $w^*$ and $\alpha^*$, respectively.
\end{cor}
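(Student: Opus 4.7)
The plan is to reduce this saddle-point bound to two independent applications of \corref{mirror-high-probability} (one for the primal player, one for the dual player) and combine the resulting inequalities. The key structural observation is that, at iteration $t$, the iterate $\alpha^{(t)}$ is $\filtration_{t-1}$-measurable, so the function $f_t(w) := f(w, \alpha^{(t)})$ is a well-defined convex function on $\mathcal{W}$ for which $\wgrad^{(t)}$ serves as a stochastic subgradient in exactly the sense required by \corref{mirror-high-probability}. Symmetrically, $\tilde{f}_t(\alpha) := -f(w^{(t)}, \alpha)$ is convex in $\alpha$ with $-\alphagrad^{(t)}$ playing the role of its stochastic subgradient, and the $\alpha$-update is precisely the mirror descent step on $\tilde f_t$ (the sign flip in the exponent in the $\alpha$-update absorbs the minus sign).

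First I would apply \corref{mirror-high-probability} to the sequence $f_t(w) = f(w, \alpha^{(t)})$ with reference point $w^*$, norm $\wnorm{\cdot}$, \dgf $\Psi_w$, and failure probabilities $\delta_{\sigma w}$ and $\delta$, yielding an upper bound on $\frac{1}{T} \sum_t (f(w^{(t)}, \alpha^{(t)}) - f(w^*, \alpha^{(t)}))$ that holds with probability at least $1 - \delta_{\sigma w} - \delta$. Then I would apply the same corollary to $\tilde f_t(\alpha) = -f(w^{(t)}, \alpha)$ with reference point $\alpha^*$, norm $\alphanorm{\cdot}$, and \dgf $\Psi_{\alpha}$, yielding an upper bound on $\frac{1}{T} \sum_t (f(w^{(t)}, \alpha^*) - f(w^{(t)}, \alpha^{(t)}))$ that holds with probability at least $1 - \delta_{\sigma \alpha} - \delta$.

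Next I would union-bound the two failure events to get a joint probability of at least $1 - \delta_{\sigma w} - \delta_{\sigma \alpha} - 2\delta$, and add the two inequalities. The cross terms $f(w^{(t)}, \alpha^{(t)})$ cancel, leaving exactly the desired average duality gap $\frac{1}{T} \sum_t (f(w^{(t)}, \alpha^*) - f(w^*, \alpha^{(t)}))$ on the left. On the right, the Bregman divergence terms from the two invocations combine into $\bregman_{\Psi_w}(w^* \mid w^{(1)}) + \bregman_{\Psi_{\alpha}}(\alpha^* \mid \alpha^{(1)})$, the squared dual-norm gradient terms combine into $\wdualnorm{\wgrad^{(t)}}^2 + \alphadualnorm{\alphagrad^{(t)}}^2$, and the martingale-error terms combine linearly into $\wnormradiusstar \sigma_w + \alphanormradiusstar \sigma_{\alpha}$, giving exactly the stated bound.

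The main obstacle, though a mild one, is being careful about measurability: I need to check that $\wgrad^{(t)}$ is an unbiased subgradient of $w \mapsto f(w, \alpha^{(t)})$ (rather than of some other function) conditional on $\filtration_{t-1}$, which requires that $\alpha^{(t)}$ is $\filtration_{t-1}$-measurable, and likewise $w^{(t)}$ for the dual side. Both hold because the updates at time $t$ use only quantities measurable at time $t-1$. A second minor subtlety is that the martingale difference bound in \corref{mirror-high-probability} uses $\mdnorm{w^{(t)} - w^*}$, which here is bounded by $\wnormradiusstar$ on the primal side and $\alphanormradiusstar$ on the dual side; these are exactly the radius assumptions in the statement, so no new work is needed.
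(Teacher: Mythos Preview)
Your proposal is correct and follows essentially the same approach as the paper: apply \corref{mirror-high-probability} once to the primal sequence $f_t(w)=f(w,\alpha^{(t)})$ and once to the dual sequence $-f(w^{(t)},\alpha)$, union-bound the failure events, and add the two inequalities so that the $f(w^{(t)},\alpha^{(t)})$ terms cancel. Your explicit check that $\alpha^{(t)}$ and $w^{(t)}$ are $\filtration_{t-1}$-measurable is a detail the paper leaves implicit.
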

\begin{prf}{mirror-saddle}
  This is a convex-concave saddle-point problem, which we will optimize by
  playing two convex optimization algorithms against each other, as in
  \citet{NemirovskiJuLaSh09,RakhlinSr13}.
  By \corref{mirror-high-probability}, with probability $1 - \delta_{\sigma w} -
  \delta$ and $1 - \delta_{\sigma \alpha} - \delta$, respectively:
  \begin{align*}
    \MoveEqLeft \frac{1}{T} \sum_{t=1}^T \left( f\left(w^{(t)}, \alpha^{(t)}\right) -
    f\left(w^*, \alpha^{(t)}\right) \right) \\
    \le & \frac{\bregman_{\Psi_w}\left(w^* \tallmid w^{(1)}\right)}{\eta T} +
    \frac{\eta}{2 T}\sum_{t=1}^T \wdualnorm{\wgrad^{(t)}}^2
    + \frac{\sqrt{2} \wnormradiusstar \sigma_w
    \sqrt{\ln\frac{1}{\delta}}}{\sqrt{T}} + \frac{2 \wnormradiusstar \sigma_w
    \ln\frac{1}{\delta}}{3 T} \eqcomma \\
    \MoveEqLeft \frac{1}{T} \sum_{t=1}^T \left( f\left(w^{(t)}, \alpha^*\right) -
    f\left(w^{(t)}, \alpha^{(t)}\right) \right) \\
    \le & \frac{\bregman_{\Psi_{\alpha}}\left(\alpha^* \tallmid
    \alpha^{(1)}\right)}{\eta T} + \frac{\eta}{2 T}\sum_{t=1}^T
    \alphadualnorm{\alphagrad^{(t)}}^2
    + \frac{\sqrt{2} \alphanormradiusstar \sigma_{\alpha}
    \sqrt{\ln\frac{1}{\delta}}}{\sqrt{T}} + \frac{2 \alphanormradiusstar
    \sigma_{\alpha} \ln\frac{1}{\delta}}{3 T} \eqperiod
  \end{align*}
  Adding these two inequalities gives the claimed result.
\end{prf}

\section{SGD for Strongly-Convex Functions}\label{sec:sgd}

For $\lambda$-strongly convex objective functions, we can achieve a faster
convergence rate for SGD by using the step sizes $\eta_t = 1/\lambda t$. Our
eventual algorithm (\algref{mid}) for strongly-convex heavily-constrained
optimization will proceed in two phases, with the second phase ``picking up''
where the first phase ``left off'', for which reason we present a convergence
rate, based on \citet[Lemma 2]{ShalevSiSrCo10}, that effectively starts at
iteration $T_0$ by using the step sizes $\eta_t = 1/\lambda (T_0 + t)$:

\medskip
\begin{thm}{sgd}
  Take $f_t:\mathcal{W}\rightarrow\R$ to be a sequence of $\lambda$-strongly
  convex functions on which we perform $T$ iterations of stochastic gradient
  descent starting from $w^{(1)} \in \mathcal{W}$:
  \begin{equation*}
    w^{(t + 1)} = \Pi_w\left(w^{(t)} - \eta_t\subgrad
    f_t\left(w^{(t)}\right)\right) \eqcomma
  \end{equation*}
  where $\subgrad f_t\left(w^{(t)}\right)\in\partial f_t\left(w^{(t)}\right)$
  is a subgradient of $f_t$ at $w^{(t)}$, and $\norm{\subgrad
  f_t\left(w^{(t)}\right)}_2 \le \mdgradbound$ for all $t$.
  If we choose $\eta_t=\frac{1}{\lambda\left(T_0 + t\right)}$ for some
  $T_0\in\N$, then:
  \begin{equation*}
    \frac{1}{T}\sum_{t=1}^T \left( f_t\left( w^{(t)} \right) - f_t\left(w^{*}\right) \right) \le
    \frac{\mdgradbound^2 \left(1 + \ln
    T\right)}{2\lambda T} + \frac{\lambda T_0}{2T} \norm{w^{(1)} - w^{*}}_2^2 \eqcomma
  \end{equation*}
  where $w^* \in \mathcal{W}$ is an arbitrary reference vector and $\mdgradbound \ge
  \norm{\subgrad f_t\left(w^{(t)}\right)}_2$ bounds the subgradient norms for
  all $t$.
\end{thm}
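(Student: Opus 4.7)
The plan is to follow the standard projected-SGD analysis for strongly convex functions, but track the shift $T_0$ carefully so that it enters only through the initial ``prefix'' of the step-size schedule. I begin from the one-step expansion of $\norm{w^{(t+1)} - w^*}_2^2$: using non-expansivity of $\Pi_w$ gives
\begin{equation*}
  \norm{w^{(t+1)} - w^*}_2^2 \le \norm{w^{(t)} - w^*}_2^2 - 2 \eta_t \inner{\subgrad f_t(w^{(t)})}{w^{(t)} - w^*} + \eta_t^2 \mdgradbound^2 \eqperiod
\end{equation*}
Rearranging and invoking $\lambda$-strong convexity of $f_t$, which yields $\inner{\subgrad f_t(w^{(t)})}{w^{(t)} - w^*} \ge f_t(w^{(t)}) - f_t(w^*) + \tfrac{\lambda}{2} \norm{w^{(t)} - w^*}_2^2$, I obtain the per-step inequality
\begin{equation*}
  f_t(w^{(t)}) - f_t(w^*) \le \frac{1}{2 \eta_t}\left(\norm{w^{(t)} - w^*}_2^2 - \norm{w^{(t+1)} - w^*}_2^2\right) - \frac{\lambda}{2} \norm{w^{(t)} - w^*}_2^2 + \frac{\eta_t \mdgradbound^2}{2} \eqperiod
\end{equation*}

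Next I substitute $\eta_t = 1/(\lambda(T_0 + t))$ so that $1/\eta_t = \lambda(T_0 + t)$, and sum from $t=1$ to $T$. The key observation is that the telescoping bracket becomes
\begin{equation*}
  \sum_{t=1}^T \frac{\lambda(T_0 + t)}{2}\left(\norm{w^{(t)} - w^*}_2^2 - \norm{w^{(t+1)} - w^*}_2^2\right) \eqcomma
\end{equation*}
and after reindexing the negative half of the sum, each middle term picks up exactly one extra factor of $\lambda/2$ on $\norm{w^{(t)} - w^*}_2^2$ for $t=2,\dots,T$. These are precisely cancelled by the $-\tfrac{\lambda}{2}\sum_{t=1}^T \norm{w^{(t)} - w^*}_2^2$ contribution, leaving only boundary terms: a positive term $\tfrac{\lambda T_0}{2}\norm{w^{(1)} - w^*}_2^2$ (since $\tfrac{\lambda(T_0+1)}{2} - \tfrac{\lambda}{2} = \tfrac{\lambda T_0}{2}$) and a non-positive term $-\tfrac{\lambda(T_0+T)}{2}\norm{w^{(T+1)} - w^*}_2^2$ which I simply discard.

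For the gradient-noise part, I bound $\sum_{t=1}^T \eta_t \mdgradbound^2 / 2 = \tfrac{\mdgradbound^2}{2\lambda} \sum_{t=1}^T \frac{1}{T_0 + t} \le \tfrac{\mdgradbound^2}{2\lambda}(1 + \ln T)$, using the standard harmonic-sum estimate (this bound holds for any $T_0 \ge 0$ since shifting only makes the terms smaller). Dividing the combined inequality by $T$ yields the claim. The main ``obstacle'' is really just bookkeeping: making sure that the telescoping picks up exactly a $\lambda T_0$ (rather than $\lambda(T_0 + 1)$ or $\lambda T_0/2$) on the initial distance term, which is what makes the bound start cleanly from an arbitrary iteration $T_0$ and will later let \algref{mid}'s second phase inherit the first-phase progress without loss.
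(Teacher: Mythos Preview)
Your proof is correct and follows essentially the same route as the paper's: the paper cites the per-step inequality from \citet[Lemma 2]{ShalevSiSrCo10} (which you derive directly from non-expansivity plus strong convexity), then substitutes $\eta_t=1/\lambda(T_0+t)$, telescopes to isolate the $\tfrac{\lambda T_0}{2}\norm{w^{(1)}-w^*}_2^2$ boundary term, and bounds the harmonic-type sum by $1+\ln T$. Your bookkeeping of the telescoping coefficients is exactly the calculation behind the paper's displayed intermediate step.
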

\begin{prf}{sgd}
  This is nothing but a small tweak to \citet[Lemma 2]{ShalevSiSrCo10}.
  Starting from Equations 10 and 11 of that proof:
  \begin{align*}
    \MoveEqLeft \sum_{t=1}^T \left( f_t\left(w^{(t)}\right) -
    f_t\left(w^{*}\right) \right) \\
    \le & \frac{\mdgradbound^2}{2}\sum_{t=1}^T\eta_t + \sum_{t=1}^T\left(\frac{1}{2\eta_t}
    - \frac{\lambda}{2}\right)\norm{w^{(t)} - w^{*}}_2^2 -
    \sum_{t=1}^T\frac{1}{2\eta_t}\norm{w^{(t + 1)} - w^{*}}_2^2 \eqperiod
  \end{align*}
  Taking $\eta_t=\frac{1}{\lambda\left(T_0 + t\right)}$:
  \begin{align*}
    \MoveEqLeft \sum_{t=1}^T \left( f_t\left(w^{(t)}\right) -
    f_t\left(w^{*}\right) \right) \\
    \le & \frac{\mdgradbound^2}{2\lambda}\left(\frac{1}{T_0 + 1} + \int_{t=T_0 + 1}^{T_0 +
    T}\frac{dt}{t}\right) + \frac{\lambda T_0}{2}\norm{w^{(1)} - w^{*}}_2^2 -
    \frac{\lambda\left(T_0 + T\right)}{2}\norm{w^{(T + 1)} - w^{*}}_2^2 \eqperiod
  \end{align*}
  Dividing through by $T$, simplifying and bounding yields the claimed result.
\end{prf}

\medskip

As we did \appref{mirror}, we convert this into a result for stochastic
subgradients:

\medskip
\begin{cor}{sgd-in-expectation}
  Take $f_t:\mathcal{W}\rightarrow\R$ to be a sequence of $\lambda$-strongly
  convex functions, and $\filtration$ a filtration. Suppose that we
  perform $T$ iterations of stochastic gradient descent starting from $w^{(1)}
  \in \mathcal{W}$:
  \begin{equation*}
    w^{(t + 1)} = \Pi_w\left( w^{(t)} - \eta_t\mdgrad^{(t)} \right) \eqcomma
  \end{equation*}
  where $\mdgrad^{(t)}$ is a stochastic subgradient of $f_t$, \ie
  $\expectation[\mdgrad^{(t)} \mid \filtration_{t-1}] \in \partial
  f_t(w^{(t)})$, and $\mdgrad^{(t)}$ is $\filtration_t$-measurable.
  If we choose $\eta_t=\frac{1}{\lambda\left(T_0 + t\right)}$ for some
  $T_0\in\N$, then:
  \begin{equation*}
    \frac{1}{T}\sum_{t=1}^T \expectation\left[ f_t\left( w^{(t)} \right) -
    f_t\left(w^{*}\right) \right] \le \frac{\mdgradbound^2 \left(1 + \ln
    T\right)}{2\lambda T} + \frac{\lambda T_0}{2T} \norm{w^{(1)} - w^{*}}_2^2 \eqcomma
  \end{equation*}
  where $w^* \in \mathcal{W}$ is an arbitrary reference vector and $\mdgradbound \ge
  \norm{\mdgrad^{(t)}}_2$ bounds the stochastic subgradient norms for all $t$.
\end{cor}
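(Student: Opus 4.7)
The plan is to mirror the derivation of \corref{mirror-in-expectation} from \thmref{mirror}: reduce the stochastic-subgradient setting to the deterministic setting of \thmref{sgd} by running the analysis on a surrogate sequence whose iterates match those of the corollary, and then translate the resulting bound into a statement about $f_t$ in expectation via convexity. The wrinkle relative to \corref{mirror-in-expectation} is that the purely linear surrogate $\tilde{f}_t(w) = \inner{\mdgrad^{(t)}}{w}$ used there is merely convex, whereas \thmref{sgd} insists on $\lambda$-strong convexity of every $\tilde{f}_t$, so a naive re-run of that trick will fail.

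The fix is to augment the linear surrogate with a quadratic term centered at the current iterate:
\[
  \tilde{f}_t\left(w\right) = \inner{\mdgrad^{(t)}}{w} + \frac{\lambda}{2} \norm{w - w^{(t)}}_2^2 \eqperiod
\]
This $\tilde{f}_t$ is $\lambda$-strongly convex by construction, and its gradient at $w^{(t)}$ is precisely $\mdgrad^{(t)}$, so deterministic SGD on $\tilde{f}_t$ starting from $w^{(1)}$ with the step sizes $\eta_t = 1/(\lambda(T_0+t))$ produces exactly the iterates generated by the stochastic recursion in the corollary. The subgradient of $\tilde{f}_t$ used at step $t$ is $\mdgrad^{(t)}$, whose norm is bounded by $\mdgradbound$ by hypothesis, so the hypotheses of \thmref{sgd} are met. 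Applying it gives
\[
  \frac{1}{T}\sum_{t=1}^T \left( \tilde{f}_t\left(w^{(t)}\right) - \tilde{f}_t\left(w^*\right) \right) \le \frac{\mdgradbound^2 \left(1 + \ln T\right)}{2\lambda T} + \frac{\lambda T_0}{2T} \norm{w^{(1)} - w^{*}}_2^2 \eqperiod
\]

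To conclude, I would bound $\expectation[f_t(w^{(t)}) - f_t(w^*)]$ by $\expectation[\tilde{f}_t(w^{(t)}) - \tilde{f}_t(w^*)]$ and sum. From the definition, $\tilde{f}_t(w^{(t)}) - \tilde{f}_t(w^*) = \inner{\mdgrad^{(t)}}{w^{(t)} - w^*} - (\lambda/2) \norm{w^{(t)} - w^*}_2^2$. On the other hand, $\lambda$-strong convexity of $f_t$ gives $f_t(w^{(t)}) - f_t(w^*) \le \inner{\subgrad f_t(w^{(t)})}{w^{(t)} - w^*} - (\lambda/2) \norm{w^{(t)} - w^*}_2^2$ for any $\subgrad f_t(w^{(t)}) \in \partial f_t(w^{(t)})$; choosing $\subgrad f_t(w^{(t)}) = \expectation[\mdgrad^{(t)} \mid \filtration_{t-1}]$ and taking conditional expectations shows that the two right-hand sides agree, so $\expectation[f_t(w^{(t)}) - f_t(w^*) \mid \filtration_{t-1}] \le \expectation[\tilde{f}_t(w^{(t)}) - \tilde{f}_t(w^*) \mid \filtration_{t-1}]$. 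Tower, sum, divide by $T$, and the claim drops out. The only real subtlety is the bookkeeping that ensures the $(\lambda/2) \norm{w^{(t)} - w^*}_2^2$ term appearing in the strong-convexity inequality for $f_t$ cancels against the one produced by the quadratic in $\tilde{f}_t$; the surrogate has been designed so that this cancellation is automatic, so beyond this there is no real obstacle.
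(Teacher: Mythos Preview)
Your proposal is correct and essentially matches the paper's approach: the paper's proof is a one-liner (``same proof technique as \corref{mirror-in-expectation}, but based on \thmref{sgd} rather than \thmref{mirror}''), and you have carried that out in detail. The one point worth noting is that you have been more careful than the paper: the linear surrogate $\tilde{f}_t(w)=\inner{\mdgrad^{(t)}}{w}$ used in \corref{mirror-in-expectation} is not $\lambda$-strongly convex, so \thmref{sgd} does not literally apply to it, and your quadratic augmentation $\tilde{f}_t(w)=\inner{\mdgrad^{(t)}}{w}+\tfrac{\lambda}{2}\norm{w-w^{(t)}}_2^2$ is exactly the right fix---it preserves the iterates, keeps the subgradient bound, and makes the strong-convexity term in the $f_t$ inequality line up with the one in $\tilde{f}_t$. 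The paper's terse pointer glosses over this detail; your version fills it in correctly.
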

\begin{prf}{sgd-in-expectation}
  Same proof technique as \corref{mirror-in-expectation}, but based on
  \thmref{sgd} rather than \thmref{mirror}.
\end{prf}

\medskip

We now use this result to prove an in-expectation saddle point bound:

\medskip
\begin{cor}{sgd-saddle}
  Let $\alphanorm{\cdot}$ and $\alphadualnorm{\cdot}$ be a norm and its dual.
  Suppose that $\Psi_{\alpha}$ is $1$-strongly convex \wrt $\alphanorm{\cdot}$,
  and has convex conjugate $\Psi_{\alpha}^*$ and associated Bregman divergence
  $\bregman_{\Psi_{\alpha}}$.

  Take $f:\mathcal{W} \times \mathcal{A} \rightarrow \R$ to be $\lambda$-strongly convex in its
  first parameter and concave in its second, let $\filtration$ be a filtration,
  and suppose that we perform $T$ iterations of SGD on $w$ and MD on $\alpha$:
  \begin{align*}
    w^{(t + 1)} = \Pi_w\left( w^{(t)} - \frac{1}{\lambda\left(T_0 + t\right)} \wgrad^{(t)} \right) \eqcomma
    & \hspace{4em}
    \begin{aligned}
      \tilde{\alpha}^{(t+1)} &= \grad \Psi_{\alpha}^* \left( \grad
      \Psi_{\alpha} \left( \alpha^{(t)} \right) + \eta \alphagrad^{(t)}
      \right) \eqcomma \\
      \alpha^{(t+1)} &= \underset{\alpha \in \mathcal{A}}{\argmin}
      \bregman_{\Psi_{\alpha}}\left(\alpha \tallmid
      \tilde{\alpha}^{(t+1)}\right) \eqcomma
    \end{aligned}
  \end{align*}
  where $\wgrad^{(t)}$ is a stochastic subgradient of $f(w^{(t)},
  \alpha^{(t)})$ \wrt its first parameter, and $\alphagrad^{(t)}$ a stochastic
  supergradient \wrt its second, with both $\wgrad^{(t)}$ and
  $\alphagrad^{(t)}$ being $\filtration_t$-measurable.
  Then:
  \begin{align*}
    \MoveEqLeft \frac{1}{T}\sum_{t=1}^T \expectation\left[ f\left( w^{(t)}, \alpha^{*} \right) -
    f\left( w^{*}, \alpha^{(t)} \right) \right] \\
    \le & \frac{\wgradbound^2 \left(1 + \ln
    T\right)}{2\lambda T} + \frac{\lambda T_0}{2T} \norm{w^{(1)} - w^{*}}_2^2 +
    \frac{\bregman_{\Psi_{\alpha}}\left( \alpha^* \tallmid \alpha^{(1)} \right)}{\eta T} +
    \frac{\eta}{2 T} \sum_{t=1}^T \expectation\left[
    \alphadualnorm{\alphagrad^{(t)}}^2 \right] \eqcomma
  \end{align*}
  where $w^* \in \mathcal{W}$ and $\alpha^* \in \mathcal{A}$ are arbitrary
  reference vectors, and $\wgradbound \ge \norm{\wgrad^{(t)}}_2$ bounds the
  stochastic subgradient norms \wrt $w$ for all $t$.
\end{cor}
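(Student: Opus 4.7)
The plan is to mimic the saddle-point argument of \corref{mirror-saddle}, playing the $w$-iterates and $\alpha$-iterates against each other, but using \corref{sgd-in-expectation} on the $w$-side (where we exploit $\lambda$-strong convexity and the $1/\lambda(T_0+t)$ step size) and \corref{mirror-in-expectation} on the $\alpha$-side (where we have only concavity). Summing the two per-iteration regret bounds will cancel the $f(w^{(t)},\alpha^{(t)})$ terms and leave the saddle-point gap on the left.

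Concretely, I first define $\phi_t(w)=f(w,\alpha^{(t)})$, which is $\lambda$-strongly convex by assumption. Since $\alpha^{(t)}$ is determined by information from iterations $1,\dots,t-1$ and is therefore $\filtration_{t-1}$-measurable, $\wgrad^{(t)}$ is an $\filtration_t$-measurable stochastic subgradient of $\phi_t$ at $w^{(t)}$ in the sense required by \corref{sgd-in-expectation}. Applying that corollary with $\eta_t=1/\lambda(T_0+t)$ yields an in-expectation bound on $\frac{1}{T}\sum_t \expectation[f(w^{(t)},\alpha^{(t)})-f(w^*,\alpha^{(t)})]$ whose RHS is exactly the first two terms of the claim.

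Next, I set $\psi_t(\alpha)=-f(w^{(t)},\alpha)$, which is convex in $\alpha$, and observe that the MD update on $\alpha$ with ``plus'' step $+\eta\alphagrad^{(t)}$ is exactly an MD step on the convex $\psi_t$ with stochastic subgradient $-\alphagrad^{(t)}$; since $w^{(t)}$ is $\filtration_{t-1}$-measurable, the conditional-expectation hypothesis of \corref{mirror-in-expectation} holds. Applying \corref{mirror-in-expectation} in this setting gives a bound on $\frac{1}{T}\sum_t \expectation[f(w^{(t)},\alpha^*)-f(w^{(t)},\alpha^{(t)})]$ whose RHS is the remaining two terms of the claim. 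Adding the two inequalities collapses the $f(w^{(t)},\alpha^{(t)})$ contributions and delivers the stated saddle-point bound.

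The only real subtlety (and the closest thing to an obstacle) is the filtration bookkeeping: making sure that $\phi_t$ and $\psi_t$ are truly determined before the corresponding stochastic (sub)gradient is drawn, so that the conditional-expectation hypotheses of the two invoked corollaries are legitimately satisfied. Everything else is a direct transcription of the \corref{mirror-saddle} argument with one of the two MD bounds replaced by the strongly-convex SGD bound of \corref{sgd-in-expectation}.
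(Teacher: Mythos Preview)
Your proposal is correct and follows essentially the same approach as the paper: apply \corref{sgd-in-expectation} to the $w$-side and \corref{mirror-in-expectation} to the $\alpha$-side, then add the two resulting inequalities so that the $f(w^{(t)},\alpha^{(t)})$ terms cancel. The paper's proof is terser and omits the filtration bookkeeping you spell out, but the argument is the same.
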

\begin{prf}{sgd-saddle}
  As we did in the proof of \corref{mirror-saddle}, we will play two convex
  optimization algorithms against each other.
  By \correfs{sgd-in-expectation}{mirror-in-expectation}:
  \begin{align*}
    \frac{1}{T}\sum_{t=1}^T \expectation\left[ f\left( w^{(t)}, \alpha^{(t)}
    \right) - f\left( w^{*}, \alpha^{(t)} \right) \right]
    \le & \frac{\wgradbound^2 \left(1 + \ln T\right)}{2\lambda T} +
    \frac{\lambda T_0}{2T} \norm{w^{(1)} - w^{*}}_2^2 \eqcomma \\
    \frac{1}{T} \sum_{t=1}^T \expectation\left[ f\left( w^{(t)}, \alpha^{*}
    \right) - f\left( w^{(t)}, \alpha^{(t)} \right) \right]
    \le & \frac{\bregman_{\Psi_{\alpha}}\left( \alpha^* \tallmid \alpha^{(1)}
    \right)}{\eta T} + \frac{\eta}{2 T} \sum_{t=1}^T \expectation\left[
    \alphadualnorm{\alphagrad^{(t)}}^2 \right] \eqcomma
  \end{align*}
  Adding these two inequalities gives the claimed result.
\end{prf}

\section{Analyses of \fullalg and \lightalg}\label{sec:full-light}

We begin by proving that, if $\gamma$ is sufficiently large, then
optimizing the relaxed objective, and projecting the resulting solution,
will bring us close to the optimum of the constrained objective.

\medskip

\subsection{Analysis of \fullalg}\label{sec:full-light:full}

We'll now use \lemref{projection} and \corref{mirror-high-probability} to bound
the convergence rate of SGD on the function $h$ of \lemref{projection} (this is
\fullalg).  Like the algorithm itself, the convergence rate is little different
from that found by \citet{MahdaviYaJiYi12} (aside from the bound on
$\norm{\bar{w} - \Pi_g(\bar{w})}_2$), and is included here only for
completeness.

\medskip
\begin{lem}{full-suboptimality}
  Suppose that the conditions of \lemref{projection} apply, with $g(w) =
  \max_i(g_i(w))$. Define $\wdiameter \ge \sup_{w,w'\in\mathcal{W}} \norm{w - w'}_2$ as the diameter of
  $\mathcal{W}$, $\fgradbound \ge \norm{\fgrad^{(t)}}_2$ and $\ggradbound \ge
  \norm{\subgrad \max(0, g_i(w))}_2$ as uniform upper bounds on the
  (stochastic) gradient magnitudes of $f$ and the $g_i$s, respectively.

  If we optimize \eqref{constrained-problem} using \algref{full} (\fullname) with
  the step size:
  \begin{equation*}
    \eta = \frac{\wdiameter}{\left(\fgradbound + \gamma \ggradbound\right)
    \sqrt{T}} \eqcomma
  \end{equation*}
  then with probability $1 - \delta$:
  \begin{equation*}
    f\left(\Pi_g\left( \bar{w} \right)\right) - f\left(w^*\right) \le
    \objective\left( \bar{w} \right) - \objective\left(w^*\right) \le
    \fullbound \eqcomma
    \;\;\;\;\;\; \mbox{and} \;\;\;\;\;\;
    \norm{\bar{w} - \Pi_g\left( \bar{w} \right)}_2 \le \frac{\fullbound}{\gamma
    \rho - \flipschitz} \eqcomma
  \end{equation*}
  where $w^* \in \{w \in \mathcal{W} : \forall i . g_i(w) \le 0\}$ is an
  arbitrary constraint-satisfying reference vector, and:
  \begin{equation*}
    \fullbound \le \left(1 + 2\sqrt{2}\right) \wdiameter \left( \fgradbound +
    \gamma \ggradbound \right) \sqrt{1 + \ln\frac{1}{\delta}}
    \sqrt{\frac{1}{T}}
    + \frac{8 \wdiameter \fgradbound \ln\frac{1}{\delta}}{3 T} \eqperiod
  \end{equation*}
\end{lem}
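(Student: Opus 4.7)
The plan is to instantiate Corollary~\ref{cor:mirror-high-probability} with the distance-generating function $\Psi(w)=\tfrac{1}{2}\norm{w}_2^2$ (so that mirror descent reduces to ordinary Euclidean projected SGD, and $\norm{\cdot} = \norm{\cdot}_2 = \norm{\cdot}_*$), applied to the constant sequence $f_t = h$ from \lemref{projection}. The stochastic subgradient $\wgrad^{(t)}$ used by \fullalg is $\fgrad^{(t)}+\gamma\subgrad\max\{0,g(w^{(t)})\}$; since $f$ is convex and $\max\{0,g(\cdot)\}$ is a convex function (as the composition of the convex nonnegative part with the pointwise maximum of convex $g_i$'s), $h$ is convex, and $\wgrad^{(t)}$ is an unbiased stochastic subgradient of $h$ at $w^{(t)}$.

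Next I would bound the quantities that appear in Corollary~\ref{cor:mirror-high-probability}. The Bregman term is $B_\Psi(w^*\vert w^{(1)}) = \tfrac{1}{2}\norm{w^*-w^{(1)}}_2^2 \le \tfrac{1}{2}\wdiameter^2$. The gradient norm satisfies $\norm{\wgrad^{(t)}}_2 \le \fgradbound + \gamma\ggradbound$ by the triangle inequality and the assumed uniform bounds. For the martingale-difference term, only $\fgrad^{(t)}$ is stochastic (the constraint subgradient is a measurable function of $w^{(t)}$), so
\begin{equation*}
  \norm{\expectation[\wgrad^{(t)}\tallmid\filtration_{t-1}] - \wgrad^{(t)}}_2 = \norm{\expectation[\fgrad^{(t)}\tallmid\filtration_{t-1}] - \fgrad^{(t)}}_2 \le 2\fgradbound \eqcomma
\end{equation*}
so we may take $\sigma = 2\fgradbound$, with the bound holding surely (so $\delta_\sigma = 0$). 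Also $\mdnormradiusstar \le \wdiameter$.

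Substituting the chosen step size $\eta = \wdiameter/((\fgradbound+\gamma\ggradbound)\sqrt{T})$ balances the first two terms of the Corollary's bound, each becoming $\wdiameter(\fgradbound+\gamma\ggradbound)/(2\sqrt{T})$. The remaining two terms become $2\sqrt{2}\,\wdiameter\fgradbound\sqrt{\ln(1/\delta)}/\sqrt{T}$ and $4\wdiameter\fgradbound\ln(1/\delta)/(3T)$, both easily absorbed into the claimed $\fullbound$ after using $\fgradbound \le \fgradbound+\gamma\ggradbound$, $\sqrt{\ln(1/\delta)} \le \sqrt{1+\ln(1/\delta)}$, and padding the $8/3$ constant. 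This proves, with probability $1-\delta$, the averaged regret bound $\tfrac{1}{T}\sum_{t=1}^T (h(w^{(t)}) - h(w^*)) \le \fullbound$.

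Finally I would promote the averaged-regret bound to a bound on $\bar{w}$ by convexity of $h$: $h(\bar{w}) - h(w^*) \le \tfrac{1}{T}\sum_{t=1}^T(h(w^{(t)}) - h(w^*)) \le \fullbound$. Since $w^*$ is feasible, $h(w^*) = f(w^*)$, and \lemref{projection} gives $h(\Pi_g(\bar{w})) = f(\Pi_g(\bar{w}))$ together with $h(\bar{w}) \ge h(\Pi_g(\bar{w}))$ and $\norm{\bar{w}-\Pi_g(\bar{w})}_2 \le (h(\bar{w})-h(\Pi_g(\bar{w})))/(\gamma\rho-\flipschitz)$ (both trivially if $\bar{w}$ is already feasible). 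Combined with $f(\Pi_g(\bar{w})) \ge f(w^*)$ (since $\Pi_g(\bar{w})$ is feasible and $w^*$ minimizes $f$ over the feasible set), this yields both $f(\Pi_g(\bar{w})) - f(w^*) \le h(\bar{w}) - h(w^*) \le \fullbound$ and $\norm{\bar{w}-\Pi_g(\bar{w})}_2 \le \fullbound/(\gamma\rho-\flipschitz)$. The only non-mechanical step is verifying that the martingale-difference bound $\sigma$ only involves the loss stochastic gradient (not the deterministic constraint term); everything else is bookkeeping around the Corollary.
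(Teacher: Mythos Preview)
Your proposal is correct and follows essentially the same route as the paper: instantiate \corref{mirror-high-probability} with the squared-Euclidean \dgf so that mirror descent becomes projected SGD, bound $\bregman_\Psi(w^*\mid w^{(1)})\le \wdiameter^2/2$, $\norm{\wgrad^{(t)}}_2\le\fgradbound+\gamma\ggradbound$, observe that only the $f$-part is stochastic so $\sigma=2\fgradbound$ with $\delta_\sigma=0$, substitute the stated $\eta$, then apply Jensen and \lemref{projection}. One small slip: in your last paragraph you invoke ``$w^*$ minimizes $f$ over the feasible set'' to get $f(\Pi_g(\bar w))\ge f(w^*)$, but the lemma allows an \emph{arbitrary} feasible $w^*$; the suboptimality claim $f(\Pi_g(\bar w))-f(w^*)\le h(\bar w)-h(w^*)$ actually needs no such assumption (it follows directly from $h(\bar w)\ge f(\Pi_g(\bar w))$ and $h(w^*)=f(w^*)$), while for the distance claim one simply specializes $w^*$ to the feasible optimum, which is a valid choice and yields the $w^*$-independent bound $\fullbound/(\gamma\rho-\flipschitz)$.
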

\begin{prf}{full-suboptimality}
  We choose $\Psi(w) = \norm{w}_2^2 / 2$, for which the mirror descent update
  rule is precisely SGD. Because $\Psi_w$ is (half of) the squared Euclidean
  norm, it is trivially $1$-strongly convex \wrt the Euclidean norm, so
  $\mdnorm{\cdot} = \mddualnorm{\cdot} = \norm{\cdot}_2$. Furthermore,
  $\bregman_{\Psi}(w^* \mid w^{(1)}) \le \wdiameter^2 / 2$ and
  $\mdnormradiusstar \le \wdiameter$.

  We may upper bound the $2$-norm of our stochastic gradients as
  $\norm{\wgrad^{(t)}}_2 \le \fgradbound + \gamma \ggradbound$. Only the
  $f$-portion of the objective is stochastic, so the error of the
  $\wgrad^{(t)}$s can be trivially upper bounded, with probability $1$, with
  $\sigma = 2 \fgradbound$. Hence, by \corref{mirror-high-probability} (taking
  $\filtration_t$ to be e.g. the smallest $\sigma$-algebra making
  $\fgrad^{(t)},\dots,\fgrad^{(t)}$ measurable), with probability $1 - \delta$:
  \begin{equation*}
    \frac{1}{T} \sum_{t=1}^T \left( h\left(w^{(t)}\right) - h\left(w^*\right)
    \right)
    \le \frac{\wdiameter^2}{2 \eta T} + \frac{\eta \left( \fgradbound + \gamma
    \ggradbound \right)^2}{2}
    + \frac{2 \sqrt{2} \wdiameter \fgradbound
    \sqrt{\ln\frac{1}{\delta}}}{\sqrt{T}} + \frac{8 \wdiameter \fgradbound
    \ln\frac{1}{\delta}}{3 T} \eqperiod
  \end{equation*}
  Plugging in the definition of $\eta$, moving the average defining $\bar{w}$
  inside $h$ by Jensen's inequality, substituting $f(w^*) = h(w^*)$ because
  $w^*$ satisfies the constraints, applying \lemref{projection} and simplifying
  yields the claimed result.
\end{prf}

\medskip

In terms of the number of iterations required to achieve some desired level of
suboptimality, this bound on $\fullbound$ may be expressed as:

\medskip
\begin{thm}{full}
  \switchshowproofs{
    Suppose that the conditions of \lemrefs{projection}{full-suboptimality}
    apply, and that $\eta$ is as defined in \lemref{full-suboptimality}.
  }{
    Suppose that the conditions of \lemref{projection} apply, with $g(w) =
    \max_i(g_i(w))$. Define $\wdiameter \ge \norm{w - w'}_2$ as the diameter of
    $\mathcal{W}$, $\fgradbound \ge \norm{\fgrad^{(t)}}_2$ and $\ggradbound \ge
    \norm{\subgrad \max(0, g_i(w))}_2$ as uniform upper bounds on the
    (stochastic) gradient magnitudes of $f$ and the $g_i$s, respectively, for
    all $i \in \{1,\dots,m\}$ and $w,w'\in\mathcal{W}$.
  }

  If we optimize \eqref{constrained-problem} using $T_{\epsilon}$ iterations of
  \algref{full} (\fullname):
  \begin{align*}
    \MoveEqLeft T_{\epsilon} =
    O\left( \frac{\wdiameter^2 \left(\fgradbound + \gamma \ggradbound\right)^2
    \ln\frac{1}{\delta}}{\epsilon^2} \right) \eqcomma
  \end{align*}
  \switchshowproofs{
    then $\fullbound \le \epsilon$ with probability $1 - \delta$.
  }{
    with the step size:
    \begin{equation*}
      \eta = \frac{\wdiameter}{\left(\fgradbound + \gamma \ggradbound\right) \sqrt{T}} \eqcomma
    \end{equation*}
    then with probability $1 - \delta$:
    \begin{equation*}
      f\left(\Pi_g\left( \bar{w} \right)\right) - f\left(w^*\right) \le
      \objective\left( \bar{w} \right) - \objective\left(w^*\right) \le
      \epsilon
      \;\;\;\;\;\; \mbox{and} \;\;\;\;\;\;
      \norm{\bar{w} - \Pi_g\left( \bar{w} \right)}_2 \le \frac{epsilon}{\gamma
      \rho - \flipschitz} \eqcomma
    \end{equation*}
  }
  where $w^* \in \{w \in \mathcal{W} : \forall i . g_i(w) \le 0\}$ is an
  arbitrary constraint-satisfying reference vector.
\end{thm}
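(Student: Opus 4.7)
The plan is to directly invert the explicit suboptimality bound on $\fullbound$ established in \lemref{full-suboptimality}, since this theorem is essentially a corollary that restates that lemma in terms of iteration counts rather than achieved suboptimality. First I would invoke \lemref{full-suboptimality} with the specified step size $\eta = \wdiameter / ((\fgradbound + \gamma \ggradbound)\sqrt{T})$, which yields a bound of the form $\fullbound \le b/\sqrt{T} + c/T$ with $b = (1 + 2\sqrt{2}) \wdiameter (\fgradbound + \gamma \ggradbound) \sqrt{1 + \ln(1/\delta)}$ and $c = 8 \wdiameter \fgradbound \ln(1/\delta)/3$.

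Next, I would determine conditions on $T$ that guarantee $\fullbound \le \epsilon$. Setting $x = \sqrt{T}$ transforms the requirement into the quadratic inequality $\epsilon x^2 - b x - c \ge 0$, which is exactly the setup used in the proof of \thmref{light}. Any upper bound on the positive root of this quadratic furnishes a sufficient iteration count. I would apply the Fujiwara bound (as cited by the authors in the analogous \thmref{light} proof), which upper-bounds the roots by $2 \max\{b/\epsilon, \sqrt{c/\epsilon}\}$, yielding
\begin{equation*}
T_{\epsilon} = \max\left\{ \frac{4 b^2}{\epsilon^2}, \frac{4 c}{\epsilon} \right\} = O\!\left( \frac{\wdiameter^2 (\fgradbound + \gamma \ggradbound)^2 \ln \tfrac{1}{\delta}}{\epsilon^2} + \frac{\wdiameter \fgradbound \ln \tfrac{1}{\delta}}{\epsilon} \right) \eqperiod
\end{equation*}
Since $\epsilon \le \fgradbound \wdiameter$ in any nontrivial regime, the first term absorbs the second, giving the claimed $T_{\epsilon}$ bound.

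Finally, the two claimed inequalities—$f(\Pi_g(\bar w)) - f(w^*) \le \objective(\bar w) - \objective(w^*) \le \epsilon$ and $\norm{\bar w - \Pi_g(\bar w)}_2 \le \epsilon/(\gamma \rho - \flipschitz)$—follow immediately by substituting $\fullbound \le \epsilon$ into the corresponding bounds already provided by \lemref{full-suboptimality}. There is no real obstacle here: the only care needed is to verify that the Fujiwara step correctly handles both terms and that the dominant-term scaling is preserved when simplifying the maximum. This is essentially a cosmetic reformulation of \lemref{full-suboptimality}, included (as the authors note for the analogous lemma) mostly for completeness and to facilitate comparison with \thmref{light}.
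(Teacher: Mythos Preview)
Your proposal is correct and essentially identical to the paper's proof: both invoke \lemref{full-suboptimality}, substitute $x=\sqrt{T}$ to reduce $\fullbound\le\epsilon$ to a quadratic in $x$, and apply the Fujiwara root bound to read off $T_\epsilon$. The only cosmetic difference is that the paper leaves the result as a $\max$ of the two Fujiwara terms rather than arguing that the $1/\epsilon^2$ term dominates.
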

\begin{prf}{full}
  Based on the bound of \lemref{full-suboptimality}, define:
  \begin{align*}
    x =& \sqrt{T} \eqcomma \\
    c =& \frac{8}{3} \wdiameter \fgradbound \ln\frac{1}{\delta} \eqcomma \\
    b =& \left(1 + 2\sqrt{2}\right) \wdiameter \left(\fgradbound + \gamma
    \ggradbound\right) \sqrt{1 + \ln\frac{1}{\delta}} \eqcomma \\
    a =& -\epsilon \eqcomma
  \end{align*}
  and consider the polynomial $0 = a x^2 + bx + c$. Roots of this polynomial
  are $x$s for which $\fullbound = \epsilon$, while for $x$s larger than any
  root we'll have that $\fullbound \le \epsilon$. Hence, we can bound the $T$
  required to ensure $\epsilon$-suboptimality by bounding the roots of this
  polynomial. By the Fujiwara bound~\citep{WikipediaPolynomialRoots}:
  \begin{equation}
    \label{eq:full:final} T_\epsilon \le
    \max\left\{ \frac{4 \left(9 + 4\sqrt{2}\right)
    \wdiameter^2 \left(\fgradbound + \gamma \ggradbound\right)^2 \left(1 +
    \ln\frac{1}{\delta}\right)}{\epsilon^2},
    \frac{16 \wdiameter \fgradbound
    \ln\frac{1}{\delta}}{3\epsilon} \right\}
    \eqcomma
  \end{equation}
  giving the claimed result.
\end{prf}

\subsection{Analysis of \lightalg}\label{sec:full-light:light}

\begin{table*}[t]

\caption{New notation in \appref{full-light:light}.}

\label{tab:notation-app-light}

\begin{center}

\begin{tabular}{lll}
  \hline
  \textbf{Symbol} & \textbf{Description} & \textbf{Definition} \\
  \hline
  $\wnorm{\cdot}$, $\wdualnorm{\cdot}$ & Norm on $\mathcal{W}$ and its dual & $\wnorm{\cdot} = \wdualnorm{\cdot} = \norm{\cdot}_2$ \\
  $\pnorm{\cdot}$, $\pdualnorm{\cdot}$ & Norm on $\Delta^m$ and its dual & $\pnorm{\cdot} = \norm{\cdot}_1$, $\pdualnorm{\cdot} = \norm{\cdot}_{\infty}$ \\
  $\Psi_w, \Psi_w^*$ & A \dgf on $\mathcal{W}$ and its convex conjugate & $\Psi_w(w) = \norm{w}_2^2 / 2$ \\
  $\Psi_p, \Psi_p^*$ & A \dgf on $\Delta^m$ and its convex conjugate & $\Psi_p(p) = \sum_{i=1}^m p_i \ln p_i$ \\
  $\wnormradiusstar$ & Bound on $w^*$-centered radius of $\mathcal{W}$ & $\wnormradiusstar = \wdiameter \ge \norm{w - w^*}_2$ \\
  $\pnormradiusstar$ & Bound on $p^*$-centered radius of $\Delta^m$ & $\pnormradiusstar = 1 \ge \norm{p - p^*}_1$ \\
  $\sigma_w$ & Bound on $\wgrad$ error & $\sigma_w = \fgradbound + \gamma \ggradbound$ \\
  $\sigma_p$ & Bound on $\pgrad$ error & $\sigma_p \ge \norm{\expectation[\pgrad^{(t)} \mid \filtration_{t-1}] - \pgrad^{(t)}}_\infty$ \\
  $1 - \delta_{\sigma w}$ & Probability that $\sigma_w$ bound holds & $1 - \delta_{\sigma w} = 1$ \\
  $1 - \delta_{\sigma p}$ & Probability that $\sigma_p$ bound holds & \\
  \hline
\end{tabular}

\end{center}

\end{table*}

Because we use the reduced-variance algorithm of \citet{JohnsonZh13}, and
therefore update the remembered gradient $\memgrad$ one random coordinate at a
time, we must first bound the maximum number of iterations over which a
coordinate can go un-updated:

\medskip
\begin{lem}{coupon}
  Consider a process which maintains a sequence of vectors $s^{(t)} \in \N^m$
  for $t \in \{1, \dots, T\}$, where $s^{(1)}$ is initialized to zero and
  $s^{(t+1)}$ is derived from $s^{(t)}$ by independently sampling $k =
  \abs{S_t} \le m$ random indices $S_t \subseteq \{1,\dots,m\}$ uniformly
  without replacement, and then setting $s^{(t+1)}_{j} = t$ for $j \in S_t$ and
  $s^{(t+1)}_j = s^{(t)}_j$ for $j \notin S_t$.
  Then, with probability $1-\delta$:
  \begin{equation*}
    \max_{t, j} \left( t - s^{(t)}_j \right) \le 1 + \frac{2 m}{k} \ln \left(
    \frac{2 m T}{\delta} \right) \eqperiod
  \end{equation*}
\end{lem}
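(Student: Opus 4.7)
The plan is a coupon-collector argument combined with a union bound. The key observation is that for fixed $t$ and $j$, the event $t - s^{(t)}_j \ge a$ means that coordinate $j$ failed to be selected in each of the iterations $t-a+1, t-a+2, \dots, t-1$ (and possibly earlier); since each $S_s$ is chosen uniformly without replacement of size $k$ from $m$ elements, the marginal probability that $j \in S_s$ is exactly $k/m$, and the sets $S_s$ are independent across $s$.

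First I would fix $t$ and $j$ and write
\begin{equation*}
  \probability\left\{ t - s^{(t)}_j \ge a \right\} \le \left(1 - \frac{k}{m}\right)^{a-1} \le \exp\left(-\frac{k(a-1)}{m}\right) \eqcomma
\end{equation*}
where the first inequality is either exact (if $s^{(t)}_j \ge 1$, it corresponds to $j \notin S_{t-a+1}, \dots, S_{t-1}$) or, in the degenerate case $s^{(t)}_j = 0$, an upper bound following from the same calculation applied to all iterations since the start. Next I would take a union bound over the $m$ coordinates and $T$ time steps, yielding
\begin{equation*}
  \probability\left\{ \max_{t,j} \left(t - s^{(t)}_j\right) \ge a \right\} \le m T \exp\left(-\frac{k(a-1)}{m}\right) \eqperiod
\end{equation*}

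Finally I would set the right-hand side equal to $\delta$ and invert: solving $mT \exp(-k(a-1)/m) \le \delta$ gives $a \ge 1 + (m/k) \ln(mT/\delta)$, which is strictly stronger than the stated bound $1 + (2m/k)\ln(2mT/\delta)$, so the conclusion follows immediately with room to spare (the slack of a factor of $2$ in both places presumably buys simpler constants when this lemma is chained into the analysis of \lightalg).

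No step here is a serious obstacle; the main thing to be careful about is the bookkeeping in the edge case where $s^{(t)}_j = 0$ (coordinate $j$ has never been touched), and making sure the union bound is over exactly the $mT$ pairs $(t,j)$ we care about rather than over a continuous range. Everything else is a one-line application of $1 - x \le e^{-x}$ and the union bound.
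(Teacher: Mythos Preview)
Your proof is correct and in fact tighter than the paper's: you obtain the bound $1 + (m/k)\ln(mT/\delta)$, which the stated claim weakens by a factor of $2$ in two places.

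The route is genuinely different from the paper's. The paper partitions $\{1,\dots,T\}$ into contiguous chunks of length $n = \lceil (m/k)\ln(2mT/\delta)\rceil$, shows via a union bound over the $m$ indices and the $\lceil T/n\rceil$ chunks that with probability $1-\delta$ every index is hit at least once in every chunk, and then observes that under this event the worst gap is at most $2n-2$ (realized when an index is hit at the very start of one chunk and the very end of the next). Your argument instead bounds $\probability\{t - s^{(t)}_j \ge a\}$ directly as $(1-k/m)^{a-1}$ and union-bounds over all $mT$ pairs $(t,j)$. Your approach is more elementary, avoids the chunking bookkeeping and its off-by-one hazards, and loses less in constants; the paper's chunking buys nothing here, though it is a standard device that can be useful when the per-step events are not independent or when one wants a bound on a quantity that is not naturally indexed by individual time steps.
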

\begin{prf}{coupon}
  This is closely related to the ``coupon collector's
  problem''~\citep{WikipediaCoupon}. We will begin by partitioning time into
  contiguous size-$n$ chunks, with $1,\dots,n$ forming the first chunk,
  $n+1,\dots,2n$ the second, and so on.

  Within each chunk the probability that any particular index was never sampled
  is $((m-k)/m)^n$, so by the union bound the probability that any one of the
  $m$ indices was never sampled is bounded by $m((m-k)/m)^n$:
  \begin{equation*}
    m\left(\frac{m-k}{m}\right)^n
    \le \exp\left( \ln m + n \ln\left(\frac{m-k}{m}\right) \right)
    \le \exp\left( \ln m - \frac{n k}{m} \right) \eqperiod
  \end{equation*}
  Define $n = \lceil (m / k) \ln(2 m T / \delta) \rceil$, so:
  \begin{equation*}
    m\left(\frac{m-k}{m}\right)^n
    \le \exp\left( \ln m - \ln\left( \frac{2 m T}{\delta} \right) \right)
    \le \frac{\delta}{2 T}
    \eqperiod
  \end{equation*}
  This shows that for this choice of $n$, the probability of there existing an
  index which is never sampled in some particular batch is bounded by $\delta /
  2 T$.  By the union bound, the probability of \emph{any} of $\lceil T / n
  \rceil$ batches containing an index which is never sampled is bounded by
  $(\delta / 2 T) \lceil T / n \rceil \le (\delta / 2 n) + (\delta / 2 T) \le
  \delta$.

  If every index is sampled within every batch, then over the first $n \lceil T
  / n \rceil \ge T$ steps, the most steps which could elapse over which a
  particular index is not sampled is $2n-2$ (if the index is sampled on the
  first step of one chunk, and the last step of the next chunk), which implies
  the claimed result.
\end{prf}

\medskip

We now combine this bound with \corref{mirror-high-probability} and make
appropriate choices of the two {\dgf}s to yield a bound on the \lightalg
convergence rate:

\medskip
\begin{lem}{light-suboptimality}
  Suppose that the conditions of \lemref{projection} apply, with $g(w) =
  \max_i(g_i(w))$. Define $\wdiameter \ge \sup_{w,w'\in\mathcal{W}} \max\{1,
  \norm{w - w'}_2\}$ as a bound on the diameter of $\mathcal{W}$ (notice that
  we also choose $\wdiameter$ to be at least $1$), $\fgradbound \ge
  \norm{\fgrad^{(t)}}_2$ and $\ggradbound \ge \norm{\subgrad \max(0,
  g_i(w))}_2$ as uniform upper bounds on the (stochastic) gradient magnitudes
  of $f$ and the $g_i$s, respectively, for all $i \in \{1,\dots,m\}$.
  We also assume that all $g_i$s are $\glipschitz$-Lipschitz \wrt
  $\norm{\cdot}_2$, \ie $\abs{g_i(w) - g_i(w')} \le \glipschitz \norm{w -
  w'}_2$ for all $w,w'\in\mathcal{W}$.

  Define:
  \begin{equation*}
    k = \left\lceil \frac{m \left(1 + \ln
    m\right)^{\nicefrac{3}{4}} \sqrt{1 + \ln\frac{1}{\delta}} \sqrt{1 + \ln
    T}}{T^{\nicefrac{1}{4}}} \right\rceil
    \eqperiod
  \end{equation*}
  If $k \le m$ and we optimize \eqref{constrained-problem} using \algref{light}
  (\lightname), basing the stochastic gradients \wrt $p$ on $k$ constraints at
  each iteration, and using the step size:
  \begin{equation*}
    \eta = \frac{\sqrt{1 + \ln m} \wdiameter}{\left( \fgradbound + \gamma
    \ggradbound + \gamma \glipschitz \wdiameter\right) \sqrt{T}}
    \eqcomma
  \end{equation*}
  then it holds with probability $1 - \delta$ that:
  \begin{equation*}
    f\left(\Pi_g\left( \bar{w} \right)\right) - f\left(w^*\right) \le
    \objective\left( \bar{w} \right) - \objective\left(w^*\right) \le
    \lightbound \eqcomma
    \;\;\;\;\;\; \mbox{and} \;\;\;\;\;\;
    \norm{\bar{w} - \Pi_g\left( \bar{w} \right)}_2 \le
    \frac{\lightbound}{\gamma \rho - \flipschitz} \eqcomma
  \end{equation*}
  where $w^* \in \{w \in \mathcal{W} : \forall i . g_i(w) \le 0\}$ is an
  arbitrary constraint-satisfying reference vector, and:
  \begin{equation*}
    \lightbound \le 67 \sqrt{1 + \ln m} \wdiameter \left(\fgradbound + \gamma
    \ggradbound + \gamma \glipschitz \wdiameter\right) \sqrt{1 +
    \ln\frac{1}{\delta}} \sqrt{\frac{1}{T}}
    \eqperiod
  \end{equation*}
  If $k > m$, then we should fall-back to using \fullalg, in which case the
  result of \lemref{full-suboptimality} will apply.
\end{lem}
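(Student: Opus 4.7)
The plan is to apply \corref{mirror-saddle} to the saddle-point relaxation \eqref{relaxed-problem} of $\relaxedobjective$, treating $w$ as the primal variable and $p$ as the dual variable. The primal \dgf will be the squared Euclidean norm $\Psi_w(w) = \norm{w}_2^2/2$ (yielding the SGD $w$-updates in \lightalg), and the dual \dgf the negative Shannon entropy $\Psi_p(p) = \sum_i p_i \ln p_i$ (yielding the multiplicative $p$-updates). Under these choices, $\wnorm{\cdot} = \wdualnorm{\cdot} = \norm{\cdot}_2$ while $\pnorm{\cdot} = \norm{\cdot}_1$ with dual $\pdualnorm{\cdot} = \norm{\cdot}_\infty$. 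Standard facts then give $\bregman_{\Psi_w}(w^* \mid w^{(1)}) \le \wdiameter^2/2$ with $\wnormradiusstar \le \wdiameter$, and (because $p^{(1)}$ is uniform) $\bregman_{\Psi_p}(p^* \mid p^{(1)}) \le \ln m$ with $\pnormradiusstar \le 1$; crucially, the relevant gradient norm for the $p$-side is $\ell^\infty$, which is what will allow $m$-independent bounds on the dominant term.

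I then need to bound the dual norms of the stochastic gradients and their conditional deviations. For $\wgrad^{(t)}$, $\wdualnorm{\wgrad^{(t)}} \le \fgradbound + \gamma \ggradbound$ deterministically, and since only the $f$-component of $\relaxedobjective$ is stochastic (given $p^{(t)}$ and $i^{(t)}$), we get $\sigma_w \le 2 \fgradbound$ with probability $1$. For $\pgrad^{(t)}$, its conditional expectation given $\filtration_{t-1}$ equals $\gamma \sum_j e_j \max\{0, g_j(w^{(t)})\}$ (the true $p$-gradient of $\relaxedobjective$, since $\relaxedobjective$ is linear in $p$). Because each $g_j$ is $\glipschitz$-Lipschitz and $g_j(w^*) \le 0$, this has $\ell^\infty$-norm at most $\gamma \glipschitz \wdiameter$, independent of $m$ --- this is the entire payoff of using a multiplicative update.

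The key technical step is controlling the $\ell^\infty$-deviation $\pgrad^{(t)} - \expectation[\pgrad^{(t)} \mid \filtration_{t-1}]$. Inspecting \eqref{johnson-zhang}, this deviation is entirely driven by how stale the stored estimates $\memgrad^{(t)}_j$ are, so I would invoke \lemref{coupon} (with $\delta$ replaced by $\delta/2$) to conclude that, with probability $1 - \delta/2$, no coordinate of $\memgrad$ goes un-updated for more than $N = 1 + (2m/k)\ln(4mT/\delta)$ iterations throughout the run. Since each $w$-update has norm at most $\eta (\fgradbound + \gamma \ggradbound)$ and each $\max\{0, g_j(\cdot)\}$ is $\glipschitz$-Lipschitz, this gives the uniform bound $\abs{\memgrad^{(t)}_j - \max\{0, g_j(w^{(t)})\}} \le \glipschitz \eta (\fgradbound + \gamma \ggradbound) N$, and therefore $\pdualnorm{\pgrad^{(t)}} \le \gamma \glipschitz \wdiameter + (\gamma m/k) \glipschitz \eta (\fgradbound + \gamma \ggradbound) N$ with a $\sigma_p$ of the same order. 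The $m/k$ factor is exactly what the minibatch size $k$ is chosen to tame.

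With all ingredients in place, I would substitute into \corref{mirror-saddle} with the stated $\eta$, and then convert the resulting saddle-point suboptimality into a bound on $\objective(\bar{w}) - \objective(w^*)$ by (i) choosing $p^*$ concentrated on $\argmax_i \max\{0, g_i(\bar{w})\}$ so that $\relaxedobjective(\bar{w}, p^*) = \objective(\bar{w})$, (ii) applying Jensen's inequality in $w$ (since $\relaxedobjective(\cdot, p^*)$ is convex) to move the $1/T$ average inside, and (iii) using $\relaxedobjective(w^*, p^{(t)}) \ge f(w^*) = \objective(w^*)$ because $w^*$ is feasible. The projection-distance claim then follows directly from \lemref{projection}. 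The main obstacle --- and the reason for the complicated-looking choice of $k$ --- is verifying that the variance contributions of order $\eta \pdualnorm{\pgrad^{(t)}}^2$ and $\pnormradiusstar \sigma_p / \sqrt{T}$, which inherit the $(m/k)\ln(mT/\delta)$ staleness factor, are absorbed into subdominant terms rather than spoiling the leading $\sqrt{(1 + \ln m)/T}\,\wdiameter (\fgradbound + \gamma \ggradbound + \gamma \glipschitz \wdiameter)$ rate that defines $\lightbound$. Working through this balancing while keeping careful track of the $\sqrt{1 + \ln m}$, $\sqrt{1 + \ln(1/\delta)}$, and $\sqrt{1 + \ln T}$ factors is the bookkeeping core of the argument.
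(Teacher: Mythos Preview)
Your proposal is essentially the paper's proof: the same two {\dgf}s, the same application of \corref{mirror-saddle}, the same use of \lemref{coupon} to control the staleness of $\memgrad$ and hence $\sigma_p$, and the same Jensen-plus-choice-of-$p^*$ step to pass from the saddle bound to $\objective(\bar w)-\objective(w^*)$. One small slip: your claim $\sigma_w \le 2\fgradbound$ conditions on $i^{(t)}$, but $i^{(t)}\sim p^{(t)}$ is part of the step-$t$ randomness and is not $\filtration_{t-1}$-measurable, so the constraint term also contributes and one should take $\sigma_w \le 2(\fgradbound+\gamma\ggradbound)$ as the paper does---this is harmless for the final $\lightbound$, which already carries a $(\fgradbound+\gamma\ggradbound+\gamma\glipschitz\wdiameter)$ factor.
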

\begin{prf}{light-suboptimality}
  We choose $\Psi_w(w) = \norm{w}_2^2 / 2$ and $\Psi_p(p) = \sum_{i=1}^m p_i
  \ln p_i$ to be the squared Euclidean norm divided by $2$ and the negative
  Shannon entropy, respectively, which yields the updates of \algref{light}.
  We assume that the $\fgrad^{(t)}$s are random variables on some probability
  space (depending on the source of the stochastic gradients of $f$), and
  likewise the $i_t$s and $j_t$s on another, so $\filtration_t$ may be taken to
  be the product of the smallest $\sigma$-algebras which make
  $\fgrad^{(1)},\dots,\fgrad^{(t)}$ and $i_1,j_1,\dots,i_t,j_t$ measurable,
  respectively, with conditional expectations being taken \wrt the product
  measure.
  Under the definitions of \corref{mirror-saddle} (taking $\alpha = p$), with
  probability $1 - \delta_{\sigma w} - \delta_{\sigma p} - 2 \delta'$:
  \begin{align*}
    \MoveEqLeft \frac{1}{T} \sum_{t=1}^T \relaxedobjective\left(w^{(t)},
    p^*\right) - \frac{1}{T} \sum_{t=1}^T \relaxedobjective\left(w^*,
    p^{(t)}\right) \\
    \le & \frac{\bregman_{\Psi_w}\left(w^* \tallmid w^{(1)}\right) +
    \bregman_{\Psi_p}\left(p^* \tallmid p^{(1)}\right)}{\eta T} +
    \frac{\eta}{2 T} \sum_{t=1}^T \left( \wdualnorm{\wgrad^{(t)}}^2 +
    \pdualnorm{\pgrad^{(t)}}^2 \right) \\
    & + \frac{\sqrt{2} \left( \wnormradiusstar \sigma_w + \pnormradiusstar
    \sigma_p \right) \sqrt{\ln\frac{1}{\delta'}}}{\sqrt{T}} +
    \frac{4 \left( \wnormradiusstar \sigma_w + \pnormradiusstar \sigma_p
    \right) \ln\frac{1}{\delta'}}{3 T} \eqperiod
  \end{align*}
  As in the proof of \lemref{full-suboptimality}, $\Psi_w$ is $1$-strongly
  convex \wrt the Euclidean norm, so $\wnorm{\cdot} = \wdualnorm{\cdot} =
  \norm{\cdot}_2$, $\bregman_{\Psi_w}(w^* \mid w^{(1)}) \le \wdiameter^2 / 2$
  and $\wnormradiusstar \le \wdiameter$.
  Because $\Psi_p$ is the negative entropy, which is $1$-strongly convex \wrt
  the $1$-norm (this is Pinsker's inequality), $\pnorm{\cdot} = \norm{\cdot}_1$
  and $\pdualnorm{\cdot} = \norm{\cdot}_\infty$, implying that
  $\pnormradiusstar = 1$. Since $p^{(1)}$ is initialized to the uniform
  distribution, $\bregman_{\Psi_p}(p^* \mid p^{(1)}) = D_{KL}(p^* \mid p^{(1)})
  \le \ln m$.

  The stochastic gradient definitions of \algref{light} give that
  $\wdualnorm{\wgrad^{(t)}} \le \fgradbound + \gamma \ggradbound$ and $\sigma_w
  \le 2(\fgradbound + \gamma \ggradbound)$ with probability $1 = 1 -
  \delta_{\sigma w}$ by the triangle inequality, and $\relaxedobjective(w^*,
  p^{(t)}) = f(w^*)$ because $w^*$ satisfies the constraints. All of these
  facts together give that, with probability $1 - \delta_{\sigma p} - \delta'$:
  \begin{align*}
    \MoveEqLeft \frac{1}{T} \sum_{t=1}^T \relaxedobjective\left(w^{(t)},
    p^*\right) - f\left(w^*\right) \\
    \le & \frac{\wdiameter^2 + 2 \ln m}{2 \eta T} +
    \frac{\eta}{2 T} \sum_{t=1}^T \left( \left( \fgradbound + \gamma
    \ggradbound \right)^2 + \norm{\pgrad^{(t)}}_\infty^2 \right) \\
    & + \frac{\sqrt{2} \left( 2 \wdiameter (\fgradbound + \gamma \ggradbound) +
    \sigma_p \right) \sqrt{\ln\frac{1}{\delta'}}}{\sqrt{T}} +
    \frac{4 \left( 2 \wdiameter \left( \fgradbound + \gamma \ggradbound \right) + \sigma_p
    \right) \ln\frac{1}{\delta'}}{3 T} \eqperiod
  \end{align*}
  We now move the average defining $\bar{w}$ inside $\relaxedobjective$ (which
  is convex in its first parameter) by Jensen's inequality, and use the fact
  that there exists a $p^*$ such that $\relaxedobjective(w, p^*) = \objective(w)$ to
  apply \lemref{projection}:
  \begin{align}
    \label{eq:light:intermediate1} \MoveEqLeft \lightbound
    \le \frac{\wdiameter^2 + 2 \ln m}{2 \eta T} +
    \frac{\eta}{2 T} \sum_{t=1}^T \left( \left( \fgradbound + \gamma
    \ggradbound \right)^2 + \norm{\pgrad^{(t)}}_\infty^2 \right) \\
    \notag & + \frac{\sqrt{2} \left( 2 \wdiameter (\fgradbound + \gamma
    \ggradbound) + \sigma_p \right) \sqrt{\ln\frac{1}{\delta'}}}{\sqrt{T}} +
    \frac{4 \left( 2 \wdiameter \left( \fgradbound + \gamma \ggradbound \right) + \sigma_p
    \right) \ln\frac{1}{\delta'}}{3 T} \eqperiod
  \end{align}
  By the triangle inequality and the fact that $(a+b)^2 \le 2a^2 + 2b^2$:
  \begin{align*}
    \norm{\pgrad^{(t)}}_\infty^2
    & \le 2 \norm{\expectation\left[ \pgrad^{(t)} \tallmid \filtration_{t-1}
    \right]}_\infty^2 + 2 \norm{\expectation\left[ \pgrad^{(t)} \tallmid
    \filtration_{t-1} \right] - \pgrad^{(t)}}_\infty^2 \\
    & \le 2 \gamma^2 \glipschitz^2 \wdiameter^2 + 2 \norm{\expectation\left[
    \pgrad^{(t)} \tallmid \filtration_{t-1} \right] - \pgrad^{(t)}}_\infty^2 \\
    & \le 2 \gamma^2 \glipschitz^2 \wdiameter^2 + 2 \sigma_p^2 \eqperiod
  \end{align*}
  Substituting into \eqref{light:intermediate1} and using the fact that $a + b
  \le (\sqrt{a} + \sqrt{b})^2$:
  \begin{align}
    \label{eq:light:intermediate2} \MoveEqLeft \lightbound
    \le \frac{\wdiameter^2 + 2 \ln m}{2 \eta T} +
    \frac{\eta}{2} \left( \fgradbound + \gamma \ggradbound + \sqrt{2} \gamma
    \glipschitz \wdiameter \right)^2 +
    \eta \sigma_p^2 \\
    \notag & + \frac{\sqrt{2} \left( 2 \wdiameter (\fgradbound + \gamma
    \ggradbound) + \sigma_p \right) \sqrt{\ln\frac{1}{\delta'}}}{\sqrt{T}} +
    \frac{4 \left( 2 \wdiameter \left( \fgradbound + \gamma \ggradbound
    \right) + \sigma_p \right) \ln\frac{1}{\delta'}}{3 T} \eqperiod
  \end{align}
  We will now turn our attention to the problem of bounding $\sigma_p$.
  Notice that because we sample \iid $j_t$s uniformly at every iteration, they
  form an instance of the process of \lemref{coupon} with $\memgrad_j^{(t)} =
  \max(0, g_j( w^{(s_j^{(t)})} ))$, showing that with probability $1 -
  \delta_{\sigma p}$:
  \begin{equation}
    \label{eq:light:coupon} \max_{t,j} \left( t - s_j^{(t)} \right) \le
    1 + \frac{2 m}{k} \ln\left( \frac{2 m T}{\delta_{\sigma p}} \right)
    \eqperiod
  \end{equation}
  By the definition of $\pgrad^{(t)}$ (\algref{light}):
  \begin{align*}
    \MoveEqLeft \norm{\expectation\left[ \pgrad^{(t)} \tallmid
    \filtration_{t-1} \right] - \pgrad^{(t)}}_\infty^2 \\
    = & \gamma^2 \norm{ \left( \sum_{j=1}^m e_j \max\left\{0, g_j\left( w^{(t)}
    \right)\right\} - \memgrad^{(t)} \right) - \frac{m}{k} \sum_{j \in S_t} \left( e_{j} \max\left\{0,
    g_{j}\left( w^{(t)} \right)\right\} - e_{j} \memgrad_{j}^{(t)} \right)
    }_\infty^2 \\
    \le & \gamma^2 \left(\frac{m-k}{k}\right)^2 \max_j \left( \max\left\{0, g_j\left(
    w^{(t)} \right)\right\} - \memgrad_j^{(t)} \right)^2 \\
    \le & \gamma^2 \left(\frac{m-k}{k}\right)^2 \glipschitz^2 \norm{ w^{(t)} -
    w^{(s_j^{(t)})} }_2^2 \\
    \le & \gamma^2 \left(\frac{m-k}{k}\right)^2 \glipschitz^2 \eta^2 \left( \fgradbound +
    \gamma \ggradbound \right)^2 \left( t - s_j^{(t)} \right)^2 \\
    \le & \gamma^2 \left(\frac{m-k}{k}\right)^2 \glipschitz^2 \eta^2 \left( \fgradbound +
    \gamma \ggradbound \right)^2 \left( 1 + \frac{2 m}{k} \ln\left( \frac{2
    m T}{\delta_{\sigma p}} \right) \right)^2 \\
    \le & 6 \gamma^2 \left(\frac{m}{k}\right)^4 \glipschitz^2 \eta^2 \left( \fgradbound +
    \gamma \ggradbound \right)^2 \left( 1 + \ln\left( \frac{m T}{\delta_{\sigma p}} \right) \right)^2 \\
  \end{align*}
  where in the second step we used the definition of the $\infty$-norm, in the
  third we used the Lipschitz continuity of the $g_i$s (and hence of their
  positive parts), in the fourth we bounded the distance between two iterates
  with the number of iterations times a bound on the total step size, and in
  the fifth we used \eqref{light:coupon}. This shows that we may define:
  \begin{equation*}
    \sigma_p = \sqrt{6} \gamma \left(\frac{m}{k}\right)^2 \glipschitz \eta \left( \fgradbound + \gamma
    \ggradbound \right) \left( 1 + \ln\left( \frac{m T}{\delta_{\sigma p}}
    \right) \right) \eqcomma
  \end{equation*}
  and it will satisfy the conditions of \corref{mirror-saddle}. Notice that,
  due to the $\eta$ factor, $\sigma_p$ will be \emph{decreasing} in $T$.
  Substituting the definitions of $\eta$ and $\sigma_p$ into
  \eqref{light:intermediate2}, choosing $\delta_{\sigma p} = \delta' = \delta/3$ and using the assumption that $\wdiameter \ge 1$ gives
  that with probability $1 - \delta$:
  \begin{align*}
    \MoveEqLeft \lightbound
    \le 2 \left(1 + \sqrt{2}\right) \sqrt{1 + \ln 3} \sqrt{1 + \ln m} \wdiameter \left(\fgradbound + \gamma \ggradbound + \gamma \glipschitz \wdiameter\right) \sqrt{1 + \ln\frac{1}{\delta}} \left( \frac{1}{\sqrt{T}} \right) \\
    & + \left(2\sqrt{3} + \frac{8}{3}\right) \left(1 + \ln 3\right)^{\nicefrac{3}{2}} \left(\frac{m}{k}\right)^2 \left(1 + \ln m\right)^{\nicefrac{3}{2}} \wdiameter \left(\fgradbound + \gamma \ggradbound\right) \left(1 + \ln\frac{1}{\delta}\right)^{\nicefrac{3}{2}} \left(\frac{1 + \ln T}{T}\right) \\
    & + 2 \left(3 + 2 \sqrt{\frac{2}{3}}\right) \left(1 + \ln 3\right)^2 \left(\frac{m}{k}\right)^4 \left(1 + \ln m\right)^{\nicefrac{7}{2}} \wdiameter \left(\fgradbound + \gamma \ggradbound\right) \left(1 + \ln\frac{1}{\delta}\right)^2 \left(\frac{\left(1 + \ln T\right)^2}{T^{\nicefrac{3}{2}}}\right)
    \eqperiod
  \end{align*}
  Rounding up the constant terms:
  \begin{align*}
    \MoveEqLeft \lightbound
    \le 7 \sqrt{1 + \ln m} \wdiameter \left(\fgradbound + \gamma \ggradbound + \gamma \glipschitz \wdiameter\right) \sqrt{1 + \ln\frac{1}{\delta}} \left( \frac{1}{\sqrt{T}} \right) \\
    & + 19 \left(\frac{m}{k}\right)^2 \left(1 + \ln m\right)^{\nicefrac{3}{2}} \wdiameter \left(\fgradbound + \gamma \ggradbound\right) \left(1 + \ln\frac{1}{\delta}\right)^{\nicefrac{3}{2}} \left(\frac{1 + \ln T}{T}\right) \\
    & + 41 \left(\frac{m}{k}\right)^4 \left(1 + \ln m\right)^{\nicefrac{7}{2}} \wdiameter \left(\fgradbound + \gamma \ggradbound\right) \left(1 + \ln\frac{1}{\delta}\right)^2 \left(\frac{\left(1 + \ln T\right)^2}{T^{\nicefrac{3}{2}}}\right)
    \eqperiod
  \end{align*}
  Substituting the definition of $k$, simplifying and bounding yields the claimed result.
\end{prf}

\medskip

In terms of the number of iterations required to achieve some desired level of
suboptimality, this bound on $\lightbound$ and the bound of
\lemref{full-suboptimality} on $\fullbound$ may be combined to yield the
following:

\medskip

\section{Analysis of \midalg}\label{sec:mid}

We now move on to the analysis of our \lightalg variant for $\lambda$-strongly
convex objectives, \algref{mid} (\midalg). While we were able to prove a
high-probability bound for \lightalg, we were unable to do so for \midalg,
because the extra terms resulting from the use of a Bernstein-type martingale
inequality were too large (since the other terms shrank as a result of the
strong convexity assumption). Instead, we give an in-expectation result, and
leave the proof of a corresponding high-probability bound to future work.

Our first result is an analogue of
\lemrefs{full-suboptimality}{light-suboptimality}, and bounds the suboptimality
achieved by \midalg as a function of the iteration counts $T_1$ and $T_2$ of
the two phases:

\medskip
\begin{lem}{mid-suboptimality}
  Suppose that the conditions of \lemref{projection} apply, with $g(w) =
  \max_i(g_i(w))$. Define $\fgradbound \ge \norm{\fgrad^{(t)}}_2$ and
  $\ggradbound \ge \norm{\subgrad \max(0, g_i(w))}_2$ as uniform upper bounds
  on the (stochastic) gradient magnitudes of $f$ and the $g_i$s, respectively,
  for all $i \in \{1,\dots,m\}$.
  We also assume that $f$ is $\lambda$-strongly convex, and that all $g_i$s are
  $\glipschitz$-Lipschitz \wrt $\norm{\cdot}_2$, \ie $\abs{g_i(w) - g_i(w')}
  \le \glipschitz \norm{w - w'}_2$ for all $w,w'\in\mathcal{W}$.

  If we optimize \eqref{constrained-problem} using \algref{mid} (\midname) with
  the $p$-update step size $\eta = \lambda / 2 \gamma^2 \glipschitz^2$, then:
  \begin{align*}
    \MoveEqLeft \expectation\left[ \norm{\Pi_g(\bar{w}) - w^*}_2^2 \right] \le
    \expectation\left[ \norm{\bar{w} - w^*}_2^2 \right] \\
    & \le \frac{2 \left(\fgradbound + \gamma \ggradbound\right)^2 \left(2 + \ln
    T_1 + \ln T_2\right) + 8 \gamma^2 \glipschitz^2 \ln m}{\lambda^2 T_2} +
    \frac{3 m^4 \left(1 + \ln m\right)^2 \left(\fgradbound + \gamma
    \ggradbound\right)^2}{\lambda^2 T_1^2}
    \eqcomma
  \end{align*}
  where $w^* = \argmin_{\{w \in \mathcal{W} : \forall i . g_i(w) \le 0\}} f(w)$
  is the \emph{optimal} constraint-satisfying reference vector.
\end{lem}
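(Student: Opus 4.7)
I would analyze the two phases of \algref{mid} separately and then combine them.

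\textbf{Phase 1 (warm start).} First, I would apply \corref{sgd-in-expectation} to the first-phase SGD run on $h(w) = f(w) + \gamma\max\{0,g(w)\}$. Since $f$ is $\lambda$-strongly convex and the hinge term is convex, $h$ is $\lambda$-strongly convex, and by \lemref{projection} its minimizer over $\mathcal{W}$ coincides with $w^*$. Stochastic subgradients have norm at most $\fgradbound+\gamma\ggradbound$, and the step sizes $1/(\lambda t)$ correspond to $T_0 = 0$. Moving the average inside $h$ by Jensen and then applying strong convexity will yield the warm-start bound
\[
\expectation\bigl[\|w^{(T_1+1)} - w^*\|_2^2\bigr] \le \frac{(\fgradbound+\gamma\ggradbound)^2(1+\ln T_1)}{\lambda^2 T_1}.
\]

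\textbf{Phase 2 (saddle point).} Next I would apply \corref{sgd-saddle} to \eqref{relaxed-problem} with $\alpha = p$, $\Psi_p$ the negative Shannon entropy (1-strongly convex w.r.t.\ $\|\cdot\|_1$ by Pinsker), primal step sizes $1/(\lambda(T_1+s))$ (so $T_0 = T_1$), and dual step size $\eta = \lambda/(2\gamma^2\glipschitz^2)$. Since $p^{(T_1+1)}$ is uniform, $\bregman_{\Psi_p}(p^* \mid p^{(T_1+1)}) \le \ln m$ for any $p^* \in \Delta^m$, and $\|\wgrad^{(t)}\|_2 \le \fgradbound+\gamma\ggradbound$. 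The pivotal choice is to instantiate the saddle bound at $p^* = p^{**}$, the \emph{fixed} dual saddle-point variable for which $w^*$ minimizes $\relaxedobjective(\cdot,p^{**})$ on $\mathcal{W}$ (existence from KKT, given \lemref{projection}). Then $\relaxedobjective(w^*,p^{**}) = f(w^*)$ by feasibility, and strong convexity plus the first-order optimality condition give $\relaxedobjective(w,p^{**}) - \relaxedobjective(w^*,p^{**}) \ge (\lambda/2)\|w-w^*\|_2^2$ for every $w\in\mathcal{W}$, which avoids the usual pitfall of needing an adaptive $p^*$ inside an expectation. Substituting the phase-1 warm start into the initial-condition term $(\lambda T_1/2T_2)\|w^{(T_1+1)}-w^*\|^2$ yields exactly the $(\fgradbound+\gamma\ggradbound)^2(2+\ln T_1+\ln T_2)/(\lambda T_2)$ and $\gamma^2\glipschitz^2\ln m/(\lambda T_2)$ scales that become the first two summands of the claimed bound after multiplication by $2/\lambda$.

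\textbf{Controlling $\expectation[\|\pgrad\|_\infty^2]$ and closing the loop.} The main obstacle is controlling the dual-gradient variance term $\frac{\eta}{2T_2}\sum\expectation[\|\pgrad^{(T_1+s)}\|_\infty^2]$. I would split $\pgrad^{(t)}$ into its conditional mean and a mean-zero deviation. The mean obeys $\|\expectation[\pgrad^{(t)}\mid\filtration_{t-1}]\|_\infty = \gamma\max_i\max\{0,g_i(w^{(t)})\} \le \gamma\glipschitz\|w^{(t)}-w^*\|_2$ by Lipschitz continuity of the positive parts and feasibility of $w^*$. For the deviation, \lemref{coupon} with $k=1$ bounds $\max_{t,j}(t-s_j^{(t)}) = \tilde O(m\ln m)$; integrating the high-probability tail into a second-moment bound and combining with $\eta_{T_1+s}\le 1/(\lambda T_1)$ and the Lipschitz continuity of the $g_i$s yields $\expectation[\|\pgrad^{(t)} - \expectation[\pgrad^{(t)}\mid\filtration_{t-1}]\|_\infty^2] = \tilde O(\gamma^2 m^4\glipschitz^2(\ln m)^2(\fgradbound+\gamma\ggradbound)^2/(\lambda^2 T_1^2))$, which matches the stated $m^4/T_1^2$ term after multiplication by $\eta/2$. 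With $\eta = \lambda/(2\gamma^2\glipschitz^2)$, the mean part of the variance term takes the form $O(\lambda/T_2)\sum\expectation[\|w^{(T_1+s)}-w^*\|^2]$ (via $(a+b)^2 \le 2a^2+2b^2$), while the strong-convexity lower bound on $\relaxedobjective(\cdot,p^{**})$ gives the \emph{same} sum on the LHS of the saddle inequality; rearranging absorbs it and produces a bound on $\frac{1}{T_2}\sum\expectation[\|w^{(T_1+s)}-w^*\|_2^2]$ matching the statement. Finally, the Jensen inequality $\|\bar w - w^*\|_2^2 \le \frac{1}{T_2}\sum\|w^{(T_1+s)}-w^*\|_2^2$ together with non-expansiveness of $\Pi_g$ (since $w^*$ is feasible) delivers the full claim.
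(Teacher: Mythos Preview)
Your plan follows the paper's proof closely: phase~1 via \corref{sgd-in-expectation} on $h$, phase~2 via \corref{sgd-saddle} with the negative-entropy \dgf, absorption of the ``mean'' part of $\pdualnorm{\pgrad}^2$ back into the left-hand side using strong convexity at the saddle pair $(w^*,p^{**})$, a coupon-collector argument for the ``deviation'' part, and finally non-expansiveness of $\Pi_g$. The only structural difference is that you propose to bound the deviation second moment by integrating the tail of \lemref{coupon}, whereas the paper uses the closed-form mean and variance of the coupon-collector hitting time directly; both give the same $m^4(1+\ln m)^2/T_1^2$ scale.

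There is one constant-level issue you should fix. Your splitting $\pdualnorm{\pgrad}^2 \le 2\pdualnorm{\expectation[\pgrad\mid\filtration]}^2 + 2\pdualnorm{\pgrad-\expectation[\pgrad\mid\filtration]}^2$ carries a factor $2$ on the mean-squared piece. After bounding $\pdualnorm{\expectation[\pgrad\mid\filtration]}^2 \le \gamma^2\glipschitz^2\norm{w^{(t)}-w^*}_2^2 \le \tfrac{2\gamma^2\glipschitz^2}{\lambda}\bigl(\relaxedobjective(w^{(t)},p^{**})-f(w^*)\bigr)$ and multiplying by $\eta/2$, the coefficient on the left-hand quantity becomes $\tfrac{2\eta\gamma^2\glipschitz^2}{\lambda}$, which with the stated $\eta=\lambda/(2\gamma^2\glipschitz^2)$ equals $1$: the absorption leaves $0$ on the left rather than a positive multiple, and you cannot conclude anything about $\sum_t\expectation[\norm{w^{(t)}-w^*}^2]$. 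The paper avoids this by decomposing $\expectation[\pdualnorm{\pgrad}^2]$ as $(\expectation[\pdualnorm{\pgrad}])^2+\variance(\pdualnorm{\pgrad})$, which puts coefficient $1$ (not $2$) on the mean-squared part and yields a leftover factor of $1/2$ after absorption. You should either adopt that decomposition, or run your argument with a smaller step size (e.g.\ $\eta=\lambda/(4\gamma^2\glipschitz^2)$), accepting a changed constant on the $\ln m$ term relative to the statement.
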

\begin{prf}{mid-suboptimality}
  As in the proof of \lemref{full-suboptimality}, the first phase of
  \algref{mid} is nothing but (strongly convex) SGD on the overall objective
  function $h$, so by \corref{sgd-in-expectation}:
  \begin{equation*} \frac{1}{T_1}\sum_{t=1}^{T_1} \expectation\left[
  \objective\left( w^{(t)} \right) - \objective\left(w^{*}\right) \right] \le
  \frac{\wgradbound^2 \left(1 + \ln T_1\right)}{2\lambda T_1} \eqcomma
  \end{equation*}
  so by Jensen's inequality:
  \begin{equation} \label{eq:mid:intermediate1} \expectation\left[
  \objective\left( w^{(T_1 + 1)} \right) - \objective\left(w^{*}\right) \right]
  \le \frac{\wgradbound^2 \left(1 + \ln T_1\right)}{2\lambda T_1} \eqperiod
  \end{equation}
  For the second phase, as in the proof of \lemref{light-suboptimality}, we
  choose $\Psi_p(p) = \sum_{i=1}^m p_i \ln p_i$ to be negative Shannon entropy,
  which yields the second-phase updates of \algref{mid}. By
  \corref{sgd-saddle}:
  \begin{align*}
    \MoveEqLeft \frac{1}{T_2}\sum_{t=T_1 + 1}^{T_2} \expectation\left[
    \relaxedobjective\left( w^{(t)}, p^{*} \right) - \relaxedobjective\left(
    w^{*}, p^{(t)} \right) \right] \\
    \le & \frac{\wgradbound^2 \left(1 + \ln T\right)}{2\lambda T_2} +
    \frac{\lambda T_1}{2 T_2} \norm{w^{(T_1 + 1)} - w^{*}}_2^2 +
    \frac{\bregman_{\Psi_p}\left( p^* \tallmid p^{(T_1 + 1)} \right)}{\eta T_2}
    + \frac{\eta}{2 T_2} \sum_{t=T_1 + 1}^{T_2} \expectation\left[
    \pdualnorm{\pgrad^{(t)}}^2 \right] \eqperiod
  \end{align*}
  As before, $\pnorm{\cdot} = \norm{\cdot}_1$, $\pdualnorm{\cdot} =
  \norm{\cdot}_\infty$, and $\bregman_{\Psi_p}(p^* \mid p^{(T_1 + 1)}) =
  D_{KL}(p^* \mid p^{(T_1 + 1)}) \le \ln m$. Hence:
  \begin{align*}
    \MoveEqLeft \frac{1}{T_2}\sum_{t=T_1 + 1}^{T_2} \expectation\left[
    \relaxedobjective\left( w^{(t)}, p^{*} \right) - \relaxedobjective\left(
    w^{*}, p^{(t)} \right) \right] \\
    \le & \frac{\wgradbound^2 \left(1 + \ln T_2\right)}{2\lambda T_2} +
    \frac{\lambda T_1}{2 T_2} \norm{w^{(T_1 + 1)} - w^{*}}_2^2 +
    \frac{\ln m}{\eta T_2} + \frac{\eta}{2 T_2} \sum_{t=T_1 + 1}^{T_2}
    \expectation\left[ \norm{\pgrad^{(t)}}_{\infty}^2 \right] \eqperiod
  \end{align*}
  Since $h$ is $\lambda$-strongly convex and $w^*$ is optimal, $\norm{w^{(T_1 +
  1)} - w^{*}}_2^2 \le \frac{2}{\lambda}( \objective( w^{(T_1 + 1)} ) -
  \objective( w^* ) )$. By \eqref{mid:intermediate1}:
  \begin{align*}
    \MoveEqLeft \frac{1}{T_2}\sum_{t=T_1 + 1}^{T_2} \expectation\left[
    \relaxedobjective\left( w^{(t)}, p^{*} \right) - \relaxedobjective\left(
    w^{*}, p^{(t)} \right) \right] \\
    \le & \frac{\wgradbound^2 \left(2 + \ln T_1 + \ln T_2\right)}{2\lambda T_2}
    +
    \frac{\ln m}{\eta T_2} + \frac{\eta}{2 T_2} \sum_{t=T_1 + 1}^{T_2}
    \expectation\left[ \norm{\pgrad^{(t)}}_{\infty}^2 \right] \eqperiod
  \end{align*}
  Since the (uncentered) second moment is equal to the mean plus the variance,
  and using the fact that $\relaxedobjective( w^{*}, p^{(t)} ) = f( w^{*} )$
  since all constraints are satisfied at $w^{*}$:
  \begin{align}
    \label{eq:mid:intermediate2} \MoveEqLeft \frac{1}{T_2}\sum_{t=T_1 +
    1}^{T_2} \expectation\left[ \relaxedobjective\left( w^{(t)}, p^{*} \right)
    \right] - f\left( w^{*} \right) \\
    \notag \le & \frac{\wgradbound^2 \left(2 + \ln T_1 + \ln
    T_2\right)}{2\lambda T_2} +
    \frac{\ln m}{\eta T_2} + \frac{\eta}{2 T_2} \sum_{t=T_1 + 1}^{T_2} \left(
    \expectation\left[ \norm{\pgrad^{(t)}}_{\infty} \right] \right)^2 +
    \frac{\eta \sigma_p^2}{2}
    \eqcomma
  \end{align}
  where $\sigma_p^2$ is the variance of $\norm{\pgrad^{(t)}}_{\infty}$. Next
  observe that:
  \begin{align*}
    \left( \expectation\left[ \norm{\pgrad^{(t)}}_{\infty} \right] \right)^2 =&
    \left( \expectation \left[ \max_{j\in\{1,\dots,m\}} \gamma \max\left\{0,
    g_j\left(w^{(t)}\right)\right\} \right] \right)^2 \\
    \le& \gamma^2 \glipschitz^2 \expectation\left[ \norm{w^{(t)} - w^*}_2^2
    \right] \\
    \le& \frac{2 \gamma^2 \glipschitz^2}{\lambda} \expectation\left[
    \relaxedobjective\left( w^{(t)}, p^{*} \right) - \relaxedobjective\left(
    w^{*}, p^{*} \right) \right] \eqcomma
  \end{align*}
  the first step using the fact that the $g_j$s are $\glipschitz$-Lipschitz and
  Jensen's inequality. For the second step, we choose $p^*$ such that $w^*,
  p^*$ is a minimax optimal pair (recall that $w^*$ is optimal by assumption),
  and use the $\lambda$-strong convexity of $\relaxedobjective$. Substituting
  into \eqref{mid:intermediate2} and using the fact that
  $\relaxedobjective(w^*, p^*) = f(w^*)$:
  \begin{equation*}
    \left( 1 - \frac{\eta \gamma^2 \glipschitz^2}{\lambda} \right) \left(
    \frac{1}{T_2} \sum_{t=T_1 + 1}^{T_2} \expectation\left[
    \relaxedobjective\left( w^{(t)}, p^{*} \right) \right] - f\left( w^{*}
    \right) \right)
    \le \frac{\wgradbound^2 \left(2 + \ln T_1 + \ln T_2\right)}{2\lambda T_2} +
    \frac{\ln m}{\eta T_2} +
    \frac{\eta \sigma_p^2}{2}
    \eqperiod
  \end{equation*}
  Substituting $\eta=\lambda / 2 \gamma^2 \glipschitz^2$ and using Jensen's
  inequality:
  \begin{equation}
    \label{eq:mid:intermediate3} \expectation\left[ \relaxedobjective\left(
    \bar{w}, p^{*} \right) \right] - f\left( w^{*} \right)
    \le \frac{\wgradbound^2 \left(2 + \ln T_1 + \ln T_2\right)}{\lambda T_2} +
    \frac{4 \gamma^2 \glipschitz^2 \ln m}{\lambda T_2} +
    \frac{\lambda \sigma_p^2}{2 \gamma^2 \glipschitz^2}
    \eqperiod
  \end{equation}
  We now follow the proof of \lemref{light-suboptimality} and bound
  $\sigma_p^2$. By the definition of $\pgrad^{(t)}$ (\algref{mid}):
  \begin{align*}
    \sigma_p^2 =& \expectation\left[ \norm{\expectation\left[ \pgrad^{(t)}
    \tallmid \filtration_{t-1} \right] - \pgrad^{(t)}}_\infty^2 \right] \\
    = & \gamma^2 \expectation\left[ \norm{ \left( \sum_{j=1}^m e_j
    \max\left\{0, g_j\left( w^{(t)} \right)\right\} - \memgrad^{(t)} \right) -
    m \left( e_{j_t} \max\left\{0, g_{j_t}\left( w^{(t)} \right)\right\} -
    e_{j_t} \memgrad_{j_t}^{(t)} \right) }_\infty^2 \right] \\
    \le & \gamma^2 \left(m-1\right)^2 \expectation\left[ \max_j \left(
    \max\left\{0, g_j\left( w^{(t)} \right)\right\} - \memgrad_j^{(t)}
    \right)^2 \right] \eqperiod
  \end{align*}
  The indices $j$ are sampled uniformly, so the maximum time $\max_j (t -
  s_j^{(t)})$ since we last sampled the same index is an instance of the coupon
  collector's problem~\citet{WikipediaCoupon}. Because the $g_j$s are
  $\glipschitz$-Lipschitz:
  \begin{align*}
    \sigma_p^2 \le & \gamma^2 \left(m-1\right)^2 \glipschitz^2
    \expectation\left[ \max_j \norm{ w^{(t)} - w^{(s_j^{(t)})} }_2^2 \right] \\
    \le & \frac{\gamma^2 \left(m-1\right)^2 \glipschitz^2
    \wgradbound^2}{\lambda^2 T_1^2} \expectation\left[ \max_j \left( t -
    s_j^{(t)} \right)^2 \right] \\
    \le & \frac{\gamma^2 m^4 \left(1 + \left(\ln m\right)^2 +
    \nicefrac{\pi^2}{6}\right) \glipschitz^2 \wgradbound^2}{\lambda^2 T_1^2} \\
    \le & \frac{3 \gamma^2 m^4 \left(1 + \ln m\right)^2 \glipschitz^2
    \wgradbound^2}{\lambda^2 T_1^2} \eqcomma
  \end{align*}
  the second step because, between iteration $s_j^{(t)}$ and iteration $t$ we
  will perform $t - s_j^{(t)}$ updates of magnitude at most $\wgradbound /
  \lambda T_1$, and the third step because, as an instance of the coupon
  collector's problem, $\max_j (t - s_j^{(t)})$ has expectation $m H_m \le m +
  m\ln m$ ($H_m$ is the $m$th harmonic number) and variance $m^2 \pi^2 / 6$.
  Substituting into \eqref{mid:intermediate3}:
  \begin{equation*}
    \expectation\left[ \relaxedobjective\left( \bar{w}, p^{*} \right) \right] -
    f\left( w^{*} \right)
    \le \frac{\wgradbound^2 \left(2 + \ln T_1 + \ln T_2\right)}{\lambda T_2} +
    \frac{4 \gamma^2 \glipschitz^2 \ln m}{\lambda T_2} +
    \frac{3 m^4 \left(1 + \ln m\right)^2 \wgradbound^2}{2 \lambda T_1^2}
    \eqperiod
  \end{equation*}
  By the $\lambda$-strong convexity of $\relaxedobjective$:
  \begin{equation*}
    \expectation\left[ \norm{\bar{w} - w^*}_2^2 \right]
    \le \frac{2 \wgradbound^2 \left(2 + \ln T_1 + \ln T_2\right)}{\lambda^2
    T_2} +
    \frac{8 \gamma^2 \glipschitz^2 \ln m}{\lambda^2 T_2} +
    \frac{3 m^4 \left(1 + \ln m\right)^2 \wgradbound^2}{\lambda^2 T_1^2}
    \eqperiod
  \end{equation*}
  Using the facts that $\norm{\Pi_g(\bar{w}) - w^*} \le \norm{\bar{w} - w^*}$
  because $w^*$ is feasible, and that $\wgradbound = \fgradbound +
  \gamma\ggradbound$, completes the proof.
\end{prf}

\medskip

We now move on to the main result: a bound on the number of iterations
(equivalently, the number of stochastic loss gradients) and constraint checks
required to achieve $\epsilon$-suboptimality:

\medskip

\label{document:end}

\end{document}